\crefname{equation}{}{}
\Crefname{equation}{}{}
\crefname{definition}{\textbf{definition}}{definitions}
\Crefname{definition}{Definition}{Definitions}
\crefname{assumption}{\textbf{assumption}}{assumptions}
\Crefname{assumption}{Assumption}{Assumptions}
\definecolor{maroon}{RGB}{192,80,77}
\newtheorem{theorem}{Theorem}
\newtheorem{lemma}[theorem]{Lemma}
\newtheorem{corollary}[theorem]{Corollary}
\newtheorem{definition}[theorem]{Definition}
\newtheorem{example}[theorem]{Example}
\newtheorem{assumption}{Assumption}
\newtheorem{remark}[theorem]{Remark}
\newcommand{\argmax}{\mathop{\mathrm{argmax}}}
\def\E{\mathbb{E}}
\def\P{\mathbb{P}}
\def\diag{\mathrm{diag}}
\def\R{\mathbb{R}}
\def\cA{\mathcal{A}}
\def\cD{\mathcal{D}}
\setlist{leftmargin=10mm}
\def\ind{\mathds{1}}
\def\vind{\mathds{1}}
\def\vr{\mathbf{r}}
\def\vv{\mathbf{v}}
\def\ep{\zeta}  
\newcolumntype{L}{>{\centering\arraybackslash}m{3cm}}
\title{Doubly Fair Dynamic Pricing}
\author{Jianyu Xu }
\author{Dan Qiao}
\author{Yu-Xiang Wang }
\affil{Department of Computer Science\\
	University of California, Santa Barbara\\
	\texttt{\{xu\_jy15, danqiao, yuxiangw\}@ucsb.edu} }
\begin{document}

\maketitle

\begin{abstract}
We study the problem of online dynamic pricing with two types of fairness constraints: a \emph{procedural fairness} which requires the \emph{proposed} prices to be equal in expectation among different groups, and a \emph{substantive fairness} which requires the \emph{accepted} prices to be equal in expectation among different groups.  A policy that is simultaneously procedural and substantive fair is referred to as \emph{doubly fair}.  We show that a doubly fair policy must be random to have higher revenue than the best trivial policy that assigns the same price to different groups. In a two-group setting, we propose an online learning algorithm for the 2-group pricing problems that achieves $\tilde{O}(\sqrt{T})$ regret, zero procedural unfairness and $\tilde{O}(\sqrt{T})$ substantive unfairness over $T$ rounds of learning. We also prove two lower bounds showing that these results on regret and unfairness are both information-theoretically optimal up to iterated logarithmic factors. To the best of our knowledge, this is the first dynamic pricing algorithm that learns to price while satisfying two fairness constraints at the same time.


\end{abstract}

\section{Introduction}
\label{sec:introduction}
Pricing problems have been studied since \citet{cournot1897researches}. In a classical pricing problem setting such as \citet{kleinberg2003value, broder2012dynamic, besbes2015surprising}, the seller (referred as ``we'') sells identical products in the following scheme. 

\fbox{\parbox{0.98\textwidth}{Online pricing. For $t=1,2,\ldots, T$:
		\small
		\noindent
		\begin{enumerate}[leftmargin=*, align=left]
			\setlength{\itemsep}{0pt}
			\item The customer valuates the product as $y_t$.
			\item The seller proposes a price $v_t$ concurrently without knowing $y_t$.
			\item The customer makes a decision $\mathbf{1}_t = \mathbf{1}(v_t\leq y_t)$.
			\item The seller receives a reward (revenue) $r_t = v_t\cdot\mathbf{1}_t$.
		\end{enumerate}
		
	}
}

Here $T$ is the time horizon known to the seller in advance\footnote{Here we assume $T$ known for simplicity of notations. In fact, if $T$ is unknown, then we may apply a ``doubling epoch'' trick as \citet{javanmard2019dynamic} and the regret bounds are the same.}, and $y_t$'s are drawn from a fixed distribution independently. The goal is to approach an optimal price that maximizes the expected revenue-price function. In order to make this, we should learn gradually from the binary feedback and improve our knowledge on customers' valuation distribution (or so-called ``demands'' \citep{kleinberg2003value}).

In recent years, with the development of price discrimination and personalized pricing strategies, \emph{fairness} issues on pricing arose social and academic concerns \citep{kaufmann1991fairness, chapuis2012price, richards2016personalized, eyster2021pricing}. Customers are usually not satisfied with price discrimination, which would in turn undermine both their willing to purchase and the sellers' reputation. In the online pricing problem defined above, when we are selling identical items to customers from different \emph{groups} (e.g., divided by gender, race, age, etc.), it can be unfair if we propose a specific optimal price for each group: These optimal prices in different groups are not necessarily the same, and unfairness occurs if different customers are provided or buying the same item with different prices. Inspired by the concept of \emph{procedural and substantive unconscionability}\citep{elfin1988future}, we define a \emph{procedural unfairness} measuring the difference of \emph{proposed prices} between the two groups, and a \emph{substantive unfairness} measuring that of \emph{accepted prices} between the two groups. Given these notions, our goal is to approach the optimal pricing policy that maximizes the expected total revenue with no procedural and substantive unfairness.

{ The concept of procedural fairness has been well established in \citet{cohen2022price} as ``price fairness'', while the concept of the substantive fairness is new to this paper. In fact, both procedural and substantive fairness have significant impacts on customers' experience and social justice.
For instance, these notions help prevent the following two scenarios:
\begin{itemize}
	\item Perspective buyers who are women found that they are offered consistently higher average price than men for the same product.
	\item Women who have bought the product found that they paid a higher average price than men who have bought the product.
\end{itemize}
Therefore, a good pricing strategy has to satisfy both procedural and substantive fairness.
}

However, these constraints are very hard to satisfy even with full knowledge on customers' demands. If we want to fulfill those two sorts of fairness perfectly by proposing deterministic prices for different groups, the only thing we can do is to trivially set the same price in all groups and to maximize the weighted average revenue function by adjusting this uniformly fixed price with existing methods such as \citet{kleinberg2003value}. Consider the following example:
	
\begin{example}\rm
	\label{example:random_policy}
	Customers form two disjoint groups, where 30\% customers are in Group 1 and the rest 70\% are in Group 2.
	\begin{figure*}[!htbp]
		\centering
	\begin{minipage}[h]{0.50\linewidth}
		 For each price in $\{\$0.625, \$0.7, \$1\}$, customers in two groups have different acceptance rates:
		\bigskip
		
			\begin{tabular}{|l|l|l|l|}
				\hline
				\rule{0pt}{2pt}
				Acceptance Rate & \$0.625   & \$0.7     & \$1       \\ \hline
				$G_1$ (30\%)    & $3/5$ & $1/2$ & $1/2$ \\ \hline
				$G_2$ (70\%)    & $4/5$ & $4/5$ & $1/2$ \\ \hline
			\end{tabular}
		\bigskip
		
		The right figure shows the expected revenue functions of prices in each group, where the red dashed line is their weighted average by population.
	\end{minipage}
	\begin{minipage}[h]{0.49\linewidth}
		\centering
		\includegraphics[width=\textwidth]{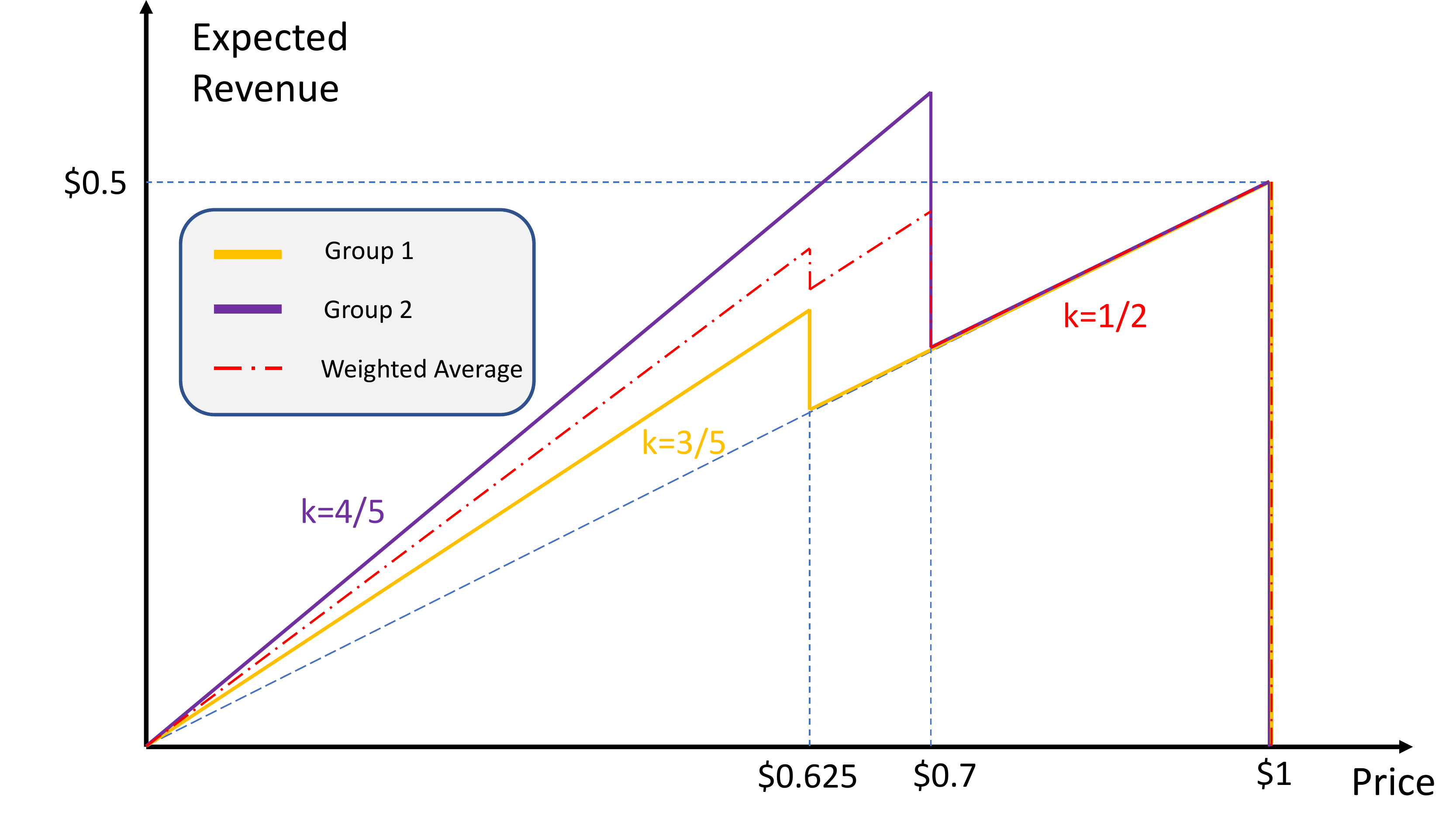}
	\end{minipage}	
	\end{figure*}
\end{example}
In \Cref{example:random_policy}, the only way to guarantee both fairness constraints is to propose the same price for both groups, and the optimal price is \textit{\$1} whose expected revenue is $\mathit{\$0.5}$ as is shown in the figure.  

However, if we instead propose a \emph{random price distribution} to each group and inspect those fairness notions \emph{in expectation}, then there may exist price distributions that satisfy both fairness constraints and achieve higher expected revenue than any fixed-price strategy. Here a price distribution is the distribution over the prices for customers, and the exact price for each customer is \emph{sampled} from this distribution \emph{independently}. This random price sampling process can be implemented by marketing campaigns such as random discounts or randomly-distributed coupons. Again, we consider \Cref{example:random_policy} and the following random policy:

\begin{itemize}
	\item For customers from $G_1$, propose \textit{\$0.625} with probability $\frac{20}{29}$ and \textit{\$1} with probability $\frac{9}{29}$.
	\item For customers from $G_2$, propose \textit{\$0.7} with probability $\frac{25}{29}$ and \textit{\$1} with probability $\frac{4}{29}$.
\end{itemize}
Under this policy, the expected proposed price and the expected accepted price in both groups are $\mathit{\$\frac{43}{58}}$ and $\mathit{\$\frac8{11}}$ respectively. Furthermore, the expected revenue is $\mathit{\$\frac{74}{145}}>\mathit{\$0.5}$, which means that this random policy performs better than the best fixed-price policy. It is worth mentioning that this is exactly the optimal random policy in this specific setting, but the proof of its optimality is highly non-trivial (and we put it in \Cref{appendix:proof_unfairness_lower_bound} as part of the proof of \Cref{theorem:unfairness_lower_bound} ).


In this work, we consider a two-group setting and we denote a \emph{policy} as the tuple of two price distributions over the two groups respectively. Therefore, we can formally define the optimal policy as follows:

\begin{equation}
\label{equ:optimal_policy}
\begin{aligned}
\pi_* =& \argmax_{\pi=(\pi^1, \pi^2)}q\cdot\mathop{\E}\limits_{v_t^1\sim\pi^1, y_t^1\sim\mathbb{D}^1}[v_t^1\cdot\ind(v_t^1\leq y_t^1)] +(1-q)\cdot\mathop{\E}\limits_{v_t^2\sim\pi^2, y_t^2\sim\mathbb{D}^2}[v_t^2\cdot\ind(v_t^2\leq y_t^2)]\\
s.t.&\quad \E_{\pi^1}[v_t^1] = \E_{\pi^2}[v_t^2]\\
&\quad \E_{\pi^1, \mathbb{D}^1}[v_t^1|\ind(v_t^1\leq y_t^1)=1] = \E_{\pi^2, \mathbb{D}^2}[v_t^2|\ind(v_t^2\leq y_t^2)=1]\\
\end{aligned}
\end{equation} 

Here $\pi^1, v_t^1, y_t^1, \mathbb{D}^1$ and $\pi^2, v_t^2, y_t^2, \mathbb{D}^2$ are the proposed price distributions, proposed prices, customer's valuations and valuation distributions of Group 1 and Group 2 respectively, and $q$ is the proportion of Group 1 among all customers. From \Cref{equ:optimal_policy}, the optimal policy under the in-expectation fairness constraints should be random in general\footnote{Notice that a fixed-price policy can also be considered as ``random''.}. However, even we know the exact $\mathbb{D}^1$ and $\mathbb{D}^2$, it is still a very hard problem to get $\pi_*$: Both sides of the second constraint in \Cref{equ:optimal_policy} are conditional expectations (i.e., fractions of expected revenue over expected acceptance rate) and is thus not convex ( and also not quasiconvex). To make it harder, the seller actually has no direct access to customers' demands $\mathbb{D}^1$ and $\mathbb{D}^2$ at the beginning. Therefore, in this work we consider a $T$-round \emph{online} learning and pricing setting, where we could learn these demands from those \emph{Boolean-censored} feedback (i.e., customers' decisions) and improve our pricing policy to approach $\pi_*$ in \Cref{equ:optimal_policy}.

In order to measure the performance of a specific policy, we define a \emph{regret} metric that equals the expected revenue difference between this policy and the optimal policy. We also quantify the procedural and substantive unfairness that equals the absolute difference of expected proposed/accepted prices in two groups. We will establish a more detailed problem setting in \Cref{sec:problem_setup}.  

\paragraph{Summary of Results} Our contributions are threefold:
\label{subsec:main_results}

\begin{itemize}
	\item We design an algorithm, FPA, that achieves an $O(\sqrt{T}d^{\frac32}\log{\frac{d\log T}{\epsilon}})$ cumulative regret with $0$ procedural unfairness and $O(\sqrt{T}d^{\frac32}\log{\frac{d\log T}{\epsilon}})$ substantive unfairness, with probability at least $(1-\epsilon)$. Here $d$ is the total number of prices allowed to be chosen from. These results indicate that our FPA is asymptotically no-regret and fair as $T$ gets large.  
	\item We show that the regret of FPA is optimal with respect to $T$, as it matches $\Omega(\sqrt{T})$ regret lower bound up to $\log\log T$ factors.
	\item We show that the unfairness of FPA is also optimal with respect to $T$ up to $\log\log T$ factors, as it has no procedural unfairness and its substantive unfairness matches the $\Omega(\sqrt{T})$ lower bound for any algorithm achieving an optimal $O(\sqrt{T})$ regret.
\end{itemize}

To the best of our knowledge, we are the first to study a pricing problem with multiple fairness constraints, where the optimal pricing policy is necessary to be random. We also develop an algorithm that is able to approach the best random pricing policy with high probability and at the least cost of revenue and fairness. 

\paragraph{Technical Novelty.} 
Our algorithm is a ``conservative policy-elimination''-based strategy that runs in epochs with doubling batch sizes as in \citet{auer2002finite}. We cannot directly apply the action-elimination algorithm for multi-armed bandits as in \citet{cesa2013online}, because the policy space is an infinite set and we cannot afford to try each one out. The fairness constraints further complicate things. Our solution is to work out just a few representative policies that are ``good-and-exploratory'', which can be used to evaluate the revenue and fairness of all other policies, then eliminate those that are unfair or have suboptimal revenue. Since we do not have direct access to the demand function, the estimated fairness constraints are changing over epochs due to estimation error, it is non-trivial to keep the target optimal policy inside our ``good policy set'' during iterations. We settle this issue by setting the criteria of a ``good policy'' conservatively.

Our lower bound is new too and it involves techniques that could be of independent interest to the machine learning theory community. Notice that it is possible to have a perfectly fair algorithm by trivially proposing the same fixed price for both groups. It is highly non-trivial to show the unfairness lower bound within the family of regret-optimal algorithms. We present our result in \Cref{subsec:unfairness_lower_bound} by establishing two similar problem settings that any algorithm cannot distinguish them efficiently and showing that a mismatch would cause a compatible amount of regret and substantive unfairness.


\section{Related Works}
\label{sec:related_works}
Here we discuss some literature that are closely related to this work. For a broader discussion, please refer to \Cref{appendix:more_related_works}.

\paragraph{Dynamic Pricing} Single product dynamic pricing problem has been well-studied through \citet{kleinberg2003value, besbes2009dynamic, wang2014close, chen2019nonparametric, wang2021multimodal}. The crux is to learn and approach the optimal of a revenue curve from Boolean-censored feedback. In specific, \citet{kleinberg2003value}  proves $\Theta(\log\log{T}), \Theta(\sqrt{T})$ and $\Theta(T^{\frac23})$ minimax regret bounds under noise-free, infinitely smooth and stochastic/adversarial valuation assumptions, sequentially. \citet{wang2021multimodal} further shows a $\Theta(T^{\frac{K+1}{2K+1}})$ minimax regret bound for $K^{\text{th}}$-smooth revenue functions. In all these works, the decision space is continuous. In our problem setting, we fix the proposed prices to be chosen in a fixed set of $d$ prices, and show a bandit-style $\Omega(\sqrt{dT})$ regret lower bound with a similar method to \citet{auer2002nonstochastic}.

\paragraph{Fairness in Machine Learning} Fairness is a long-existing topic that has been extensively studied. In the machine learning community, fairness is defined from mainly two perspectives: the \emph{group fairness} and the \emph{individual fairness}. In a classification problem, for instance, \citep{dwork2012fairness} defines these two notions as follows: (1) A group fairness requires different groups to have identical result distributions in statistics, which further includes the concepts of ``demographic parity'' (predictions independent to group attributes) and ``equalized odds'' (predictions independent to group attributes \emph{conditioning on} the true labels). In \citet{agarwal2018reductions}, these group fairness are reduced to linear constraints. The two fairness definitions we make in this work, the \emph{procedural fairness} and the \emph{substantive fairness}, belong to group fairness. (2) An individual fairness \citep{hardt2016equality} requires the difference of predictions on two individuals to be upper bounded by a distance metric of their intrinsic features. The notion ``time fairness'' is often considered as individual fairness as well. { We provide a more detailed discussion on the line of work that address fairness concerns or stochastic constraints with online learning techniques in \Cref{appendix:more_related_works}.}

\paragraph{Fairness in Pricing} Recently there are many works contributing to pricing fairness problems \citep{kaufmann1991fairness, frey1993fairness, chapuis2012price, richards2016personalized, priester2020special, eyster2021pricing,yang2022fairness}. As is stated in \citet{cohen2022price}, in a pricing problem with fairness concerns, the concept of fairness in existing works is modeled either as a utility or budget that trades-off the revenue or as a hard constraint that prevent us from taking the best action directly.\citet{cohen2022price} chooses the second model and defines four different types of fairness in pricing: price fairness, demand fairness, surplus fairness and no-purchase valuation fairness, each of which indicates the difference of prices, the acceptance rate, the surplus (i.e., (valuation $-$ price) if bought and 0 otherwise) and the average valuation of not-purchasing customers in two groups is bounded, sequentially. They show that it is impossible to achieve any pair of different fairness notions simultaneously (with deterministic prices). In fact, this can be satisfied if they allow random pricing policies. \citet{maestre2018reinforcement} indeed builds their fairness definition upon random prices by introducing a ``Jain's Index'', which indicates the homogeneity of price distributions among different groups (i.e., our procedural fairness notion). They develop a reinforcement-learning-based algorithm to provide homogeneous prices, with no theoretic guarantees.  

\citet{cohen2021dynamic} and \citet{chen2021fairness} study the online-learning-fashion pricing problem as we do.  \citet{cohen2021dynamic} considers both group (price) fairness and individual (time) fairness, and their algorithm FaPU solves this problem with sublinear regret while guaranteeing fairness. They further study the pricing problem with demand fairness that are unknown and needs learning. In this setting, they propose another FaPD algorithm that achieves the optimal $\tilde O(\sqrt{T})$ regret and guarantees the demand fairness ``almost surely'', i.e., upper bounded by $\delta\cdot T$ as a budget. \citet{chen2021fairness} considers two different sorts of fairness constraints: (1) Price fairness constraints (as in \citet{cohen2022price}) are enforced; (2)Price fairness constraints are generally defined (and maybe not accessible), where they adopt ``soft fairness constraints'' by adding the fairness violation to the regret with certain weights. In both cases, they achieve $\tilde{O}(T^{\frac45})$ regrets. These learning-based fairness requirements are quite similar to our problem setting, but in our setting the fairness constraints are non-convex (while theirs are linear) and are also optimized to corresponding information-theoretic lower bounds without undermining the optimal regret.  

\section{Problem Setup}
\label{sec:problem_setup}
In this section, we describe the problem setting of online pricing, introduce new fairness definitions and set the goal of our algorithm design.

\paragraph{Problem Description.}
\label{paragraph:symbols_and_notations}
We start with the online pricing process. 
The whole selling session involves customers from two groups ($G_1$ and $G_2$) and lasts for $T$ rounds. Prices are only allowed to be chosen from a \emph{known} and \emph{fixed} set of $d$ prices: $\mathbf{V} = \{v_1, v_2, \ldots, v_d\}$, where $0<v_1<v_2<\ldots<v_d\leq1$. 
Denote $\Delta^d = \{x\in\R_+^d, \|x\|_1 = 1\}$ as the probabilistic simplex. 
At each time $t=1,2,\ldots, T$, we propose a pricing policy $\pi = (\pi^1, \pi^2)$ consisting of two probabilistic distributions $\pi^1, \pi^2\in\Delta^d$ over all $d$ prices. A customer then arrives with an observable group attribution $G_e$ ($e\in\{1,2\}$), and we propose a price by sampling a $v_t^e$ from $\mathbf{V}$ according to distribution $\pi^e$. At the same time, the customer generates a valuation $y_t^e$ \emph{in secret}, where $y_t^e$ is sampled independently and identically from some fixed unknown distribution $\mathbb{D}_e$. Afterward, we observe a feedback $\mathbf{1}_t^e = \mathbf{1}(v_t^e\leq y_t^e)$ and receive a reward(revenue) $r_t^e = \mathbf{1}_t^e\cdot v_t^e$.

\paragraph{Key Quantities.}

Here we define a few quantities and functions that is necessary to formulate the problem. 
Denote $\vv:=[v_1, v_2, \ldots, v_d]^{\top}$, $[d]:=\{1,2,\ldots, d\}$ and $\vind:=[1,1,\ldots, 1]^{\top}\in\R^d$ for simplicity. Denote $F_e(i):=\Pr_{\mathbb{D}_e}[y_t^e\geq v_i], e=1,2, i\in[d]$ as the probability of price $v_i$ being accepted in $G_e$, and we know that $F_e(1)\geq F_e(2)\geq\ldots\geq F_e(d)$. Notice that all $F_e(i)$'s are \emph{unknown} to us. Define a matrix $F_e := \diag(F_e(1), F_e(2), \ldots, F_e(d))$. 

As a result, for a customer from $G_e$ ($e\in\{1,2\}$), we know that
\begin{itemize}
	\item The expected proposed price is $\vv^{\top}\pi^e$.
	\item The expected reward(revenue) is $\vv^{\top}F_e\pi^e$.
	\item The expected acceptance rate is $\vind^{\top}F_e\pi^e$.
	\item The expected accepted price is $\frac{\vv^{\top}F_e\pi^e}{\vind^{\top}F_e\pi^e}$. 
\end{itemize}
Denote the proportion of $G_1$ in all potential customers as $q$ ($0<q<1$) which is fixed and \emph{known} to us, and we assume that every customer is chosen from all potential customers uniformly at random. As a consequence, we can define the expected revenue of a policy $\pi$.
\begin{definition}[Expected Revenue] For any pricing policy $\pi=(\pi^1, \pi^2)\in\Pi$, define its expected revenue (given $F_1$ and $F_2$) as the weighted average of the expected rewards of $G_1$ and $G_2$.
	\begin{equation}
	\label{equ:expected_revenue}
	\begin{aligned}
	R(\pi; F_1, F_2):=&\Pr[\text{Customer is from } G_1]\cdot\E[r_t^1]+\Pr[\text{Customer is from } G_2]\cdot\E[r_t^2]\\
	=& q\cdot \vv^{\top}F_1\pi^1 + (1-q)\cdot \vv^{\top}F_2\pi^2
	\end{aligned}
	\end{equation}
	\label{def:expected_revenue}
\end{definition}
\vspace{-3mm}
Also, we can define the two different unfairness notions based on these results above.
\begin{definition}[Procedural Unfairness] For any pricing policy $\pi\in\Pi$, define its procedural unfairness as the absolute difference between the expected \emph{proposed} prices of two groups.
	\begin{equation}
	\label{equ:procedural_unfairness}
	U(\pi):=|\vv^{\top}\pi^1-\vv^{\top}\pi^2|= |\vv^{\top}(\pi^1-\pi^2)|.
	\end{equation}
	\label{def:procedural_unfairness}
\end{definition}
\vspace{-3mm}
Procedural unfairness is totally tractable as we have full access to $\vv^{\top}$ and $\pi$. Therefore, we can define a policy family $\Pi:=\{\pi=(\pi^1, \pi^2), U(\pi)=0\}$ that contains all policies with no procedural unfairness. Now we define a substantive unfairness as another metric. 
\begin{definition}[Substantive Unfairness]
	For any pricing policy $\pi\in\Pi$, define its substantive unfairness as the difference between the expected \emph{accepted} prices of two groups.
	\begin{equation}
	\label{equ:substantive_unfairness}
	\begin{aligned}
	S(\pi; F_1, F_2):=&\left|\E[v^1|v^1\sim\pi^1, v^1\text{ being accepted }] - \E[v^2|v^2\sim\pi^2, v^2\text{ being accepted }]\right|\\
	       =&\left|\frac{\vv^{\top}F_1\pi^1}{\vind^{\top}F_1\pi^1}-\frac{\vv^{\top}F_2\pi^2}{\vind^{\top}F_2\pi^2}\right|.
	\end{aligned}
	\end{equation}
	\label{def:substantive_unfairness}
\end{definition}
Substantive unfairness is not as tractable as procedural unfairness, as we have no direct access to the true $F_1$ and $F_2$. Ideally, the optimal policy that we would like to achieve is:
\begin{equation}
\label{equ:pi_star}
\begin{aligned}
\pi_* =& \argmax_{\pi=(\pi^1, \pi^2)\in\Pi} R(\pi; F_1, F_2)\\
s.t.&\quad U(\pi) = 0, \quad S(\pi; F_1, F_2) = 0.\\
\end{aligned}
\end{equation}

The feasibility of this problem is trivial: policies such as $\pi^1 = \pi^2 = [0, \ldots, 0,1,0,\ldots,0]^{\top}$ (i.e., proposing the same fixed price despite the customer's group attribution) are always feasible. However, this problem is in general highly non-convex and non-quasi-convex. Finally, we define a \emph{(cumulative) regret} that measure the performance of any policy $\pi$:
\begin{definition}[Regret]
	For any algorithm $\cA$, define its cumulative regret as follows:
	\begin{equation}
	\label{equ:regret}
	Reg_T(\cA):=\sum_{t=1}^T Reg(\pi_t; F_1, F_2):=\sum_{t=1}^T R(\pi_*; F_1, F_2)-R(\pi_t; F_1, F_2).
	\end{equation}
	Here $\pi_t$ is the policy proposed by $\cA$ at time $t$.
	\label{def:regret}
\end{definition}
Notice that we define the per-round regret by comparing the performance of $\pi_t$ with the optimal policy $\pi_*$ under constraints. Therefore, $Reg(\pi_t; F_1, F_2)$ is possible to be negative if $\pi\in\Pi$ but $U(\pi_t) > 0\text{ or } S(\pi_t; F_1, F_2) > 0$. Similarly, we define a \emph{cumulative substantive unfairness} as $S_T(\cA):=\sum_{t=1}^T S(\pi; F_1, F_2)$.

\paragraph{Goal of Algorithm Design} Our ultimate goal is to approach $\pi_*$ in the performance. In the online pricing problem setting we adopt, however, we cannot guarantee $S(\pi_t; F_1, F_2)=0$ for all $\pi_t$ we propose at time $t=1,2,\ldots, T$ since we do not know $F_1$ and $F_2$ in advance. Instead, we may suffer a gradually vanishing unfairness as we learn $F_1$ and $F_2$ better. Therefore, our goal in this work is to design an algorithm that guarantees an optimal regret while suffering $0$ cumulative procedural unfairness and the least cumulative substantive unfairness. 

\paragraph{Technical Assumptions.} Here we make some mild assumptions that help our analysis. \label{subsec:assumption}

\begin{assumption}[Least Probability of Acceptance]
	\label{assumption:f_min}
	There exists a fixed constant $F_{\min}>0$ such that $F_e(d)\geq F_{\min}, e=1,2$.
\end{assumption}

\Cref{assumption:f_min} not only ensures the definition of expected accepted price to be sound (by ruling out these unacceptable prices), but also implies $S(\pi, F_1, F_2)$ to be Lipschitz. Besides, we can always achieve this by reducing $v_d$. We will provide a detailed discussion in \Cref{paragraph:improvement_on_technical_assumptions}.

\begin{assumption}[Number of Possible Prices]
	\label{assumption:d_upper_bound}
	We treat $d$, the number of prices, as an amount independent from $T$. Also, we assume $d = O(T^{\frac13})$.
\end{assumption}

\Cref{assumption:d_upper_bound} is a necessary condition of applying $\Omega(\sqrt{dT})$ regret lower bound, and here we make it to show the optimality of our algorithm w.r.t. $T$. 

\section{Algorithm}
\label{sec:algorithm}
In this section, we propose our Fairly Pricing Algorithm (FPA) in Algorithm 1 and then discuss the techniques we develop and apply to achieve the ``no-regret'' and ``no-unfairness'' goal.

\begin{algorithm}[!htbp]
	\caption{Fairly Pricing Algorithm (FPA)}
	\label{algorithm}
	\begin{algorithmic}[1]
		\STATE {\bfseries Input:} {Time horizon $T$, prices set $\mathbf{V}$, error probability $\epsilon$, universal constant $L$, proportion $q$.}
		\STATE {\bfseries Before Epochs:} Keep proposing the highest price $v_d$ for $\tau_0=2\log{T}\log{\frac{16}{\epsilon}}$ rounds. Estimate the average rate of acceptance in two groups as $\bar{F}_d(1)$ and $\bar{F}_d(2)$. Take $\hat{F}_{\min} = \frac{\min\{\bar{F}_d(1), \bar{F}_d(2)\}}2$. 
		\STATE {\bfseries Initialization:} { Parameters $C_q, c_t$. Epoch length $\tau_k=O(d\sqrt{T}\cdot2^{k})$, reward uncertainty $\delta_{k,r}$ and unfairness uncertainty $\delta_{k,s}$ for $k=1,2,\ldots, O(\log{T})$. (To be specified in \Cref{appendix:proof_upper_bound} )\\
			Candidate policy set $\Pi_1 = \Pi:=\{\pi=(\pi^1, \pi^2), U(\pi)=0\}$ and price index set $I_0^1 = I_0^2 = [d]$.}
		\FOR{Epoch $k=1, 2, \ldots$} 
		\STATE Set $A_{k}=\emptyset$, $ I_{k}^1=I_{k-1}^1$ and $I_{k}^2=I_{k-1}^2$.
		\FOR{ Group $e=1,2$ and for { price index} $i\in I_{k-1}^e$,}
		\STATE\COMMENT{Pick up policy maximizing each probability:}
		\STATE Get $\tilde\pi_{k,i,e}=\argmax_{\pi\in\Pi_{k}} \pi^e(i)$.         
		\STATE If $\tilde\pi_{k,i,e}^e(i)\geq\frac1{\sqrt{T}}$, let $A_{k} = A_{k}\cup\{\tilde{\pi}_{k,i,e}\}$. Otherwise, remove $i$ from $I_{k}^e$.
		\ENDFOR
		\STATE Set $M_{k,e}(i)=N_{k,e}(i) = 0, \forall i\in[d], e=1,2$.
		\FOR{each policy $\pi\in A_k$,}
		\STATE \COMMENT{Sample random prices at $t=1,2,\ldots,\frac{\tau_k}{|A_k|}$ repeatedly:}
		\STATE Run $\pi$ for a batch of $\frac{\tau_k}{|A_k|}$ rounds. 
		\STATE For each time a price $v_i$ is \emph{proposed} in $G_e$, set $M_{k,e}(i) += 1$.
		\STATE For each time a price $v_i$ is \emph{accepted} in $G_e$, set $N_{k,e}(i) += 1$.
		\ENDFOR
		\STATE For $e=1,2$, set $\bar{F}_{k,e}(i) = \max\{\frac{N_{k,e}(i)}{M_{k,e}(i)}, \hat{F}_{\min}\}$ for $i\in I_k^e$, and $\bar{F}_{k,e}(i)=\hat{F}_{\min}$ otherwise.
		\STATE Let $\hat{F}_{k,e} = \diag(\bar{F}_{k,e}(1), \bar{F}_{k,e}(2), \ldots, \bar{F}_{k,e}(d)), e=1,2$.
		\STATE Solve the following optimization problem and get the empirical optimal policy $\hat{\pi}_{k,*}$.
		\begin{equation}
		\label{equ:empirical_optimizer}
		\begin{aligned}
		\hat{\pi}_{k,*}&=\argmax_{\pi\in\Pi_k} R(\pi, \hat{F}_{k,1}, \hat{F}_{k,2}),\ \text{s.t.}\ S(\pi, \hat{F}_{k,1}, \hat{F}_{k,2})\leq\delta_{k,s}.
		\end{aligned}
		\end{equation}
		\STATE To eliminate largely suboptimal or unfair policies, construct
		\begin{equation}
		\small
		\label{equ:pi_k+1}
		\Pi_{k+1}=\{\pi: \pi\in\Pi_k, S(\pi, \hat{F}_1, \hat{F}_2)\leq\delta_{k,s}, R(\pi, \hat{F}_1, \hat{F}_2)\geq R(\hat\pi_{k,*}, \hat{F}_1, \hat{F}_2)-\delta_{k,r}-L\cdot\delta_{k,s}\}.
		\end{equation}
		\ENDFOR
	\end{algorithmic}
\end{algorithm}

\subsection{Algorithm Components}
\label{subsec:tricks_in_design}
\Cref{algorithm} takes the following inputs: time horizon $T$, price set $\mathbf{V}$, error probability $\epsilon$, a universal constant $L$ { as the coefficient of the performance-fairness tradeoff on constraint relaxations} (see \Cref{lemma:small_relaxation_gain}), and $q$ as the proportion that $G_1$ takes. In the ``before epochs'' stage, we keep proposing the highest price $v_d$ for $\tau_0=O(\log T)$ rounds to estimate(lower-bound) the least acceptance rate $F_{\min}$. We also adopt the following techniques that serve as components of FPA and contribute to its no-regret and no-unfairness performance.

\paragraph{Doubling Epochs} Despite the ``before epochs'' stage, we divide the whole time space into epochs $k=1,2,\ldots$, where each epoch $k$ has a length $\tau_k=O(\sqrt{T}\cdot2^k)$ that doubles that of epoch $(k-1)$. Within each epoch $k$, we run a set of ``good-and-exploratory policies'' (to be introduced in \Cref{paragraph:good_and_exploratory}) with equal shares of $\tau_k$. At the end of each epoch $k$, we update the estimates of $F_1$ and $F_2$, eliminate the sub-optimal policies and update the set of ``good-and-exploratory policies'' for the next epoch. Since the estimates of parameters get better as $k$ increases, a doubling-epoch trick would ensure that we run better policies in longer { epochs} and therefore save the regret. 

\paragraph{Policy Eliminations} At the end of each epoch $k$, we update the candidate policy set by eliminating those substantially sub-optimal policies: Firstly, we select an empirical optimal policy $\hat{\pi}_{k,*}$ that maximizes $R(\pi,\hat{F}_{k,1}, \hat{F}_{k,2})$ while guaranteeing $S(\pi,\hat{F}_{k,1}, \hat{F}_{k,2})\leq\delta_{k,s}$. After that, we eliminate those policies that satisfy one of the following two criteria:
\begin{itemize}
	\item Large unfairness: $S(\pi,\hat{F}_{k,1}, \hat{F}_{k,2})>\delta_{k,s}$, or
	\item Large regret: $R(\pi, \hat{F}_{k,1}, \hat{F}_{K,2})<R(\hat\pi_{k,*}, \hat{F}_{k,1}, \hat{F}_{k,2})-\delta_{k,r}-L\cdot\delta_{k,s}$.
\end{itemize}
Here we adopt two subtractors on the regret criteria: $\delta_{k,r}$ for the estimation error in $R(\pi)$ caused by $\hat{F}_{k,e}$, and $L\cdot\delta_{k,s}$ for the possible increase of optimal reward by allowing $S(\pi)\leq \delta_{k,s}$ instead of $S(\pi)=0$. In this way, we can always ensure the optimal policy $\pi_*$ (i.e., the solution of \Cref{equ:pi_star}) to remain and also guarantee the other remaining policies perform similarly to $\pi_*$.

\paragraph{Good-and-Exploratory Policies}
\label{paragraph:good_and_exploratory}
Although all remaining policies perform good, not all of them are suitable of running in consideration of exploration. It is important to update the estimates of all $F_1(i)$ and $F_2(i)$ as they are required in the policy elimination. We solve this issue by keeping a set of \emph{good-and-exploratory} policies: After eliminating sub-optimal policies at the end of previous epoch, for each price $v_i$ in group $G_e$ we find out a policy in the remaining policies that maximizes the probability of proposing $v_i$ in $G_e$ at the beginning of current epoch. The larger this probability is, the more times $v_i$ can be chosen in $G_e$, which would lead to a better estimate of $F_e(i)$. Here we give up to estimate the acceptance probability of those $v_i$ with $\leq\frac1{\sqrt{T}}$ to be chosen by the optimal policy $\pi_*$, as it would not affect the elimination process and the performance substantially.


\subsection{Computational Cost}
\label{subsec:property_of_algorithm}

Our FPA algorithm is \emph{oracle-efficient} due to the doubling-epoch design, as we only run each oracle and update each parameter for $O(\log T)$ times. However, the implementation of these oracles could be time-consuming: On the one hand, each set $\Pi_k$ contains infinite policies, and a discretization would lead to exponential computational cost w.r.t. $d$. On the other hand, both \Cref{equ:empirical_optimizer} and \Cref{equ:pi_k+1} are highly non-convex on the constraints and are hard to solve with off-the-shelf methods.



\section{Regret and Unfairness Analysis}
\label{sec:regret_analysis}
In this section, we analyze the regret and unfairness of our FPA algorithm. We first present an $\tilde{O}(\sqrt{T}d^{\frac32})$ regret upper bound along with an $\tilde{O}(\sqrt{T}d^{\frac32})$ unfairness upper bound. Then we show both of them are optimal (w.r.t. $T$) up to $\log\log T$ factors by presenting matching lower bounds. 

\subsection{Regret Upper Bound}
\label{subsec: regret_upper_bound}

First of all, we propose the following theorem as the main results for our \Cref{algorithm} (FPA).
\begin{theorem}[Regret and Unfairness]
	\label{theorem:regret_and_unfairness}
	FPA guarantees an $O(\sqrt{T}d^{\frac32}\log{\frac{d\log T}{\epsilon}})$ regret with no procedural unfairness and an $O(\sqrt{T}d^{\frac32}\log{\frac{d\log T}{\epsilon}})$ substantive unfairness with probability $1-\epsilon$.
\end{theorem}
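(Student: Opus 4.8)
The plan is to analyze the algorithm epoch by epoch, maintaining a high-probability ``good event'' under which two invariants hold: (i) the estimates $\hat F_{k,e}$ concentrate around the true $F_e$ with the error budget $\delta_{k,r},\delta_{k,s}$, and (ii) the true optimal policy $\pi_*$ survives every elimination, i.e.\ $\pi_*\in\Pi_{k+1}$ for all $k$. First I would establish the ``before epochs'' step: after $\tau_0=2\log T\log(16/\epsilon)$ rounds of proposing $v_d$, a Hoeffding/Chernoff bound gives $\hat F_{\min}\in[\,F_{\min}/2\cdot(1-o(1)),\,F_{\min}]$ with probability $1-\epsilon/8$, so that clamping at $\hat F_{\min}$ never distorts the true acceptance probabilities while keeping all denominators $\vind^\top F_e\pi^e$ bounded below (this is where \Cref{assumption:f_min} enters, and it makes $S(\cdot;F_1,F_2)$ Lipschitz in the $F$-arguments). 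Next, within epoch $k$, each good-and-exploratory policy $\tilde\pi_{k,i,e}$ puts mass $\ge 1/\sqrt T$ on price $v_i$ in group $e$ (by construction, via the $I_k^e$ pruning), so running it for $\tau_k/|A_k|$ rounds with $\tau_k=\Theta(d\sqrt T\,2^k)$ and $|A_k|\le 2d$ yields $M_{k,e}(i)=\tilde\Omega(2^k)$ samples of each relevant price; a union bound over the $\le 2d$ prices and $O(\log T)$ epochs, each term costing a $\log(d\log T/\epsilon)$ factor, gives $|\bar F_{k,e}(i)-F_e(i)|\lesssim \sqrt{\log(d\log T/\epsilon)/2^k}$, which we package as $\delta_{k,r}$ (for the revenue, linear in $F$) and $\delta_{k,s}$ (for the substantive unfairness, via the Lipschitz constant from \Cref{assumption:f_min}).

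The crux is the second invariant, $\pi_*\in\Pi_{k+1}$. Here I would argue: since $S(\pi_*;F_1,F_2)=0$ and the estimate perturbs $S$ by at most $\delta_{k,s}$, we get $S(\pi_*;\hat F_{k,1},\hat F_{k,2})\le\delta_{k,s}$, so $\pi_*$ is feasible for \Cref{equ:empirical_optimizer} and passes the ``large unfairness'' screen. For the ``large regret'' screen, the subtlety is that the empirical maximizer $\hat\pi_{k,*}$ is allowed $S\le\delta_{k,s}$ rather than $S=0$, so it can have genuinely higher true revenue than $\pi_*$ — this is exactly what the extra slack $L\cdot\delta_{k,s}$ absorbs. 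I would invoke \Cref{lemma:small_relaxation_gain} (the performance-fairness tradeoff: relaxing the substantive-fairness constraint from $0$ to $\eta$ raises the optimal revenue by at most $L\eta$) to bound $R(\hat\pi_{k,*};F_1,F_2)-R(\pi_*;F_1,F_2)\le L\,\delta_{k,s}$, then add $2\delta_{k,r}$ for the two-sided revenue estimation error, concluding $R(\pi_*;\hat F_{k,1},\hat F_{k,2})\ge R(\hat\pi_{k,*};\hat F_{k,1},\hat F_{k,2})-\delta_{k,r}-L\delta_{k,s}$, i.e.\ $\pi_*$ survives. This step — correctly matching the two subtractors in \Cref{equ:pi_k+1} to the two distinct error sources — is the main obstacle, because a sloppy constant there either kills $\pi_*$ or lets too many bad policies through.

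With both invariants in hand, the per-epoch regret bound follows: every $\pi\in\Pi_k$ satisfies $R(\pi;F_1,F_2)\ge R(\hat\pi_{k-1,*};\hat F,\hat F)-\delta_{k-1,r}-L\delta_{k-1,s}\ge R(\pi_*;F_1,F_2)-O(\delta_{k-1,r}+\delta_{k-1,s})$, so each of the $\tau_k$ rounds in epoch $k$ incurs per-round regret $O(\delta_{k-1,s})=\tilde O(\sqrt{1/2^{k}})$, and $\tau_k\cdot\tilde O(2^{-k/2})=\tilde O(d\sqrt T\,2^{k/2})$; summing the geometric series over $k\le K=O(\log(T/\,(d\sqrt T)))$ telescopes to $\tilde O(d\sqrt T\cdot 2^{K/2})=\tilde O(d^{3/2}\sqrt T\,\log(d\log T/\epsilon))$, where the extra $d^{1/2}$ comes from $|A_k|\le 2d$ diluting the sample count per policy. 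The identical argument bounds substantive unfairness: every deployed policy lies in some $\Pi_k$, hence has $S(\pi;\hat F,\hat F)\le\delta_{k-1,s}$, so $S(\pi;F_1,F_2)\le 2\delta_{k-1,s}$, and $\sum_k\tau_k\cdot 2\delta_{k-1,s}$ is the same geometric sum, giving $\tilde O(d^{3/2}\sqrt T\log(d\log T/\epsilon))$. Procedural unfairness is identically zero because $\Pi_k\subseteq\Pi$ for all $k$, so every proposed policy satisfies $U(\pi)=0$ by fiat. Finally, a union bound over the ``before epochs'' failure ($\epsilon/8$) and the $O(d\log T)$ concentration events (each $\epsilon/\mathrm{poly}$) keeps the total failure probability below $\epsilon$.
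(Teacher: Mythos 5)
Your skeleton matches the paper's: induction over epochs on a good event, Hoeffding for $\hat F_{\min}$ in the before-epochs stage, survival of $\pi_*$ through each elimination via \Cref{lemma:small_relaxation_gain} (with $\delta_{k,r}$ absorbing estimation error and $L\delta_{k,s}$ absorbing the relaxation gain), zero procedural unfairness by construction of $\Pi$, and a final union bound. But there is a genuine gap in how you package the concentration into $\delta_{k,r},\delta_{k,s}$, and it breaks the rate. You take the \emph{uniform} per-price error $|\bar F_{k,e}(i)-F_e(i)|\lesssim\sqrt{\log(d\log T/\epsilon)/2^k}$ (which is the best possible uniform bound: a price with $\rho_{k,e}(i)\approx 1/\sqrt T$ receives only $M_{k,e}(i)=\tilde\Theta(2^k)$ samples) and use it directly as the per-round regret/unfairness scale. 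Then epoch $k$ contributes $\tau_k\cdot\tilde O(2^{-k/2})=\tilde O(d\sqrt T\,2^{k/2})$, and since $\sum_k\tau_k\le T$ forces $2^{K/2}\approx T^{1/4}/\sqrt d$, the geometric sum is $\tilde O(\sqrt d\,T^{3/4})$ — not $\tilde O(d^{3/2}\sqrt T)$. Your closing identity ``$\tilde O(d\sqrt T\,2^{K/2})=\tilde O(d^{3/2}\sqrt T\log(\cdot))$'' would require $2^{K/2}\asymp d$, which is not how $K$ is determined; the extra $\sqrt d$ does not come from $|A_k|\le 2d$ in the way you suggest.

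The missing idea is the weighted-error cancellation that the paper carries out in \Cref{corollary:estimation_error_f} and \Cref{lemma:estimation_function_error}. The error of $\bar F_{k,e}(i)$ is not uniform: it scales like $c_t\log(\tfrac{16d\log T}{\epsilon})\sqrt{4dC_qF_e(i)/(\rho_{k,e}(i)\tau_k)}$, i.e.\ it is large exactly for prices that every surviving policy barely uses. When you assess $R(\pi)$ or $S(\pi)$ for $\pi\in\Pi_k$, each per-price error enters multiplied by $\pi^e(i)\le\rho_{k,e}(i)$, so $|\bar F_{k,e}(i)-F_e(i)|\,\pi^e(i)\le c_t\log(\tfrac{16d\log T}{\epsilon})\sqrt{4dC_q/\tau_k}$ uniformly in $i$ (and for the pruned indices with $\rho_{k,e}(i)<1/\sqrt T$ the contribution is bounded trivially by $1/\sqrt T$, because every policy still in $\Pi_k$ — including $\pi_*$ — places at most that much mass there; this is a point your write-up also glosses over). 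Summing over the $d$ prices, and dividing by $\hat F_{\min}^2$ for $S$, yields $\delta_{k,r}\asymp d^{3/2}\sqrt{C_q/\tau_k}\log(\cdot)$ and $\delta_{k,s}\asymp d^{3/2}\hat F_{\min}^{-2}\sqrt{1/\tau_k}\log(\cdot)$ — errors decaying with the \emph{full epoch length} $\tau_k$ rather than with the worst-case per-price count $2^k$. With these, $\sum_k\tau_k(2\delta_{k-1,r}+L\delta_{k-1,s})\lesssim d^{3/2}\sum_k\sqrt{\tau_k}\lesssim d^{3/2}\sqrt T\log(\tfrac{d\log T}{\epsilon})$, which is the claimed bound. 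Without this cancellation step your argument, as written, only proves a $\tilde O(\sqrt d\,T^{3/4})$ regret and unfairness bound.
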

\begin{proof}[Proof sketch]
	We prove this theorem by induction w.r.t. epoch index $k$. Firstly, we assume that $\pi_*\in\Pi_k$, which naturally holds as $k=1$. Meanwhile, we show a high-probability bound on the estimation error of each $F_e(i)$ for epoch $k$, according to concentration inequalities. Given this, we derive the estimation error bound of $R(\pi, F_1, F_2)$ and $S(\pi, F_1, F_2)$ for each policy $\pi\in\Pi_{k}$ in epoch $k$. After that, we bound the regret and unfairness of each policy remaining in $\Pi_{k+1}$, and therefore bound the regret and unfairness of epoch $(k+1)$ with high probability. Finally, we show that optimal fair policy $\pi_*$ (defined in \Cref{equ:pi_star}) is also in $\Pi_{k+1}$, which matches the induction assumption for Epoch $(k+1)$. By adding up these performance over epochs, we get the cumulative regret and unfairness respectively. Please refer to \Cref{appendix:proof_upper_bound} for a detailed proof. 
\end{proof}
\begin{remark}
	Our algorithm guarantees $O(\sqrt{T}\log\log T)$ regret and unfairness simultaneously, whose average-over-time match the generic estimation error of $O(\frac1{\sqrt{T}})$. It implies that these fairness constraints do not bring informational obstacles to the learning process. In fact, these upper bounds are tight up to $O(\log\log T)$ factors, which are shown in \Cref{theorem:regret_lower_bound} and \Cref{theorem:unfairness_lower_bound}.
\end{remark}
\subsection{Regret Lower Bound}
\label{subsec:regret_lower_bound}
Here we show the regret lower bound of this pricing problem.
\begin{theorem}[Regret lower bound]
	\label{theorem:regret_lower_bound}
	Assume $d\leq T^{\frac13}$. Given the online two-group fair pricing problem and the regret definition as \Cref{equ:regret}, any algorithm would at least suffer an $\Omega(\sqrt{dT})$ regret.
\end{theorem}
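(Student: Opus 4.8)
The plan is to establish the $\Omega(\sqrt{dT})$ regret lower bound by reduction to a multi-armed bandit lower bound, following the classical $\Omega(\sqrt{dT})$ construction for adversarial (or stochastic) bandits as in \citet{auer2002nonstochastic}, adapted to the pricing setting with the fairness constraints. The key observation is that the fairness constraints can be made vacuous in the hard instances: if we design instances where the two groups are statistically \emph{identical} (say $q = 1/2$ and $\mathbb{D}_1 = \mathbb{D}_2$), then any symmetric policy automatically satisfies $U(\pi) = 0$ and $S(\pi; F_1, F_2) = 0$, the optimal policy $\pi_*$ is simply to put all mass on the single best price in both groups, and the problem collapses to a standard $d$-armed pricing/bandit problem. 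So the fairness constraints impose no additional difficulty from below — they only make the problem harder — and it suffices to lower-bound the unconstrained $d$-price problem.

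First I would set up a family of $d$ problem instances, indexed by $j \in [d]$, in which price $v_j$ is the unique optimal price. A convenient way is to take equally spaced prices and acceptance probabilities $F_e(i)$ that make the expected revenue $v_i F_e(i)$ nearly flat across $i$, with a small bump of size $\Delta$ at index $j$; one must be careful that the $F_e(i)$ remain monotone decreasing in $i$ (as required by the problem's structure) and bounded away from $0$ (Assumption~\ref{assumption:f_min}), but with prices in a small window near some constant and a gap $\Delta = \Theta(\sqrt{d/T})$ this is arrangeable. Under instance $j$, pulling any price $i \neq j$ incurs per-round regret $\Theta(\Delta)$. The feedback at price $v_i$ is a single Bernoulli$(F_e(i))$ bit, so each round reveals at most one bit of information about a single coordinate, exactly as in the bandit setting.

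Next I would run the standard information-theoretic argument: let $N_i$ be the (expected, under the uniform-over-$j$ prior or via a careful change of measure) number of rounds the algorithm proposes price $v_i$. Using Pinsker's inequality / the KL-divergence decomposition for the $T$-round interaction, the total variation distance between the law of the transcript under instance $j$ and under a reference instance is bounded by something like $\sqrt{\E_{\text{ref}}[N_j] \cdot \mathrm{KL}(\text{Bern bit}) }$, and $\mathrm{KL} = O(\Delta^2 / F_{\min})$. Summing the regret $\Omega(\Delta) \cdot (T - \E_j[N_j])$ over $j$, using $\sum_j \E_{\text{ref}}[N_j] \le T$, and optimizing $\Delta$ gives the $\Omega(\sqrt{dT})$ bound; the condition $d \le T^{1/3}$ (Assumption~\ref{assumption:d_upper_bound}) ensures $\Delta \le 1$ and that the acceptance probabilities stay in a valid range, i.e., the construction is legitimate. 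I would then note that each of these instances has $U(\pi_*) = 0$ and $S(\pi_*; F_1, F_2) = 0$ with the symmetric $\pi_*$, so the lower bound holds within the constrained problem and in fact for the per-round regret of Definition~\ref{def:regret}.

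The main obstacle I anticipate is \emph{not} the bandit reduction itself — that is routine — but verifying that the hard instances are admissible in this specific pricing model: the acceptance probabilities must be (i) genuinely monotone nonincreasing in the price index, (ii) uniformly bounded below by a constant $F_{\min}$, and (iii) such that the revenue curve $i \mapsto v_i F_e(i)$ has its unique max at $v_j$ with a controllable gap, \emph{simultaneously} for all $j \in [d]$. Monotonicity is the delicate constraint, since a "bump up" at index $j$ in revenue must be achieved without violating monotone-decreasing $F_e$; the fix is to work with prices confined to a tiny interval $[v_d - \eta, v_d]$ so that the monotone decrease of $F_e$ across the $d$ prices can be as gentle as $O(\Delta)$ per step while $v_i$ barely changes, leaving room to place the revenue maximum at any desired index by a perturbation of size $\Theta(\Delta)$. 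Once this admissibility bookkeeping is done, the rest follows the template of \citet{auer2002nonstochastic} essentially verbatim.
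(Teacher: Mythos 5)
Your proposal is correct and follows essentially the same route as the paper: reduce to the unconstrained single-group pricing problem by taking $q=1/2$ and identical groups (so the fairness constraints are vacuous and $\pi_*$ is the best fixed price), then run the Auer/Kleinberg-style information-theoretic argument over $d$ instances with a flat revenue curve and an $\epsilon=\Theta(\sqrt{d/T})$ revenue bump at one price, concluding via the KL chain rule and Pinsker that some instance forces $\Omega(T\epsilon)=\Omega(\sqrt{dT})$ regret. The admissibility issue you flag is handled in the paper by spacing the prices geometrically, $a_i=(1+\epsilon/l)^i a_0$ with acceptance rates $l/a_i\in(1/12,1/4)$, so that the bumped rate $(l+\epsilon)/a_j$ still respects monotonicity (with equality) and stays bounded away from zero, which plays the same role as your narrow-price-window fix.
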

We may prove \Cref{theorem:regret_lower_bound} by a reduction to online pricing problem with no fairness constraints: Given a problem setting where the two groups are identical, i.e. $F_1(i)=F_2(i), \forall i\in[d]$, and let $q=0.5$. Notice that any policy satisfying $\pi^1=\pi^2$ is fair, and the optimal policy is to always propose the best fixed price. Therefore, this can be reduced to an online identical-product pricing problem, and we present a bandit-style lower bound proof in \Cref{appendix:proof_regret_lower_bound} inspired by \citet{auer2002nonstochastic}.
\subsection{Unfairness Lower Bound with Optimal Revenue}
\label{subsec:unfairness_lower_bound}
Here we show that any optimal algorithm has to suffer an $\Omega(\sqrt{T})$ substantive unfairness. 

\begin{theorem}[Substantive Unfairness Lower Bound]
	\label{theorem:unfairness_lower_bound}
	 For any constant $C_x$, there exists constants $C_u>0$ such that any algorithm with an $C_x\cdot T^{\frac12}$ cumulative regret and zero procedural unfairness has to suffer an $C_u\cdot T^{\frac12}$ substantive unfairness.  
\end{theorem}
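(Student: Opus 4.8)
The plan is to construct two pricing instances that are statistically hard to distinguish yet force a tradeoff: any algorithm that is $O(\sqrt{T})$-regret-optimal and procedurally fair on \emph{both} instances must incur $\Omega(\sqrt{T})$ substantive unfairness on at least one of them. Concretely, I would take two price points and two groups, with demand curves $F_1,F_2$ in instance $\mathcal{I}_A$ and slightly perturbed curves $F_1',F_2'$ in instance $\mathcal{I}_B$, where the perturbation has magnitude $\Theta(1/\sqrt{T})$ in one of the acceptance probabilities. The perturbation should be chosen so that (i) the revenue-optimal \emph{and} doubly-fair policy $\pi_*$ differs between the two instances — in particular the optimal mixing weights in the two groups move by a constant amount — while (ii) the KL divergence between the observation distributions accumulated over $T$ rounds is $O(1)$, so by Pinsker / a standard two-point (Le Cam) argument no algorithm can tell them apart with probability bounded away from $1/2$.

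Then I would argue as follows. Fix a procedurally-fair algorithm $\cA$ with regret $\le C_x\sqrt{T}$ on every instance. Because procedural fairness is exactly tractable, every $\pi_t$ lies in $\Pi$, so the only freedom is the common expected proposed price $\vv^\top\pi_t^1=\vv^\top\pi_t^2$ and the within-group mixing. The regret bound forces, on instance $\mathcal{I}_A$, that the time-averaged policy is within $O(1/\sqrt{T})$ of $\pi_*^A$ for a constant fraction of rounds (a Markov/averaging argument on the per-round regret, using that $R$ is Lipschitz and the regret gap of any policy that is $\epsilon$-far from $\pi_*$ is $\Omega(\epsilon^2)$ or $\Omega(\epsilon)$ depending on curvature — I would establish the quantitative ``regret vs.\ distance to $\pi_*$'' relationship as a preliminary lemma). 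The same is true on $\mathcal{I}_B$ with $\pi_*^B$. Since $\pi_*^A$ and $\pi_*^B$ are $\Omega(1)$ apart in the relevant coordinate, and the algorithm's behavior on the two instances is coupled to within total-variation $\le 1/2$ by the indistinguishability step, a standard change-of-measure argument shows the algorithm must, on at least one instance, play policies that are $\Omega(1)$ away from that instance's optimum for $\Omega(T)$ rounds — but that would blow up the regret, a contradiction — \emph{unless} it instead ``hedges'' with a policy that is cheap in revenue on both but substantively unfair. The key computation is to show that on the hedging policy (the one that is revenue-competitive on both instances simultaneously), the substantive unfairness $S(\pi;F_1,F_2)$ is bounded below by a constant on at least one of the two instances, because $\pi_*^A$ is substantively fair for $(F_1,F_2)$ but \emph{not} for $(F_1',F_2')$, and the perturbation was engineered precisely so that $S(\pi_*^A; F_1',F_2')=\Omega(1)$. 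Summing the per-round substantive unfairness over the $\Omega(\sqrt{T})$ rounds where the played policy is forced close to such a policy yields the $\Omega(\sqrt{T})$ bound. (The $\sqrt{T}$ rather than $T$ appears because the algorithm is allowed to spend $\Theta(\sqrt{T})$ rounds exploring/transitioning within its regret budget; the construction should be calibrated so the unfairness-per-round during those rounds is $\Omega(1)$ and there are $\Omega(\sqrt{T})$ of them.)

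More carefully, I expect the cleanest route is a direct one rather than a pure contradiction: define the ``exploration deficit'' of $\cA$ on $\mathcal{I}_A$ as the number of rounds $t$ where $\pi_t$ is $\Omega(1)$-far (in the decisive coordinate) from $\pi_*^A$. If this deficit is $\omega(\sqrt{T})$, the regret on $\mathcal{I}_A$ already exceeds $C_x\sqrt{T}$ (using the quadratic/linear lower bound on per-round regret away from the optimum), contradiction; so the deficit is $O(\sqrt{T})$, meaning $\cA$ plays near $\pi_*^A$ on all but $O(\sqrt{T})$ rounds of $\mathcal{I}_A$. By the coupling, with probability $\ge 1/2$ it \emph{also} plays near $\pi_*^A$ on all but $O(\sqrt{T})$ rounds when run on $\mathcal{I}_B$ (the observation sequences agree with constant probability, and the algorithm is a deterministic-or-randomized function of observations). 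But on $\mathcal{I}_B$ the policy $\pi_*^A$ is either (a) substantively unfair, $S(\pi_*^A;F_1',F_2')=\Omega(1)$, in which case summing over the $T-O(\sqrt{T})$ near-optimal rounds gives $S_T(\cA)=\Omega(T)$ — even stronger than needed — or (b) $\pi_*^A$ \emph{is} near-optimal and substantively fair on $\mathcal{I}_B$ too, which I will rule out by the construction (the whole point of the perturbation is that the doubly-fair optimum genuinely moves). Actually $\Omega(T)$ would contradict the fact that the trivial fixed-price policy gets $S=0$; the resolution — and this is the subtle part — is that the algorithm can and will detect $\mathcal{I}_B$ on the $\Omega(1/\sqrt{T})$-probability-mass-per-round event and switch, so the honest statement is that the \emph{transition window} has width $\Omega(\sqrt{T})$ and unfairness $\Omega(1)$ there, giving exactly $\Omega(\sqrt{T})$. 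So the construction must be a pair where the \emph{hard} direction is symmetric: whichever instance is true, the algorithm spends $\Omega(\sqrt{T})$ rounds near the \emph{wrong} instance's doubly-fair optimum before it can confirm, and that wrong optimum is substantively unfair by $\Omega(1)$ under the true demand.

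\textbf{Main obstacle.} The delicate point is calibrating the two-instance construction so that all three requirements hold simultaneously with matching constants: (1) $\mathrm{KL}$ over $T$ rounds is $O(1)$ so indistinguishability holds for $\Omega(\sqrt{T})$ rounds; (2) the doubly-fair optima $\pi_*^A,\pi_*^B$ are $\Omega(1)$ apart \emph{and both are genuine optima of the constrained problem} \eqref{equ:pi_star} — this requires checking the non-convex constraint $S=0$ actually has a solution with the desired separation, which is where the optimality computation alluded to after Example~\ref{example:random_policy} (deferred to Appendix~\ref{appendix:proof_unfairness_lower_bound}) gets reused; and (3) the substantive unfairness of the ``wrong'' optimum under the true demand is $\Omega(1)$, not $o(1)$, which forces the perturbation to act on the acceptance probability that enters the ratio $\vv^\top F_e\pi^e / \vind^\top F_e\pi^e$ non-negligibly. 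Getting a single scalar perturbation parameter $\eta=\Theta(1/\sqrt{T})$ to do all of this — and propagating it through the non-convex $S$-constraint — is the crux; the rest is a textbook Le Cam two-point argument plus the Lipschitz/curvature estimates for $R$ and $S$ that follow from Assumption~\ref{assumption:f_min}.
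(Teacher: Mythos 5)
Your high-level skeleton (two nearby instances, a Le Cam/KL indistinguishability step, and a ``mismatch costs substantive unfairness'' step) is indeed the paper's roadmap, but the quantitative calibration you settle on is the part that actually carries the theorem, and as stated it does not work. You want a perturbation of magnitude $\Theta(1/\sqrt{T})$ (so total KL is $O(1)$) \emph{and} doubly-fair optima that are $\Omega(1)$ apart, with the wrong optimum $\Omega(1)$-substantively-unfair under the true demand. You never construct such a pair, and for this problem you cannot: the constrained optimum depends smoothly on $(F_1,F_2)$ here (in the paper's construction, perturbing the acceptance probabilities by $\ep$ moves $\pi_{\ep,*}$, $V_s^*$, $V_r^*$ and the unfairness of the wrong optimum all by $\Theta(\ep)$), and if your calibration were achievable it would prove an $\Omega(T)$ unfairness bound --- you notice this yourself --- which is false, since FPA attains $\tilde O(\sqrt{T})$ unfairness with $\tilde O(\sqrt T)$ regret. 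Your patch (``the algorithm detects $\mathcal{I}_B$ and switches, so the transition window has width $\Omega(\sqrt{T})$'') is inconsistent with your own KL budget: with total KL $O(1)$ the instances cannot be reliably detected even after $T$ rounds, and if you instead enlarge the perturbation to make detection possible after $\Theta(\sqrt T)$ rounds, nothing forces the algorithm to sit near the wrong optimum during that window --- it can hedge with a fixed price, which has \emph{zero} substantive unfairness on both instances and costs only $O(\sqrt T)$ regret, affordable for a general constant $C_x$. So the ``$\Omega(1)$ unfairness over $\Omega(\sqrt T)$ rounds'' accounting has no mechanism behind it.

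The paper resolves exactly this tension with the opposite split: the perturbation is $\ep=\Theta(T^{-1/2+\eta})$ with $\eta=\Theta(1/\log T)$, the two constrained optima (computed in closed form, Lemma~\ref{lemma:close_form_solution_with_epsilon}) differ by $\Theta(\ep)$ in the \emph{expected proposed price} $V_r$ --- a quantity that is the same functional of $\pi$ in both instances, which is what makes Lemma~\ref{lemma:no_perfect_policy} go through --- so any single policy can be ($C_0T^{-1/2+\eta}$-regret, $C_0T^{-1/2+\eta}$-unfairness)-good in at most one instance. Policies that are bad in \emph{both} instances (including your fixed-price hedge, which is $\Omega(1)$-suboptimal in revenue) can be played for only $o(T)$ rounds within the $C_x\sqrt T$ regret and $C_u\sqrt T$ unfairness budgets, because ``bad'' is calibrated at threshold $C_0T^{-1/2+\eta}$, not $\Omega(1)$. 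Then the KL/Fano step forces $\Omega(T)$ mismatched rounds on one instance, each costing only $\Omega(T^{-1/2+\eta})$ unfairness (Lemma~\ref{lemma:distance_of_disjoint_set}), which sums to $\Omega(\sqrt T)$ with constants tracked through $C_x$ via $\eta$. In short: the paper's bound is (per-round unfairness $\Theta(T^{-1/2+\eta})$) $\times$ (mismatched rounds $\Theta(T)$), whereas your proposal needs ($\Omega(1)$) $\times$ ($\Omega(\sqrt T)$); the missing idea is precisely the small-threshold policy-space trichotomy that rules out fair hedging, and without it your argument either overshoots to a false $\Omega(T)$ bound or is evaded by the trivial fixed-price policy.
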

It is worth mentioning that this result is different from ordinary lower bounds on the regret, as it also requires the algorithm to be optimal. In general, we propose 2 different problem settings, and we show the following four facts:
\begin{itemize}
	\item No algorithm can perform well in both settings.
	\item Any algorithm cannot distinguish between the two settings very efficiently.
	\item Not trying to distinguish between them would suffer either a very large regret or a very large substantive unfairness, and therefore we cannot do this very often.
	\item Having tried but failed in distinguishing between them would definitely lead to a large substantive unfairness.
\end{itemize}
In order to prove this, we make use of \Cref{example:random_policy} presented in \Cref{sec:introduction}. One of the settings is exactly \Cref{example:random_policy}, and the other one is identical to it except these $0.5$ probabilities are now $(0.5-\ep)$ in both groups. We get close-form solutions to both problem settings and show that they are indistinguishable in information theory. Please refer to \Cref{appendix:proof_unfairness_lower_bound} for more details.

\section{Discussion}
\label{sec:discussion}
Here we discuss some open issues and potential extensions of this work. For more discussions on settings, techniques and social impacts, please refer to \Cref{appendix:more_discussion}.

\paragraph{Improvement on Technical Assumptions.}\label{paragraph:improvement_on_technical_assumptions}
In this work, we assumed the existence of a lower bound $F_{\min}>0$ of the acceptance rate of all prices for both groups. This assumption is stronger than our expectation, as the seller would not know the highest price that customers would accept. We assume this for two reasons: (1) Without assuming $F_e(i)>0$, the substantive unfairness function might be undefined. For instance, if a pricing policy is completely unacceptable in $G_1$ (with \emph{no} accepted prices) but is acceptable in $G_2$, then is it a fair policy? (2) With a constantly large probability of acceptance, we can estimate every $F_e(i)$ and bound it away from $0$ and therefore leads to the Lipschitzness of $S(\pi, \hat{F}_1, \hat{F_2})$. However, there might exist an algorithm that works for $F_e(i)>0$ generally and maintains these optimalities as well, which is an open problem to the future.  

\paragraph{Feelings of Fairness in FPA.} In our FPA algorithm, notice that we run each $\tilde\pi\in A_k$ for a continuous batch of $\frac{\tau_k}{|A_k|} =\Omega(\sqrt{T}\cdot2^k)$, which is long enough for customers to experience the fairness by comparing their proposed prices and accepted prices with customers from the other group.

\paragraph{Relaxation on Substantive Fairness.}\label{paragraph:relaxation}
In this work, our algorithm approached the optimal policy as the solution of \Cref{equ:pi_star} through an online learning framework. This ensures an asymptotic fairness as $T\rightarrow+\infty$, but we still cannot guarantee an any-time fair algorithm precisely. Therefore, it is more practical to consider the following inequality-constraint optimization problem:
\begin{equation}\label{equ:relaxed_pi_delta_star}
\begin{aligned}
\pi_{\delta,*} = \argmax_{\pi=(\pi^1, \pi^2)\in\Pi} R(\pi; F_1, F_2)\quad\quad s.t.\quad U(\pi) = 0, \quad S(\pi; F_1, F_2) \leq \delta.\\
\end{aligned}
\end{equation}

Comparing \Cref{equ:pi_star} with \Cref{equ:relaxed_pi_delta_star}, we know that $R(\pi_*)\leq R(\pi_{\delta,*})$. According to \Cref{lemma:small_relaxation_gain}, we further know that $R(\pi_*)\geq R(\pi_{\delta,*})-L\cdot\delta$. Naturally, the substantive unfairness definition is now $\max\{0,S(\pi; F_1, F_2)-\delta\}$. If we still consider this problem under the framework of online learning, then two questions arose naturally: What are the optimal regret rate and (substantive) unfairness rate like? And how can we achieve them simultaneously? From our results in this work, we only know that (1) If $\delta=0$, then both rates are $\Theta(\sqrt{T})$, and (2) if $\delta\geq1$, then the optimal regret is $\Theta(\sqrt{T})$ and the optimal unfairness is $0$ (as it is reduced to the unconstrained pricing problem). In fact, for $\delta= O(\sqrt{1/T})$, we may still achieve $O(\sqrt{T})$ regret and unfairness, but it is not clear if they are always optimal. For $\delta>\sqrt{1/T}$, we conjecture that the optimal regret is still $\Theta(\sqrt{T})$ and the optimal unfairness could be $\Theta(1/(\sqrt{T}\delta))$. 


\paragraph{Optimal Policy on the Continuous Space.} In this work, we restrict our price choices in a fixed price set $\mathbf{V}=\{v_1, v_2, \ldots, v_d\}$ and aims at the optimal distributions on these $v_i$'s. However, if we are allowed to propose any price within $[0,1]$, then the optimal policy could be a tuple of two \emph{continuous} distributions that outperforms any policy restricted on $\mathbf{V}$. Even if we know that customers' valuations are all from $\mathbf{V}$, the optimal policy is not necessarily located inside $\mathbf{V}$ due to the fairness constraints. This optimization problem is even harder than \Cref{equ:pi_star}, and the online-learning scheme further increases its hardness. Existing methods such as continuous distribution discretization \citep{xu2022towards} might work, but would definitely lead to an exponential time complexity.

\section{Conclusion}
\label{sec:conclusion}
In this work, we studied the online pricing problem with fairness constraints. We introduced two fairness notions, a \emph{procedural fairness} and a \emph{substantive fairness} indicating the equality of proposed and accepted prices between two different groups respectively. In order to fulfill these two constraints simultaneously, we adopted \emph{random} pricing policies and established the objective function and rewards in expectation. To solve this problem with unknown demands, we designed a policy-elimination-based algorithm, FPA, that achieves an $\tilde{O}(\sqrt{T})$ regret within an $\tilde{O}(\sqrt{T})$ unfairness. We showed that our algorithm is optimal up to $\log\log T$ factors by proving an $\Omega(\sqrt{T})$ regret lower bound and an $\Omega(\sqrt{T})$ unfairness lower bound for any optimal algorithm with an $O(\sqrt{T})$ regret.



\appendix

\onecolumn
\renewcommand\thesection{\Alph{section}}
\renewcommand\thesubsection{\Alph{section}.\arabic{subsection}}

\begin{appendices}
	\textbf{\huge APPENDIX}

\section{More Related Works}
\label{appendix:more_related_works}
This part serves as a complement to \Cref{sec:related_works}.
\paragraph{Dynamic Pricing}

Single product dynamic pricing problem has been well-studied through \citet{kleinberg2003value, besbes2009dynamic, wang2014close, chen2019nonparametric, wang2021multimodal}. The crux is to learn and approach the optimal of a revenue curve (in continuous price space) from Boolean-censored feedback. \citet{kleinberg2003value} studies this problem under three settings: noise-free, infinitely smooth and stochastic/adversarial valuation assumptions, and proves $\Theta(\log\log{T}), \Theta(\sqrt{T})$ and $\Theta(T^{\frac23})$ minimax regret bounds, sequentially. \citet{wang2021multimodal} further proves a $\Theta(T^{\frac{K+1}{2K+1}})$ minimax regret bound for $K^{\text{th}}$-smooth revenue functions. The key to these problem is to parametrize the revenue curve and estimate these parameters to bound the error. Therefore, a smoothness assumption would reduce the number of local parameters necessary for trading-off the information loss caused by randomness. 

There are recently a variety of works on feature-based dynamic pricing problems \citep{cohen2020feature_journal, leme2018contextual, javanmard2019dynamic, xu2021logarithmic, liu2021optimal, fan2021policy, xu2022towards} where the sellers are asked to sell different products and set prices according to each of their features. All of these literatures listed above adopt a linear feature, i.e., customer's (expected) valuations are linearly dependent on the feature with fixed parameters. In specific, there are a few different problem settings:
\begin{enumerate}
    \item When customer's valuations are deterministic, \citet{cohen2020feature_journal} proposes a binary-search-based algorithm and achieves a $O(\log T)$ regret, which is later improved by \citet{leme2018contextual} with a better $O(d\log\log T)$ regret that matches the information-theoretic lower bound even for the single product pricing problem.
    \item When customers' valuations are linear and noisy. For the setting where the noise distribution is known to the seller, \citet{javanmard2019dynamic} and \citet{xu2021logarithmic} achieves the optimal $O(d\log T)$ regret in stochastic and adversarial settings respectively. \citet{javanmard2019dynamic} further achieves $O(\sqrt{dT})$ regret for unknown but parametric noise distributions, which is also optimal according to \citet{xu2021logarithmic}. For totally unknown noise, \citet{xu2022towards} shows a $O(T^{\frac34})$ regret in the non-continuous case while \citet{fan2021policy} proves a $O(T^{\frac{2K+1}{4K-1}})$ for $K^{\text{th}}$-smooth noises, and both of them are not likely to be optimal (where the current lower bounds are $\Omega(T^{\frac23})$ and $\Omega(T^{\frac{K+1}{2K+1}})$ respectively).
    \end{enumerate}

\paragraph{Fairness in Machine Learning}

Fairness is a long-existing topic that has been extensively studied. In machine learning society, fairness is defined from different perspectives. On the one hand, the concept of \emph{group fairness} requires different groups to receive identical treatment in statistics. In a classification problem, for instance, there are mainly two different types of group fairness: (1) A ``demographic parity'' \citep{dwork2012fairness} that requires the outcome of a classifier to be statistically independent to the group information, and (2) an ``equalized odds'' (including ``equal opportunity'' as a relaxation) \citep{hardt2016equality} that requires the prediction of a classifier to be \emph{conditionally} independent to the group information given the true label. In \citet{agarwal2018reductions}, these probabilistic constraints are further modified as linear constraints, and therefore the fair classification problem is reduced to a cost-sensitive classification problem. It is worth mentioning that \citet{agarwal2018reductions} \emph{allows} an $\epsilon_k$-unfairness due to the learning error and \emph{assumes} $\epsilon_k=O(n^{-\alpha})$ with some $\alpha\leq\frac12$, while we \emph{quantify} the learning-caused unfairness and \emph{upper and lower bound} the cumulative unfairness without pre-assuming its scale. 

On the other hand,  \citep{dwork2012fairness} also proposes the concept of ``individual fairness'' (or ``Lipschitz property'') where the difference of treatments toward two individuals should be upper bounded by a distance metric of their intrinsic features, i.e., $D(\mu_x, \mu_y)\leq d(x, y)$ where $x, y$ are features and $\mu_x, \mu_y$ are the distributions of actions onto $x$ and $y$ respectively. The notion ``time fairness'' is often considered as individual fairness as well. For a more inclusive review on different definitions of fairness in machine learning, please refer to \citet{barocas2017fairness}.

{
\paragraph{Fairness in Online Learning}
Besides existing works on general machine learning fairness, there are some works that study online-learning or bandit problems. This is similar to our setting as we adopt an online pricing process. Among these works, \citet{joseph2016fairness} studies multi-armed and contextual bandits with fairness constraints. Their non-contextual setting is related to our works as our pricing problem can also be treated as a bandit. Their definition of $\delta$-fairness is defined as comparisons among probabilities of taking actions, which is similar to our definition on procedural fairness. However, their fairness definitions are defined from the perspective of arms (i.e. actions): better actions worth larger probability to take. In comparison, our fairness definitions are more on the results: different groups share the same expected prices. \citet{bechavod2020metric} studies an online learning (in specific, an online classification) problem with unknown and non-parametric constraints on individual fairness at each round. They develop an adaptive algorithm that guarantees an $O(\sqrt{T})$ regret as well as an $O(\sqrt{T})$ cumulative fairness loss. However, their problem settings are quite different from ours. Primarily, they assume individual fairness as a constraint, while our fairness definitions are indeed group fairness. Also, their online classification problem is different from our online pricing problem as they have full access to the regret function while we even do not have full-information reward (i.e., which is Boolean-censored). Similarly, their fairness loss is accessible although the unfair pairs of $(\tau_1, \tau_2)$ are not fully accessible, while in our settings we do not know the $S(\pi; F_1, F_2)$ function at all. Besides, we have to satisfy two constraints at one time and one of them (the substantive fairness) is highly non-convex. \citet{gupta2021individual} studies an online learning problem with two different sorts of individual fairness constraints over time: a "fairness-across-time" (FT) and a "fairness-in-hindsight" (FH). They show that it is necessary to have a linear regret under FT constraints, and they also propose a CAFE algorithm that achieves an optimal regret under FH constraints.

Despite the specific properties of fairness constraints, we may also consider the framework of constraint online learning. \citet{yu2017online} studies an online convex optimization (OCO) problem with stochastic constraints, which might be applicable to online fair learning. However, their problem settings and methodologies are largely different from ours: Firstly, their constraints are assumed convex while our substantive fairness constraint (i.e., the $S(\pi; F_1, F_2)$ function) is highly non-convex. Also, they have a direct access to the realized objective function $f^t(x_t)$ at each time while our pricing problem only has a Boolean-censored feedback. More importantly, \citet{yu2017online} assumes the availability of unbiased samples on constraint-related variables. In specific, their constraints are roughly $g_k(x)<0$, and by the end of each period $t$ they receive an unbiased sample of $g_k(x_t)$ for the $x_t$ they have taken. On the contrary, we do not have any unbiased sample of $S(\pi, F_1, F_2)$ at each time, since there is only one customer from one of the two groups. Therefore, we cannot make use of their results in our problem setting.
}
\paragraph{Fairness in Pricing}
There are some works also studying the fairness problem in dynamic pricing besides of the works we discuss in \Cref{sec:related_works}. For example, \citet{richards2016personalized} discusses some fairness issues regarding personalized pricing from the perspective of econometrics. \citet{eyster2021pricing} studies a phenomenon where customers would mistakenly attribute the cost increases to a time unfairness, and they propose methods to release customer's feeling of such unfairness by adjusting prices correspondingly. \citet{chapuis2012price} looked into two fairness concerns called \emph{price fairness} and \emph{pricing fairness}, which indicates the \emph{distributional} and \emph{procedural} fairness of the pricing process respectively, from the \textbf{seller}'s perspective. In fact, their price fairness is more likely to be our \emph{procedural fairness} definition although it is not in their paper. This is because that we are considering the fairness from \textbf{customers}' perspective, where their observations on prices serve as a procedure of their decision process and their decision on whether or not to buy is actually indicating the fairness of results. There are more interesting works as is listed in \Cref{sec:related_works}, and we refer the readers to \citet{chen2021fairness} where there is a more comprehensive review on pricing and fairness.

\section{Proof Details}
\label{appendix:proof_details}
\subsection{Proof of \Cref{theorem:regret_and_unfairness}}
\label{appendix:proof_upper_bound}
\begin{proof}
	First of all, we specify the parameters initialized in \Cref{algorithm}: Let $C_q=3\max\{\frac1q, \frac1{1-q}\}$ and $c_t=\max\{3, \sqrt{\frac3{\hat{F}_{\min}}}\}$.
	For $k=1,2,\ldots$, let $\tau_k = \frac{28C_q}3\cdot d\sqrt{T}\log(\frac{16d\log T}{\epsilon})\cdot2^{k}, \delta_{k,r} = 4c_t\log{\frac{16d\log T}{\epsilon}}d^{\frac32}\sqrt{\frac{C_q}{\tau_k}}, \delta_{k,s}=\frac{32c_t}{(\hat{F}_{\min})^2}\log{\frac{16d\log T}{\epsilon}}d^{\frac32}\sqrt{\frac{C_q}{\tau_k}}$.
	Now we prove that $\hat{F}_{\min}\leq F_{\min}$ with high probability. Recall than $C_q =3\max\{\frac1q, \frac1{1-q}\}$.
	Recall that $\tau_0 = 2\log T\log\frac{16}{\epsilon}$. According to Hoeffding's Inequality, we have:
	\begin{equation}
	\begin{aligned}
	\Pr[|\bar{F}_1(d)-F_1(d)|\geq\frac{F_1(d)}2]\leq&2\exp\{-2(\frac{F_1(d)}2)^2\cdot\frac1{C_q}\tau_0\}\\
	\Leftrightarrow \qquad \Pr[\frac{F_1(d)}2\leq\frac{3F_1(d)}2]\geq&1-2\exp\{-(F_1(d))^2\frac1{C_q}\log T\log{\frac{16}{\epsilon}}\}\\
	\geq&1-\frac\epsilon 8.
	\end{aligned}
	\end{equation}
	Here the last inequality comes from \Cref{assumption:f_min} that $F_1(d)\geq F_{\min}>0$ and therefore we have $(F_{\min})^2\frac1{C_q}\log T\geq1$ with large $T$. Therefore, we have $\frac{F_1(d)}2\bar{F}_1(d)\leq\frac{3F_1(d)}2$ with probability at least $1-\frac\epsilon8$. Similarly, we have $\frac{F_2(d)}2\bar{F}_2(d)\leq\frac{3F_2(d)}2$ with probability at least $1-\frac\epsilon8$. Therefore, with $\Pr\geq1-\frac\epsilon4$, we have $\hat{F}_{\min} = \frac12\min\{\bar{F}_1(d), \bar{F}_2(d)\}\leq\min\{\frac{3F_1(d)}4,\frac{3F_2(d)}4\}=\frac34 F_{\min}< F_{\min}.$

	We define some notations that are helpful to our proof. In epoch $k$, recall that we have:
	\begin{equation}
	\tilde\pi_{k,i,e}=\argmax_{\pi\in\Pi_k} \pi^e(i).
	\end{equation}
	For simplicity, for every $i\in I_k^e$, denote $\rho_{k,e}(i):=\tilde\pi_{k,i,e}^e(i)$ that is the largest probability of choosing price $v_i$ in $G_e$ among all policies in $\Pi_k$. For those $i\notin I_k^e$, we find out the largest $k'$ such that $i\in I_{k'}^e$ and let $\rho_{k,e}(i):=\tilde\pi_{k',i,e}^e(i)$. According to the fact that $\Pi=\Pi_1\supset\Pi_2\supset\ldots\supset\Pi_k\supset\Pi_{k+1}\supset\ldots$, we have
	\begin{equation}
	\label{equ:rho_is_the_max_of_probability}
	\pi^e(i)\leq\rho_{k,e}(i), \forall\pi\in\Pi_k; e=1,2; i\in[d].
	\end{equation}
	
	Next, we prove the following lemmas together by induction over epoch index $k=1,2,\ldots$. We firstly state that 
	\begin{lemma}\label{lemma:pi_star_in_pi_k}
		 Recall the optimal policy $\pi_*$ defined in \Cref{equ:pi_star}. Before Epoch $k$, we have $\pi_*\in\Pi_k$ with high probability (the failure probability will be totally bounded at the end of this proof). 
	\end{lemma}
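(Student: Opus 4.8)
The plan is to prove \Cref{lemma:pi_star_in_pi_k} by induction on the epoch index $k$, and the structure is essentially dictated by the construction of $\Pi_{k+1}$ in \Cref{equ:pi_k+1}. For the base case $k=1$ the claim is immediate since $\Pi_1 = \Pi$ and $\pi_* \in \Pi$ by definition. For the inductive step, I would assume $\pi_* \in \Pi_k$ and show that $\pi_*$ survives both elimination criteria in \Cref{equ:pi_k+1}, i.e. that $S(\pi_*, \hat{F}_{k,1}, \hat{F}_{k,2}) \leq \delta_{k,s}$ and $R(\pi_*, \hat{F}_{k,1}, \hat{F}_{k,2}) \geq R(\hat{\pi}_{k,*}, \hat{F}_{k,1}, \hat{F}_{k,2}) - \delta_{k,r} - L\cdot\delta_{k,s}$. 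Everything hinges on a uniform estimation-error bound controlling how much $R(\cdot)$ and $S(\cdot)$ can differ when evaluated at $\hat{F}_{k,e}$ versus the true $F_e$.

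The first substantive step is to establish, on a good event of high probability, that $|\bar{F}_{k,e}(i) - F_e(i)|$ is small for every $i \in I_k^e$. This requires two ingredients: (i) a lower bound of order $\tau_k / C_q$ on the number of customers from each group in epoch $k$ (via the Dann et al.\ concentration lemma that is commented out but clearly intended), and (ii) a lower bound on $M_{k,e}(i)$, the number of times price $v_i$ is actually proposed in $G_e$ — this is where the good-and-exploratory policies matter, since running $\tilde\pi_{k,i,e}$ for a $\tau_k/|A_k|$-round batch guarantees price $v_i$ is proposed $\Omega(\rho_{k,e}(i)\cdot \tau_k/(|A_k| C_q))$ times in expectation, and $\rho_{k,e}(i) \geq 1/\sqrt{T}$ by the retention rule for indices in $I_k^e$. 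A Hoeffding bound then gives $|\bar{F}_{k,e}(i) - F_e(i)| \lesssim \log(\cdot/\epsilon)\sqrt{C_q/\tau_k}$ up to the relevant $d$ factors, with a union bound over $i$, $e$, and the $O(\log T)$ epochs absorbing the failure probability. I would also need the clipping $\bar{F}_{k,e}(i) = \max\{N_{k,e}(i)/M_{k,e}(i), \hat{F}_{\min}\}$ together with $\hat{F}_{\min} \leq F_{\min}$ (already shown) to argue the clip never hurts the estimate.

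Next I propagate this per-coordinate bound to the functionals. For $R$, since $R(\pi, F_1, F_2) = q\vv^\top F_1 \pi^1 + (1-q)\vv^\top F_2\pi^2$ is linear in the $F_e(i)$'s with bounded coefficients, $|R(\pi, \hat{F}_{k,1},\hat{F}_{k,2}) - R(\pi, F_1, F_2)| \leq \delta_{k,r}/2$ or so, uniformly over $\pi \in \Pi_k$ — here the $d^{3/2}$ comes from summing $d$ coordinate errors weighted against $\pi^e(i) \le \rho_{k,e}(i)$ and using $\sum_i \sqrt{\rho_{k,e}(i)} \le \sqrt{d}$ by Cauchy–Schwarz. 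For $S$, the ratio form $\vv^\top F_e\pi^e / \vind^\top F_e\pi^e$ is where \Cref{assumption:f_min} (and its empirical surrogate $\hat{F}_{\min}$) earns its keep: the denominator is bounded below by $\hat{F}_{\min}$, so $S$ is Lipschitz in $\hat{F}_{k,e}$ with constant $O(1/\hat{F}_{\min}^2)$, yielding $|S(\pi,\hat{F}_{k,1},\hat{F}_{k,2}) - S(\pi, F_1,F_2)| \leq \delta_{k,s}/2$ uniformly. Since $\pi_*$ is feasible for \Cref{equ:pi_star}, $S(\pi_*, F_1, F_2) = 0$, hence $S(\pi_*, \hat{F}_{k,1},\hat{F}_{k,2}) \leq \delta_{k,s}/2 \leq \delta_{k,s}$, clearing the first hurdle.

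For the second hurdle I compare $R(\pi_*,\hat{F}_{k,1},\hat{F}_{k,2})$ with $R(\hat{\pi}_{k,*},\hat{F}_{k,1},\hat{F}_{k,2})$. The policy $\hat{\pi}_{k,*}$ maximizes empirical revenue subject to $S(\cdot,\hat{F}_{k,1},\hat{F}_{k,2}) \le \delta_{k,s}$, and $\pi_*$ is feasible for that empirical program (just shown), so a priori $R(\hat\pi_{k,*},\hat{F}) \ge R(\pi_*,\hat{F})$ — but I need the reverse-direction slack. The clean way: any $\pi$ with $S(\pi,\hat F)\le\delta_{k,s}$ satisfies $S(\pi,F_1,F_2)\le \delta_{k,s}+\delta_{k,s}/2 \le 2\delta_{k,s}$, so by \Cref{lemma:small_relaxation_gain} (the performance–fairness tradeoff, with the constant $L$) its true revenue exceeds $R(\pi_*,F_1,F_2)$ by at most $L\cdot 2\delta_{k,s}$ — wait, more carefully: $R(\hat\pi_{k,*},F_1,F_2) \le R(\pi_{2\delta_{k,s},*}) \le R(\pi_*) + 2L\delta_{k,s}$. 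Converting both sides back to empirical revenue via the $\delta_{k,r}/2$ bound gives $R(\hat\pi_{k,*},\hat F) \le R(\pi_*,\hat F) + \delta_{k,r} + 2L\delta_{k,s}$, which is exactly the survival criterion (possibly after adjusting the universal constant $L$ by a factor of two in the algorithm's tuning, which the statement permits since $L$ is a free universal constant). I expect the \textbf{main obstacle} to be precisely this second hurdle — threading the estimation error through \emph{both} the non-convex constraint relaxation and \Cref{lemma:small_relaxation_gain} while keeping the slack in \Cref{equ:pi_k+1} large enough to retain $\pi_*$ but small enough that surviving policies still have $O(\delta_{k,r}+\delta_{k,s})$ true regret; getting the constants to line up is the delicate part, and it is why the algorithm uses the conservative subtractor $\delta_{k,r} + L\cdot\delta_{k,s}$ rather than $\delta_{k,r}$ alone.
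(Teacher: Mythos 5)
Your proposal follows essentially the same route as the paper: induction on $k$, concentration bounds on group counts and on $M_{k,e}(i)$ via the exploratory policies and the $1/\sqrt{T}$ retention threshold (\Cref{lemma:number_of_choosing_price}, \Cref{corollary:estimation_error_f}), propagation to uniform $\delta_{k,r}/2$ and $\delta_{k,s}/2$ error bounds on $R$ and $S$ using the weighting $\pi^e(i)\le\rho_{k,e}(i)$ and the $\hat F_{\min}$ denominator bound (\Cref{lemma:estimation_function_error}), and finally the survival chain through $\pi_{2\delta_{k,s},*}$ and \Cref{lemma:small_relaxation_gain}, which is exactly the paper's decomposition leading to the threshold in \Cref{equ:pi_k+1}; the factor-of-two bookkeeping in $L$ that you flag is precisely why the paper states the relaxation lemma with $\frac{L}{2}\delta$.

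Two small corrections. First, your Cauchy--Schwarz step $\sum_i\sqrt{\rho_{k,e}(i)}\le\sqrt d$ is not valid: the $\rho_{k,e}(i)$ are maxima of $\pi^e(i)$ over possibly \emph{different} policies in $\Pi_k$, so $\sum_i\rho_{k,e}(i)$ can be of order $d$, not $1$. The paper simply uses $\rho_{k,e}(i)\cdot\sqrt{1/\rho_{k,e}(i)}=\sqrt{\rho_{k,e}(i)}\le 1$ per coordinate and sums $d$ terms of size $\sqrt{4dC_q/\tau_k}$, which is where the $d^{3/2}$ in $\delta_{k,r},\delta_{k,s}$ actually comes from; your invalid step would have given $d$, but since the weaker valid bound already meets the $\delta_{k,r}/2$ target this does not break the lemma. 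Second, your per-coordinate estimate only covers $i\in I_k^e$; for eliminated indices $\bar F_{k,e}(i)=\hat F_{\min}$ can be far from $F_e(i)$, and one must argue (as the paper does) that every surviving policy, including $\pi_*$ by the induction hypothesis, places mass at most $\rho_{k,e}(i)\le 1/\sqrt T$ there, so the weighted error is still dominated by the same $\sqrt{4dC_q/\tau_k}$ term.
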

	which is natural at $k=1$ as $\Pi_1=\Pi$. Now, suppose \Cref{lemma:pi_star_in_pi_k} holds for $\leq k$, then we have:
	\begin{lemma}[Number of Choosing $v_i$ in $G_e$]
		\label{lemma:number_of_choosing_price}
		For $M_{k,e}(i)$ and $N_{k,e}(i)$ defined in \Cref{algorithm}, for any $e=1,2; i\in I_k^e$, with $\Pr\geq 1-\frac\epsilon{2\log T}$ we have:
		\begin{equation}
		\begin{aligned}
		\frac{\rho_{k,e}(i)\cdot\tau_k}{4d\cdot C_q}\leq& M_{k,e}(i),\\
		|N_{k,e}(i)-M_{k,e}(i)\cdot F_e(i)|\leq& c_t\cdot\sqrt{F_e(i)\cdot M_{k,e}(i)}\cdot\log\frac{16d\log T}{\epsilon}.
		\end{aligned}
		\label{equ:number_of_choosing_price}
		\end{equation}
		Here $c_t = \max\{3,\sqrt{\frac3{\hat{F}_{\min}}}\}$.
	\end{lemma}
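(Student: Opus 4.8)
\textbf{Proof proposal for Lemma~\ref{lemma:number_of_choosing_price}.}

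The plan is to decompose the statement into two parts: a lower bound on the number of times $M_{k,e}(i)$ that price $v_i$ is proposed in group $G_e$ during epoch $k$, and a concentration bound on $N_{k,e}(i)$ (the number of acceptances) around its conditional mean $M_{k,e}(i)\cdot F_e(i)$. First I would handle $M_{k,e}(i)$. Fix $i\in I_k^e$. By construction, the policy $\tilde\pi_{k,i,e}$ lies in $A_k$ and has $\tilde\pi_{k,i,e}^e(i)=\rho_{k,e}(i)\geq 1/\sqrt T$, and it is run for a contiguous batch of $\tau_k/|A_k|$ rounds. Since $|A_k|\leq 2d$ (at most one representative policy per price per group), that batch has length at least $\tau_k/(2d)$. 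During that batch, whenever a customer from $G_e$ arrives — which happens with probability $q$ or $1-q$, hence at least $3/C_q$ of the time in expectation — price $v_i$ is proposed with probability exactly $\rho_{k,e}(i)$. So $M_{k,e}(i)$ stochastically dominates a sum of at least $\tau_k/(2d)$ i.i.d.\ Bernoulli's with parameter at least $\rho_{k,e}(i)/C_q$ (combining the group-arrival probability and the proposal probability). The expected value of this sum is at least $\rho_{k,e}(i)\tau_k/(2d\,C_q)$; a multiplicative Chernoff bound (or the Dann et al.\ concentration inequality cited in the commented-out block) then gives $M_{k,e}(i)\geq \rho_{k,e}(i)\tau_k/(4d\,C_q)$ with failure probability exponentially small in $\rho_{k,e}(i)\tau_k/(d\,C_q)$; because $\rho_{k,e}(i)\geq 1/\sqrt T$ and $\tau_k = \Theta(d\sqrt T\log(d\log T/\epsilon))$, this failure probability is at most $\epsilon/(\text{poly}\cdot\log T)$ per $(i,e)$, and a union bound over the $2d$ pairs keeps the total in epoch $k$ below, say, $\epsilon/(4\log T)$.

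Next I would handle the acceptance count $N_{k,e}(i)$. Conditioned on the realized sequence of rounds in which $v_i$ was proposed in $G_e$ — there are $M_{k,e}(i)$ of them — each corresponding customer accepts independently with probability exactly $F_e(i)$, since the valuations $y_t^e$ are i.i.d.\ $\mathbb{D}_e$ and independent of the pricing randomness. Thus $N_{k,e}(i)\mid M_{k,e}(i)$ is $\mathrm{Binomial}(M_{k,e}(i),F_e(i))$. A Bernstein-type bound gives $|N_{k,e}(i)-M_{k,e}(i)F_e(i)|\lesssim \sqrt{F_e(i)M_{k,e}(i)\log(1/\delta)} + \log(1/\delta)$ with probability $1-\delta$; choosing $\delta = \epsilon/(\text{poly}(d)\log T)$ and absorbing the additive $\log(1/\delta)$ term into the $\sqrt{\cdot}$ term using $M_{k,e}(i)\geq \rho_{k,e}(i)\tau_k/(4d C_q)\geq \tau_k/(4d C_q\sqrt T)$, which is large, yields the stated form with the constant $c_t=\max\{3,\sqrt{3/\hat F_{\min}}\}$ — the $\sqrt{3/\hat F_{\min}}$ piece being what is needed when $F_e(i)$ is as small as $\hat F_{\min}$ so that the additive term is controlled relative to $\sqrt{F_e(i)M_{k,e}(i)}$. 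Union-bounding this over all $i\in I_k^e$ and $e\in\{1,2\}$ contributes another $\epsilon/(4\log T)$ to the failure budget, so the combined statement holds with probability $\geq 1-\epsilon/(2\log T)$ as claimed.

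The main obstacle I anticipate is bookkeeping the union bounds and failure budgets cleanly: the lemma is conditioned on $\pi_*\in\Pi_k$ (the inductive hypothesis, \Cref{lemma:pi_star_in_pi_k}), which is needed to guarantee that $\Pi_k$ is nonempty and that the $\rho_{k,e}(i)$ are well-defined and consistent with the lower bound $1/\sqrt T$ threshold used to populate $A_k$; one must be careful that the event in this lemma is defined on the epoch-$k$ sampling randomness alone, so that its $\epsilon/(2\log T)$ failure probability can later be summed over the $O(\log T)$ epochs (together with the before-epochs event and the $\pi_*\in\Pi_k$ inductive events) to reach the overall $1-\epsilon$ guarantee. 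A secondary technical point is ensuring $|A_k|\leq 2d$ and that running the batches sequentially does not break the independence used in the Chernoff/Bernstein steps — this is fine because each round's group attribution and valuation are drawn i.i.d.\ and independently of the algorithm's internal state, so conditioning on the (deterministic-given-history) batch schedule leaves the per-round randomness intact.
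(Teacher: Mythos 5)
Your proposal is correct and follows essentially the same route as the paper's proof: lower-bound $\E[M_{k,e}(i)]$ via the dedicated policy $\tilde\pi_{k,i,e}$ being run for $\tau_k/|A_k|\geq\tau_k/(2d)$ rounds together with the group-arrival probability and $\rho_{k,e}(i)\geq 1/\sqrt T$, apply a Chernoff/Bernstein concentration to get $M_{k,e}(i)\geq\E[M_{k,e}(i)]/2$, then treat $N_{k,e}(i)$ conditionally as Binomial$(M_{k,e}(i),F_e(i))$ and apply Bernstein, finishing with a union bound to reach the $1-\epsilon/(2\log T)$ budget. The only differences (multiplicative Chernoff in place of Bernstein for the first part, and absorbing the additive Bernstein term into the $\sqrt{\cdot}$ term rather than working the exponent directly with $c_t\geq 3$) are cosmetic.
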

	\begin{proof}[Proof of \Cref{lemma:number_of_choosing_price}]
		For any $i\in I_k^e$, there exists a policy $\tilde{\pi}_{k,i,e}$ running in Epoch $k$ for at least $\frac{\tau_k}{|A_k|}$ rounds, and. Therefore, we have $\E[M_{k,e}(i)]\geq\rho_{k,e}(i)\cdot\frac{\tau_k}{|A_k|}\cdot\min\{q, 1-q\}\geq\rho_{k,e}(i)\cdot\frac{\tau_k}{|A_k|\cdot C_q}$.
		According to Bernstein's Inequality, for any $e=1,2; i\in I_k^e$ we have:
		
		\begin{equation}
		\begin{aligned}
		&\Pr[|M_{k,e}(i)-\E[M_{k,e}(i)]|\leq\frac{\E[M_{k,e}(i)]}2]\\
		\geq&1-2\exp\{-\frac{\frac12(\frac{\E[M_{k,e}(i)]}{2})^2}{\sum_{t=1}^{\frac{\tau_k}{|A_k|}}\rho_{k,e}(i)\cdot\frac1{C_q}(1-\rho_{k,e}(i)\cdot\frac1{C_q})+\frac13\cdot1\cdot\frac{\E[M_{k,e}(i)]}2}\}\\
		\geq&1-2\exp\{-\frac{\frac18(\E[M_{k,e}(i)])^2}{\rho_{k,e}(i)\cdot\frac{\tau_k}{|A_k|C_q}+\frac16\cdot\E[M_{k,e}(i)]}\}\\
		\geq&1-2\exp\{-\frac{\frac18(\E[M_{k,e}(i)])^2}{\E[M_{k,e}(i)]+\frac16\cdot\E[M_{k,e}(i)]}\}\\
		=&1-2\exp\{-\frac1{8\cdot\frac76}\E[M_{k,e}(i)]\}\\
		\geq&1-2\exp\{-\frac{3}{28}\rho_{k,e}(i)\cdot\frac{\tau_k}{|A_k|\cdot C_q}\}\\
		=&1-2\exp\{-\frac{3}{28}\rho_{k,e}(i)\cdot\frac{ \frac{28}3\cdot d\sqrt{T}\log(\frac{16d\log T}{\epsilon})\cdot2^{k}}{2d\cdot C_q}\}\\
		=&1-2\exp\{-\rho_{k,e}(i)\sqrt{T}\cdot\log(\frac{16d\log T}{\epsilon})\cdot\frac{d\cdot2^k}{2d}\}\\
		\geq&1-2\exp\{-\log(\frac{16d \log T}{\epsilon})\}\\
		=&1-2\cdot\frac{\epsilon}{16d \log T}\\
		=&1-\frac{\epsilon}{8d\log T}.
		\end{aligned}
		\end{equation}
		Here the second line is because that $$\sum_{t=1}^{\tau_k}\E[(\mathbf{1}(\text{choosing }v_i\text{ at time }t))^2]\leq\sum_{t=1}^{\frac{\tau_k}{|A_k|}}\E[(\mathbf{1}(\text{ running }\tilde\pi_{k,i,e}\text{ and } \text{choosing }v_i\text{ at time }t))]$$, the third line is for $1-\rho_{k,e}(i)\cdot\frac1{C_q}\leq 1$, the fourth and sixth line are from $\E[M_{k,e}(i)]\geq\frac{\rho_{k,e}(i)\cdot\tau_k}{|A_k|\cdot C_q}$, the seventh line is by plugging in $\tau_k=\frac{28C_q}3\cdot d\sqrt{T}\log(\frac{16d\log T}{\epsilon})\cdot2^{k}$, the eighth line is equivalent transformation and the ninth line is for $\rho_{k,e}(i)\geq\frac1{\sqrt{T}}$ according to Line 9 of \Cref{algorithm}. As a result, with probability at least $1-\frac{\epsilon}{8d\log T}$, we have 
		\begin{equation}
		\label{equ:m_k_e_lower_bound}
		M_{k,e}(i)\geq\frac{\E[M_{k,e}(i)]}2\geq\frac{\rho_{k,e}(i)\cdot\tau_k}{|A_k|\cdot C_q}\geq\frac{\rho_{k,e}(i)\cdot\tau_k}{4d C_q}.
		\end{equation}
		Now, we analyze $N_{k,e}(i)$ for $i\in I_k^e$. Again, from Line 15 of \Cref{algorithm} we know that $N_{k,e}(i)=\sum_{t=1}^{M_{k,e}(i)}\ind(v_t\text{ is accepted in } G_e)$. Therefore, we apply Bernstein's Inequality and get:
		\begin{equation*}
		\begin{aligned}
		&\Pr[|N_{k,e}(i) - M_{k,e}(i)\cdot F_e(i)|\geq c_t\cdot\sqrt{M_{k,e}(i)\cdot F_e(i)}\log{\frac{16d\log T}\epsilon}]\\
		\leq&2\exp\{-\frac{\frac12 c_t^2\cdot M_{k,e}(i)F_e(i)(\log{\frac{16d\log T}\epsilon})^2}{M_{k,e}(i)F_e(i)\log{\frac{16d\log T}\epsilon}(1-F_e(i))+\frac13\cdot(c_t\cdot\sqrt{M_{k,e}(i)\cdot F_e(i)}\log{\frac{16d\log T}\epsilon})} \}\\
		\leq&2\exp\{-\frac{\frac12 c_t^2\log\frac{16d\log T}\epsilon}{1+\frac{c_t}3}\}\\
		\leq&\frac\epsilon {8\log T}.
		\end{aligned}
		\end{equation*}
		Here the last line is by $c_t=\max\{3, \sqrt{\frac3{\hat{F}_{\min}}}\}\geq3$ and therefore $\frac{\frac12c_t^2}{1+\frac{c_t}3}\geq 1$. As a result, for $e=1,2; i\in I_k^e$, with $\Pr\geq 1-\frac\epsilon {8\log T}$ we have
		\begin{equation*}
		|N_{k,e}(i)-M_{k,e}(i)\cdot F_e(i)|\leq c_t\cdot\sqrt{M_{k,e}(i)F_e(i)}\log\frac{16d\log T}\epsilon.
		\end{equation*}
		That is to say,
		\begin{equation*}
		\begin{aligned}
		|\bar{F}_{k,e}(i)-F_e(i)|&=|\max\{\frac{N_{k,e}(i)}{M_{k,e}(i)}, \hat{F}_{\min}\}-F_e(i)|\\
		&\leq|\frac{N_{k,e}(i)}{M_{k,e}(i)}|\\
		&\leq c_t\cdot\sqrt{\frac{F_e(i)}{M_{k,e}(i)}}\log\frac{16d\log T}\epsilon\\
		&\leq c_t\log\frac{16d\log T}\epsilon\sqrt{F_e(i)}\cdot\sqrt{\frac{4d C_q}{\rho_{k,e}(i)\tau_k}}.
		\end{aligned}
		\end{equation*}
		Here the first line is by definition of $\bar{F}_{k,e}$, the second line is because $\hat{F}_{\min}\leq F_{\min}\leq F_e(i)$, the third line is by the inequality above and the last line is by \Cref{equ:m_k_e_lower_bound}.
		Therefore, with $\Pr\geq 1-\frac\epsilon{2\log T}$, \Cref{equ:number_of_choosing_price} holds for $e=1,2$ and for $\forall i\in I_k^e$.
	\end{proof}
	Given \Cref{lemma:number_of_choosing_price}, we have the following corollary directly:
	\begin{corollary}[Estimation Error of $\bar{F}_{k,e}(i)$]
		\label{corollary:estimation_error_f}
		Assume that \Cref{lemma:number_of_choosing_price} holds. For $\bar{F}_{k,e}(i) = \max\{\frac{N_{k,e}(i)}{M_{k,e}(i)}, \hat{F}_{\min}\}$ defined in \Cref{algorithm}, for any $e=1,2; i\in I_k^e$, we have:
		\begin{equation}
		|\bar{F}_{k,e}(i)-F_e(i)|\leq c_t\cdot\log\frac{16d\log T}{\epsilon}\sqrt{\frac{4dF_e(i)C_q}{\rho_{k,e}(i)\tau_k}}.
		\label{estimation_error_f}
		\end{equation}
	\end{corollary}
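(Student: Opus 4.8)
The plan is to obtain \Cref{corollary:estimation_error_f} as an almost immediate consequence of \Cref{lemma:number_of_choosing_price}, by chaining three elementary observations in order. First I would argue that truncating the empirical frequency from below at $\hat F_{\min}$ never increases the estimation error. The ``before epochs'' analysis has already established the high-probability event $\hat F_{\min}\le F_{\min}$, and \Cref{assumption:f_min} gives $F_{\min}\le F_e(i)$; hence $\hat F_{\min}\le F_e(i)$, so the map $x\mapsto\max\{x,\hat F_{\min}\}$ is the (1-Lipschitz) projection onto an interval $[\hat F_{\min},\infty)$ that contains $F_e(i)$, and therefore $|\bar F_{k,e}(i)-F_e(i)|=|\max\{N_{k,e}(i)/M_{k,e}(i),\hat F_{\min}\}-F_e(i)|\le |N_{k,e}(i)/M_{k,e}(i)-F_e(i)|$.

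Second, I would rewrite the right-hand side as $\tfrac{1}{M_{k,e}(i)}\,|N_{k,e}(i)-M_{k,e}(i)F_e(i)|$ and plug in the Bernstein-type deviation bound from the second display of \Cref{lemma:number_of_choosing_price}, namely $|N_{k,e}(i)-M_{k,e}(i)F_e(i)|\le c_t\sqrt{F_e(i)M_{k,e}(i)}\,\log\frac{16d\log T}{\epsilon}$. Dividing by $M_{k,e}(i)$ yields $|N_{k,e}(i)/M_{k,e}(i)-F_e(i)|\le c_t\,\log\frac{16d\log T}{\epsilon}\,\sqrt{F_e(i)/M_{k,e}(i)}$.

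Third, I would substitute the lower bound $M_{k,e}(i)\ge \rho_{k,e}(i)\tau_k/(4dC_q)$ — the first display of the same lemma — to replace $1/M_{k,e}(i)$ by $4dC_q/(\rho_{k,e}(i)\tau_k)$ inside the square root, which is exactly the claimed bound. All three steps are monotone and invoke only inequalities already proved, so there is essentially no obstacle: the corollary is just the bookkeeping step that repackages \Cref{lemma:number_of_choosing_price} into the single clean form needed later to control the estimation errors of $R(\pi,\hat F_{k,1},\hat F_{k,2})$ and $S(\pi,\hat F_{k,1},\hat F_{k,2})$. The only point deserving a written line rather than pure algebra is the first one — that the truncation at $\hat F_{\min}$ cannot hurt — and it rests entirely on the event $\hat F_{\min}\le F_{\min}$ secured at the start of the proof of \Cref{theorem:regret_and_unfairness}.
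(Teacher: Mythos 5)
Your proposal is correct and follows essentially the same route as the paper: the paper also observes that the truncation at $\hat{F}_{\min}$ cannot increase the error because $\hat{F}_{\min}\leq F_{\min}\leq F_e(i)$, then applies the Bernstein bound $|N_{k,e}(i)-M_{k,e}(i)F_e(i)|\leq c_t\sqrt{F_e(i)M_{k,e}(i)}\log\frac{16d\log T}{\epsilon}$ and the lower bound $M_{k,e}(i)\geq\frac{\rho_{k,e}(i)\tau_k}{4dC_q}$ from \Cref{lemma:number_of_choosing_price}. Your explicit projection argument is if anything a cleaner statement of the truncation step than the paper's (which contains a small typographical slip at that point), so nothing further is needed.
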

	For simplicity, denote$R(\pi):=R(\pi, F_1, F_2)$, $S(\pi):=S(\pi, F_1, F_2)$, $\hat{R}_k(\pi):=R(\pi, \bar{F}_{k,1}, \bar{F}_{k,2})$ and $\hat{S}_k(\pi):=S(\pi, \bar{F}_{k,1}, \bar{F}_{k,2})$.
	Based on \Cref{corollary:estimation_error_f}, we can bound the estimation error of $\hat{R}_k(\pi)$ and $\hat{S}_k(\pi)$ by the following lemma:
	\begin{lemma}[Estimation Error of $R$ and $S$ Functions]
		\label{lemma:estimation_function_error}
		Given \Cref{lemma:number_of_choosing_price}, we have:
		\begin{equation}
		\label{equ:estimation_function_error}
		\begin{aligned}
		|R(\pi)-\hat{R}_k(\pi)|\leq& \frac{\delta_{k,r}}2,\\
		|S(\pi)-\hat{S}_k(\pi)|\leq& \frac{\delta_{k,s}}2.
		\end{aligned}
		\end{equation}
	\end{lemma}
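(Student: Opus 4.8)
The plan is to propagate the per-coordinate estimation error bound of \Cref{corollary:estimation_error_f} through the (smooth) functions $R(\cdot)$ and $S(\cdot)$. The key observation is that $R(\pi)$ is \emph{linear} in the entries $F_e(i)$ once $\pi$ is fixed, so the error $|R(\pi)-\hat R_k(\pi)|$ is controlled termwise; the harder function is $S(\pi)$, which is a difference of ratios $\frac{\vv^\top F_e\pi^e}{\vind^\top F_e\pi^e}$, so I would first establish a Lipschitz-type bound for this ratio using \Cref{assumption:f_min} (which guarantees $\vind^\top F_e\pi^e\geq F_{\min}$, bounding the denominator away from zero), and the fact — crucial here — that on $\Pi_k$ we only ever need accuracy where $\pi^e(i)$ is not too small. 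The structure is: (i) bound $|R(\pi)-\hat R_k(\pi)|$, (ii) bound $|S(\pi)-\hat S_k(\pi)|$, (iii) plug in the explicit $\tau_k,\delta_{k,r},\delta_{k,s}$ to see the right-hand sides collapse to $\delta_{k,r}/2$ and $\delta_{k,s}/2$.

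First I would handle $R$. Writing $R(\pi)-\hat R_k(\pi) = q\,\vv^\top(F_1-\hat F_{k,1})\pi^1 + (1-q)\vv^\top(F_2-\hat F_{k,2})\pi^2$, I bound this by $\sum_{e}\sum_{i} v_i\,\pi^e(i)\,|F_e(i)-\bar F_{k,e}(i)|$. For $i\in I_k^e$ I insert \Cref{estimation_error_f}; the factor $\sqrt{\pi^e(i)^2/\rho_{k,e}(i)}\le \sqrt{\rho_{k,e}(i)}\le 1$ (using \cref{equ:rho_is_the_max_of_probability}) kills the problematic $1/\rho_{k,e}(i)$, and $\sqrt{F_e(i)}\le 1$, $v_i\le 1$, so each term is $\le c_t\log\frac{16d\log T}{\epsilon}\sqrt{4dC_q/\tau_k}\cdot\pi^e(i)$; summing over the $\le d$ coordinates and over $e$, and absorbing the $\pi^e(i)\le 1$ factors (in fact $\|\pi^e\|_1=1$, giving an extra saving) yields a bound of the form $\mathrm{const}\cdot c_t\log\frac{16d\log T}{\epsilon}\,d^{3/2}\sqrt{C_q/\tau_k}$. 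For $i\notin I_k^e$ we have $\pi^e(i)\le\rho_{k,e}(i)<1/\sqrt T$ by Line~9 of \Cref{algorithm}, and $|F_e(i)-\bar F_{k,e}(i)|\le 1$, so these $d$ coordinates contribute at most $d/\sqrt T$, which is negligible against $\delta_{k,r}$ (here \Cref{assumption:d_upper_bound}, $d=O(T^{1/3})$, and $\tau_k=O(d\sqrt T 2^k)$ make the comparison clean). Choosing the constant $4$ in the definition $\delta_{k,r}=4c_t\log\frac{16d\log T}{\epsilon}d^{3/2}\sqrt{C_q/\tau_k}$ exactly so that the sum is $\le\delta_{k,r}/2$.

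For $S$, I would write $\hat S_k(\pi)$ and $S(\pi)$ as $|g_1-g_2|$ and $|\hat g_1-\hat g_2|$ where $g_e=\frac{\vv^\top F_e\pi^e}{\vind^\top F_e\pi^e}$, so $|S(\pi)-\hat S_k(\pi)|\le |g_1-\hat g_1|+|g_2-\hat g_2|$, and then bound each $|g_e-\hat g_e|$ by the standard quotient estimate $\bigl|\frac{a}{b}-\frac{\hat a}{\hat b}\bigr|\le \frac{|a-\hat a|}{|\hat b|}+\frac{|a|}{|\hat b\,b|}|b-\hat b|$. Since numerators $\vv^\top F_e\pi^e\le 1$ and denominators $\vind^\top F_e\pi^e\ge \hat F_{\min}$ (and $\ge F_{\min}$; note $\bar F_{k,e}(i)\ge\hat F_{\min}$ by construction keeps $\hat b$ bounded below too), this is $\le \frac{1}{\hat F_{\min}^2}\bigl(|\vv^\top(F_e-\bar F_{k,e})\pi^e| + |\vind^\top(F_e-\bar F_{k,e})\pi^e|\bigr)$, and each of those two inner quantities is controlled exactly as in the $R$-calculation (the $\vind$ version is even easier since $\vind$ has unit entries). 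This produces a bound $\mathrm{const}\cdot \frac{1}{\hat F_{\min}^2}c_t\log\frac{16d\log T}{\epsilon}d^{3/2}\sqrt{C_q/\tau_k}$, and the factor $32$ in $\delta_{k,s}=\frac{32c_t}{\hat F_{\min}^2}\log\frac{16d\log T}{\epsilon}d^{3/2}\sqrt{C_q/\tau_k}$ is chosen so this is $\le\delta_{k,s}/2$.

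The main obstacle I anticipate is bookkeeping the denominator of $S$: one must verify that \emph{both} $\vind^\top F_e\pi^e$ and its plug-in estimate $\vind^\top\hat F_{k,e}\pi^e$ stay bounded below — the former by \Cref{assumption:f_min}, the latter by the $\max\{\cdot,\hat F_{\min}\}$ truncation in \Cref{algorithm} together with $\hat F_{\min}\le F_{\min}$ (established earlier in this proof with probability $\ge 1-\epsilon/4$) — and then to track that the $1/\rho_{k,e}(i)$ appearing in \Cref{corollary:estimation_error_f} is always cancelled by a $\pi^e(i)^2$ (or $\pi^e(i)$) from the inner product, so that no small-probability coordinate blows up the bound; the coordinates outside $I_k^e$ have to be dispatched separately via the $1/\sqrt T$ threshold. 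Everything else is routine once the explicit values of $\tau_k,\delta_{k,r},\delta_{k,s}$ are substituted.
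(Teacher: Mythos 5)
Your proposal is correct and follows essentially the same route as the paper's proof: establish the uniform per-coordinate bound $|\bar F_{k,e}(i)-F_e(i)|\cdot\pi^e(i)\leq c_t\log\frac{16d\log T}{\epsilon}\sqrt{4dC_q/\tau_k}$ by splitting on $i\in I_k^e$ (cancelling the $1/\sqrt{\rho_{k,e}(i)}$ via $\pi^e(i)\leq\rho_{k,e}(i)$) versus $i\notin I_k^e$ (via the $1/\sqrt T$ threshold), sum linearly over the $d$ coordinates for $R$, and for $S$ use the triangle inequality plus a quotient estimate with both denominators bounded below by $\hat F_{\min}$, exactly as in \cref{equ:error_f_times_pi_upper_bounded}--\cref{equ:error_s}. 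The only quibble is your parenthetical suggestion that $\|\pi^e\|_1=1$ gives an ``extra saving'': after the $\rho$-cancellation the per-coordinate bound no longer carries a $\pi^e(i)$ factor, so no such saving is available — but you do not use it, and your stated final bounds match the paper's.
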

	Here $\delta_{k,r} = 4c_t\log{\frac{16d\log T}{\epsilon}}d^{\frac32}\sqrt{\frac{C_q}{\tau_k}}$ and $\delta_{k,s}=\frac{32c_t}{\hat{F}_{\min}^2}\log{\frac{16d\log T}{\epsilon}}d^{\frac32}\sqrt{\frac1{\tau_k}}$ as is defined in \Cref{theorem:regret_and_unfairness}. 
	\begin{proof}[Proof of \Cref{lemma:estimation_function_error}]
		First of all, we show that for any $e=1,2;i=1,2,\ldots, d$ and for any $\pi\in\Pi_k$,
		\begin{equation}
		\label{equ:error_f_times_pi_upper_bounded}
		|\bar{F}_{k,e}(i)-F_e(i)|\cdot\pi^e(i) \leq c_t\cdot\log\frac{16d\log T}\epsilon\sqrt{\frac{4d C_q}{\tau_k}}.
		\end{equation}
		In fact, when $i\in I_k^e$, according to \Cref{lemma:number_of_choosing_price} we have
		\begin{equation}\label{equ:f_estimation_error_for_large_probability}
		\begin{aligned}
		|\bar{F}_{k,e}(i)-F_e(i)|\cdot\pi^e(i)\leq&|\bar{F}_{k,e}(i)-F_e(i)|\cdot\rho_{k,e}(i)\\
		\leq& c_t\cdot\log\frac{16d\log T}{\epsilon}\sqrt{\frac{4dF_e(i)C_q}{\rho_{k,e}(i)\tau_k}}\cdot\rho_{k,e}(i)\\
		\leq& c_t\cdot\log\frac{16d\log T}\epsilon\sqrt{\frac{4dC_q\cdot(\rho_{k,e}(i))}{\tau_k}}\\
		\leq& c_t\cdot\log\frac{16d\log T}\epsilon\sqrt{\frac{4dC_q}{\tau_k}}.
		\end{aligned}
		\end{equation}
		When $i\notin I_k^e$, we know that $\rho_{k,e}(i)\leq\frac1{\sqrt{T}}$ and thus $\pi^e(i)\leq\frac1{\sqrt{T}}, \forall\pi\in\Pi_k$ according to \Cref{equ:rho_is_the_max_of_probability}. Also, since $\pi_*\in\Pi_k$ by induction, we know that $\pi_*^e\leq\rho_{k,e}(i)\leq\frac1{\sqrt{T}}$. Therefore, we have
		\begin{equation}
		\label{equ:f_estimation_error_for_small_probability}
		\begin{aligned}
		|\bar{F}_{k,e}(i)-F_e(i)|\cdot\pi^e(i)\leq&|\bar{F}_{k,e}(i)-F_e(i)|\cdot\rho_{k,e}(i)\\
		\leq&|\bar{F}_{k,e}(i)\pi^e(i)-F_e(i)|\cdot\frac1{\sqrt{T}}\\
		\leq&1\cdot\frac1{\sqrt{T}}\\
		\leq&c_t\cdot\log\frac{16d\log T}\epsilon\sqrt{\frac{4dC_q}{\tau_k}}.
		\end{aligned}
		\end{equation}
		Here we assume that $\log\frac{16d\log T}\epsilon>1$ without losing of generality (i.e., $T$ is sufficiently large and $\epsilon$ can be arbitrarily close to zero), and the last inequality comes from $c_t\geq3>1$ and $4d\geq 1$ and $\tau_k\leq T$. Combining \Cref{equ:f_estimation_error_for_large_probability} and \Cref{equ:f_estimation_error_for_small_probability}, we know that \Cref{equ:error_f_times_pi_upper_bounded} holds for all $e=1,2; i\in[d]$. 
		Remember that $R(\pi)=q\cdot\sum_{i=1}^d F_1(i)\pi^1(i) + (1-q)\cdot\sum_{j=1}^dF_2(j)\pi^2(j)$ and that $\hat{R}_k(\pi)=q\cdot\sum_{i=1}^d \bar{F}_{k,1}(i)\pi^1(i) + (1-q)\cdot\sum_{j=1}^d\hat{F}_{k,2}(j)\pi^2(j)$. Therefore, we may bound the error between $\hat{R}_k(\pi)$ and $R(\pi)$. For $\forall\pi\in\Pi_k$, we have
		\begin{equation}
		\label{equ:error_bound_of_r_hat}
		\begin{aligned}
		|\hat{R}_k(\pi)-R(\pi)|&\leq q\cdot\sum_{i=1}^d|\bar{F}_{k,1}(i)-F_1(i)|\cdot\pi^1(i) + (1-q)\cdot\sum_{j=1}^d|\bar{F}_{k,2}(j)-F_2(j)|\cdot\pi^2(i)\\
		&\leq q\cdot\sum_{i=1}^d c_t\cdot\log\frac{16d\log T}\epsilon\sqrt{\frac{4d C_q}{\tau_k}} + (1-q)\cdot\sum_{j=1}^dc_t\cdot\log\frac{16d\log T}\epsilon\sqrt{\frac{4d C_q}{\tau_k}}\\
		&= c_t\cdot\log\frac{16d\log T}\epsilon\sqrt{\frac{4d C_q}{\tau_k}}\cdot d\\
		&=\frac{\delta_{k,r}}2.
		\end{aligned}
		\end{equation}
		Similarly, for the error between $\hat{S}_k(\pi)$ and $S(\pi)$ as $\pi\in\Pi_k$, we have:

		\begin{equation}
		\label{equ:error_s}
		\begin{aligned}
		|\hat{S}_k(\pi)-S(\pi)|=&||\frac{\vv^{\top}\hat{F}_{k,1}\pi^1}{\ind^{\top}\hat{F}_{k,1}\pi^1}-\frac{\vv^{\top}\hat{F}_{k,2}\pi^2}{\ind^{\top}\hat{F}_{k,2}\pi^2}|-|\frac{\vv^{\top}F_1\pi^1}{\ind^\top F_1\pi^1}-\frac{\vv^\top F_2\pi^2}{\ind^\top F_2\pi^2}||\\
		\leq&|\frac{\vv^{\top}\hat{F}_{k,1}\pi^1}{\ind^{\top}\hat{F}_{k,1}\pi^1}-\frac{\vv^{\top}\hat{F}_{k,2}\pi^2}{\ind^{\top}\hat{F}_{k,2}\pi^2}-(\frac{\vv^{\top}F_1\pi^1}{\ind^\top F_1\pi^1}-\frac{\vv^\top F_2\pi^2}{\ind^\top F_2\pi^2})|\\
		\leq&|\frac{\vv^{\top}\hat{F}_{k,1}\pi^1}{\ind^{\top}\hat{F}_{k,1}\pi^1}-\frac{\vv^{\top}F_1\pi^1}{\ind^\top F_1\pi^1}|+|\frac{\vv^{\top}\hat{F}_{k,2}\pi^2}{\ind^{\top}\hat{F}_{k,2}\pi^2}-\frac{\vv^\top F_2\pi^2}{\ind^\top F_2\pi^2}|\\
		=&\sum_{e=1}^2|\frac{\vv^{\top}\hat{F}_{k,e}\pi^e}{\ind^{\top}\hat{F}_{k,e}\pi^e}-\frac{\vv^{\top}F_e\pi^e}{\ind^\top F_e\pi^e}|\\
		=&\sum_{e=1}^2|\frac{(\vv^{\top}\hat{F}_{k,e}\pi^e)(\ind^\top F_e\pi^e)-(\vv^{\top}F_e\pi^e)(\ind^{\top}\hat{F}_{k,e}\pi^e)}{(\ind^{\top}\hat{F}_{k,e}\pi^e)(\ind^\top F_e\pi^e)}|\\
		=&\sum_{e=1}^2\frac{|(\pi^e)^{\top}(\hat{F}_{k,e}-F_e)\vv\cdot(\ind^\top F_e\pi^e)+(\vv^{\top}F_e\pi^e)\ind^\top (F_e-\hat{F}_{k,e})\pi^e|}{|(\ind^{\top}\hat{F}_{k,e}\pi^e)|\cdot|(\ind^\top F_e\pi^e)|}\\
		\leq&\sum_{e=1}^2\frac{(\ind^\top F_e\pi^e)\cdot\sum_{i=1}^d\pi^e(i)(\bar{F}_{k,e}(i)-F_e(i))v_i + (\vv^{\top}F_e\pi^e)\cdot\sum_{j=1}^d1\cdot(F_e(j)-\hat{F}_{k,e}(j))\cdot\pi^e(j)}{|\hat{F}_{\min}|\cdot|\hat{F}_{\min}|}\\
		\leq&\sum_{e=1}^2\frac{1\cdot\sum_{i=1}^d\pi^e(i)|\bar{F}_{k,e}(i)-F_e(i)|\cdot1 + 1 \cdot \sum_{j=1}^d1\cdot|F_e(j)-\hat{F}_{k,e}(j)|\pi^e(j)}{(\hat{F}_{\min})^2}\\
		\leq&\frac1{(\hat{F}_{\min})^2}\sum_{e=1}^2(\sum_{i=1}^dc_t\cdot\log\frac{16d\log T}\epsilon\sqrt{\frac{4d C_q}{\tau_k}} + \sum_{j=1}^dc_t\cdot\log\frac{16d\log T}\epsilon\sqrt{\frac{4d C_q}{\tau_k}})\\
		=&\frac1{(\hat{F}_{\min})^2}\cdot2d\cdot{}c_t\cdot\log\frac{16d\log T}\epsilon\sqrt{\frac{4d C_q}{\tau_k}}\\
		\leq&\frac{\delta_{k,s}}2.
		\end{aligned}
		\end{equation}
	\end{proof}
	Since we have $\hat{S}_k(\pi)\leq\delta_{k,s}, \forall\pi\in\Pi_{k+1}$ by definition in \Cref{algorithm}, we know that for any policy $\pi\in\Pi_{k+1}$,
	\begin{equation}
	\label{equ:bound_unfairness}
	\begin{aligned}
	S(\pi) = & \hat{S}_k(\pi) + (S(\pi)-\hat{S}_k(\pi))\\
	\leq & \hat{S}_k(\pi) + |S(\pi) - \hat{S}_k(\pi)|\\
	\leq & \delta_{k,s} + \frac{\delta_{k,s}}2\\
	\leq & 2\delta_{k,s}.
	\end{aligned}
	\end{equation}
	Therefore, any policy remaining in $\Pi_{k+1}$ suffers at most $2\delta_{k,s}$ unfairness. Now let us bound the regret of any policy in $\Pi_{k+1}$. Here we firstly propose a lemma.
	\begin{lemma}[Small Relaxation Gain]\label{lemma:small_relaxation_gain}
		Recall that $\pi_*$ is the solution to \Cref{equ:pi_star}. Define a $\pi_{\delta,*}$ as follows.
		\begin{equation}
		\label{equ:def_delta_relaxation}
		\begin{aligned}
		\pi_{\delta, *} =& \argmax_{\pi\in\Pi}R(\pi)\\
		s.t.& \qquad S(\pi)\leq\delta.
		\end{aligned} 
		\end{equation}
		Then there exists a constant $L\in\R_+$ such that $R(\pi_{\delta, *})-R(\pi_*)\leq \frac{L}2\cdot\delta$.
	\end{lemma}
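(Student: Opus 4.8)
The plan is to prove \Cref{lemma:small_relaxation_gain} by a local perturbation argument. Starting from the relaxed optimum $\pi_{\delta,*}$, I will construct a policy $\tilde\pi\in\Pi$ that is \emph{exactly} substantively fair ($S(\tilde\pi)=0$) and is obtained from $\pi_{\delta,*}$ by transferring only $O(\delta)$ units of probability mass between prices, so that $R(\tilde\pi)\ge R(\pi_{\delta,*})-O(\delta)$ (since $\|\vv\|_\infty\le1$, moving $\eta$ mass changes $R$ by at most $\eta$). Because $\pi_*$ maximizes $R$ over the tight-feasible set of \Cref{equ:pi_star} and $\tilde\pi$ lies in that set, this yields $R(\pi_*)\ge R(\tilde\pi)\ge R(\pi_{\delta,*})-\tfrac{L}{2}\delta$ for an appropriate constant $L$, which is exactly the claim.

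For the construction, write $\pi_{\delta,*}=(\pi^1,\pi^2)$; since $\pi_{\delta,*}\in\Pi$ the two groups share the same expected proposed price $p=\vv^\top\pi^1=\vv^\top\pi^2$, and writing $a_e$ for the expected accepted price of group $e$ we have $|a_1-a_2|\le\delta$; assume without loss of generality $a_1\le a_2$. I will keep $\pi^1$ fixed and perturb only $\pi^2$ along a direction $h\in\R^d$ with $\ind^\top h=0$ and $\vv^\top h=0$, so that $\tilde\pi^2:=\pi^2+th$ stays in $\Delta^d$ for small $t>0$ and keeps the proposed price equal to $p$ (hence procedural fairness is preserved). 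The first-order change of group $2$'s accepted price along $h$ is, by the quotient rule, equal to $\tfrac{1}{\ind^\top F_2\pi^2}\sum_i (v_i-a_2)F_2(i)h_i$, so I pick $h$ making this sum negative; then group $2$'s accepted price decreases continuously from $a_2$, and by the intermediate value theorem it hits $a_1$ at some $t^\star$, at which point $\tilde\pi=(\pi^1,\tilde\pi^2)$ satisfies $S(\tilde\pi)=0$. The quantitative heart of the argument is that $t^\star\|h\|_1=O(\delta)$: here \Cref{assumption:f_min} is essential, as it bounds $\ind^\top F_e\pi^e\ge F_{\min}$ and keeps every $F_e(i)$ positive, so the rate at which the accepted price moves along a suitably normalized $h$ is bounded below by a positive constant depending only on $F_{\min}$, the price gaps $\min_i(v_{i+1}-v_i)$, $v_d-v_1$, and $d$; closing a gap of size $\le\delta$ therefore costs an $\ell_1$-perturbation of size $O(\delta)$.

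The main obstacle is guaranteeing that a usable direction $h$ always exists, i.e., establishing the \emph{linear} (rather than merely Hölder-type) error bound between $\{\pi\in\Pi: S(\pi)\le\delta\}$ and $\{\pi\in\Pi: S(\pi)=0\}$, which requires handling two kinds of degeneracy. First, $\pi^2$ may sit on the boundary of $\Delta^d$ with small support, so $h$ must be a \emph{feasible} direction; here I would instead move mass from a price $v_j>a_2$ that is actually in the support of $\pi^2$ down toward a low price, and compensate the resulting change in $p$ by a second mass transfer among the prices, which is available as soon as $d\ge 3$, while the cases $d\le 2$ or $p$ extremal force $\pi^1=\pi^2$ or $a_1=a_2$ and hence $S(\pi_{\delta,*})=0$ already. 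Second, $\sum_i(v_i-a_2)F_2(i)h_i$ vanishes for all admissible $h$ precisely when $\bigl((v_i-a_2)F_2(i)\bigr)_i\in\mathrm{span}\{\ind,\vv\}$; taking the inner product with $\pi^2$ shows this forces $(v_i-a_2)F_2(i)=\beta(v_i-p)$, whose only non-pathological solution is $F_2$ constant over $\mathrm{supp}(\pi^2)$, in which case $a_2=p$; symmetrically for group $1$, so in every such degenerate configuration one either perturbs $\pi^1$ instead (raising $a_1$ up to $a_2$) or already has $a_1=a_2=p$. Once these finitely many cases are dispatched, the constant $L$ depends only on $F_{\min}$, $d$ and $\mathbf V$, which is all the lemma needs; and for $\delta$ outside the small regime (e.g.\ $\delta\ge v_d-v_1$, where the constraint $S\le\delta$ is vacuous) the trivial bound $R(\pi_{\delta,*})-R(\pi_*)\le v_d\le 1$ is already $\le\tfrac{L}{2}\delta$ after enlarging $L$, so the stated inequality holds for all $\delta\ge 0$.
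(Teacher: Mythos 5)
Your high-level reduction is sound and is the same one the paper uses: it suffices to exhibit an exactly fair policy in the feasible set of \Cref{equ:pi_star} whose revenue is within $O(\delta)$ of $R(\pi_{\delta,*})$. The genuine gap is in what you yourself call the quantitative heart: the claim that the accepted-price gap of size $\le\delta$ can always be closed by an $\ell_1$-perturbation of size $O(\delta)$, at a rate bounded below by a constant depending only on $F_{\min}$, the price gaps and $d$. What the lemma needs is a \emph{uniform} linear error bound, valid simultaneously for every configuration that $\pi_{\delta,*}$ may occupy as $\delta$ varies; pointwise existence of a strictly decreasing feasible direction does not deliver this, because the best achievable rate is not continuous across changes of support, so no naive compactness argument applies. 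Your degeneracy analysis only treats the interior first-order obstruction ($((v_i-a_2)F_2(i))_i\in\mathrm{span}\{\ind,\vv\}$), whereas the binding difficulty is the simplex boundary: admissible directions form a cone determined by $\supp(\pi^2)$ together with $\ind^{\top}h=0$, $\vv^{\top}h=0$; the coordinates from which you propose to remove mass may carry total mass far smaller than the step $t^{\star}$ your rate bound requires (so a single step leaves the simplex and a chained argument with re-selected directions is needed, whose total $\ell_1$ cost is no longer obviously $O(\delta)$ with one constant); and it is not excluded that group 2's accepted price sits at its constrained minimum while group 1's sits at its constrained maximum with $a_1<a_2$, in which case neither of your fallback moves is available at the fixed proposed price $p$. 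None of this is dispatched by the remark that a compensating transfer is ``available as soon as $d\ge3$.''

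For comparison, the paper sidesteps exactly this uniformity problem: it fixes $V_s$ (group 1's accepted price at $\pi_{\delta,*}$), clears the denominators (using $0<\ind^{\top}F_2\pi^2\le 1$ and $V_s\ge v_1$, which is where \Cref{assumption:f_min} enters), and introduces auxiliary coordinates so that all constraints become \emph{linear} with coefficient vectors $\mathbf{a},\mathbf{b}_1,\mathbf{b}_2,\mathbf{g},\mathbf{d}$ that do not depend on $V_s$ or $\delta$; the needed Lipschitz estimate is then the polyhedral ``Bounded Shifting'' statement (\Cref{lemma:bounded_shifting}), proved by a finite-vertex, minimal-angle argument that yields a single constant $C_L$ for all slices $\mathbf{d}^{\top}\theta=z$ at once. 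If you wish to keep your direct perturbation of the fractional constraint, you must prove an analogous Hoffman-type bound for that nonconvex system — i.e., handle the boundary cones and establish the uniform rate — rather than assert it.
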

	We leave the proof of Lemma \ref{lemma:small_relaxation_gain} to the end of this section.
	Given \Cref{lemma:small_relaxation_gain} and the previous \Cref{lemma:estimation_function_error}, we have:
	\begin{equation}\label{equ:pistar_in_pi+1}
	\begin{aligned}
	\hat{R}(\hat{\pi}_{k,*})-\hat{R}(\pi_*)=&\hat{R}(\hat{\pi}_{k,*})-R(\hat{\pi}_{k,*})+R(\hat{\pi}_{k,*})-R(\pi_{2\delta_{k,s}, *}) + R(\pi_{2\delta_{k,s}}, *)-R(\pi_*) + R(\pi_*)-\hat{R}(\pi_*)\\
	\leq&|\hat{R}(\hat{\pi}_{k,*})-R(\hat{\pi}_{k,*})|+(R(\hat{\pi}_{k,*})-R(\pi_{2\delta_{k,s}, *}))+(R(\pi_{2\delta_{k,s}}, *)-R(\pi_*))+|R(\pi_*)-\hat{R}(\pi_*)|\\
	\leq&\frac{\delta_{k,r}}2+0+\frac{L}2\cdot 2\delta_{k,s} + \frac{\delta_{k,r}}2\\
	=&\delta_{k,r} + L\cdot\delta_{k,s}
	\end{aligned}
	\end{equation}
	
	By definition of $\Pi_{k+1}$ at \Cref{equ:pi_k+1}, we know that $\pi_*\in\Pi_{k+1}$, which holds \Cref{lemma:pi_star_in_pi_k} at $k+1$ and therefore completes the induction. As a result, all \Cref{lemma:pi_star_in_pi_k}, \Cref{lemma:number_of_choosing_price}, \Cref{lemma:estimation_function_error} and \Cref{lemma:small_relaxation_gain} holds for all $k=1,2,\ldots$. As a result, we may calculate the total regret and substantive unfairness as follows.
	
	For the regret, we may divide the whole time horizon $T$ into three stages:
	\begin{enumerate}
		\item Stage 0: Before epochs where we propose $v_d$ for $\tau_0 = 2\log{T}\log{\frac{16}{\epsilon}}$ rounds in either $G_1$ or $G_2$. The regret for this stage is $O(\log{T}\log{\frac1\epsilon})$.
		\item Stage 1: Epoch 1 where we try every price for $2\cdot\frac{\tau_1}{2d}$ rounds in either $G_1$ or $G_2$. The regret for this stage is $O(\tau_1)=O(d\sqrt{T}\log\frac{\log T}{\epsilon})$.
		\item Stage 2: Epoch $k=2,3,\ldots$. In each epoch $k$, every policy $\pi$ we run satisfies $\pi\in\Pi_k$. Therefore, for any $\pi$ running in Epoch $k=2,3,\ldots$, we have
		\begin{equation}
		\label{equ:regret_perround}
		\begin{aligned}
		R(\pi_*)-R(\pi)=& (R(\pi_*)-\hat{R}_{k-1}(\pi_*))+(\hat{R}_{k-1}(\pi_*)-\hat{R}_{k-1}(\pi))+(\hat{R}_{k-1}(\pi)-R(\pi))\\
		\leq & \frac{\delta_{k-1,r}}2 + (\delta_{k-1,r} + L\cdot\delta_{k-1,s}) + \frac{\delta_{k-1,r}}2\\
		=& 2\delta_{k-1, r} + L\cdot\delta_{k-1,s}.
		\end{aligned}
		\end{equation}
		The second line is by definition of $\Pi_k$ for $k\geq2$ and by \Cref{lemma:estimation_function_error}. Suppose there are $K$ epochs in total, and then we know that:
		\begin{equation*}
		T \geq \sum_{k=1}^{K}\tau_k = \frac{28C_q}3\cdot d\sqrt{T}\cdot\log\frac{16d\log T}\epsilon\cdot\sum_{k=1}^{K}2^k.
		\end{equation*}
		Solve the equaltion above and we get $K=O(\log\frac{\sqrt T}{d\log{\frac{d\log T}\epsilon}})$ and $K\leq\frac12\log T$. Therefore, the total regret of Stage 2 is
		\begin{equation}
		\label{equ:total_regret}
		Reg=O(\sum_{k=2}^K\tau_k\cdot (2\delta_{k-1, r} + L\cdot\delta_{k-1,s})) = O(\sqrt{T}\cdot d^{\frac32}\log\frac {d\log T}\epsilon)
		\end{equation}
	\end{enumerate}
	Add the regret of all three stages above, we get that the total regret is $O(\sqrt{T}\cdot d^{\frac32}\log\frac {d\log T}\epsilon)$.
	
	For the unfairness, we derive it similarly in three stages:
	\begin{enumerate}
		\item Stage 0: Before epochs where we propose $v_d$ for $\tau_0 = 2\log{T}\log{\frac{16}{\epsilon}}$ rounds in either $G_1$ or $G_2$. The unfairness for this stage is $0$ as we always propose the same price to both groups.
		\item Stage 1: Epoch 1 where we try every price for $2\cdot\frac{\tau_1}{2d}$ rounds in either $G_1$ or $G_2$. The regret for this stage is $0$ as well.
		\item Stage 2: Epoch $k=2,3,\ldots$. In each epoch $k$, every policy $\pi$ we run satisfies $\pi\in\Pi_k$. Therefore, for any $\pi$ running in Epoch $k=2,3,\ldots$, we have
		\begin{equation}
		\label{equ:unfairness_perround}
		\begin{aligned}
		S(\pi)=& \hat{S}_{k-1}(\pi) +(S(\pi)-\hat{S}_{k-1}(\pi))\\
		\leq& \delta_{k-1,s} + \frac{\delta_{k-1,s}}2\\
		\leq& \frac{3\delta_{k-1,s}}2.
		\end{aligned}
		\end{equation}
		Here the last line is by definition of $\Pi_k$ for $k\geq2$ and by \Cref{lemma:estimation_function_error}. Therefore, the total unfairness of Stage 2 is
		\begin{equation}
		\label{equ:total_unfairness}
		Unf \leq \sum_{k=2}^K \tau_k\cdot \frac{3\delta_{k-1,s}}2=O(\sqrt{T}\cdot d^{\frac32}\log\frac {d\log T}\epsilon).
		\end{equation}
	\end{enumerate} 
	Therefore, the total substantive unfairness of all three stages is $O(\sqrt{T}\cdot d^{\frac32}\log\frac {d\log T} \epsilon)$ as well. 
	
	Finally, we count the probability of failure of all stages. For Stage 0, the failure probability is $\Pr_0\leq \frac\epsilon4$. For each epoch, the failure probability is $\Pr_k \leq \frac\epsilon{2\log T}$. Since there are $K\leq\frac{\log T}2$ epochs, the total failure probability is $\Pr_{failure}\leq\Pr_0 + K\cdot\Pr_k\leq\frac\epsilon4+K\cdot\frac\epsilon{2\log T}\leq \frac\epsilon2<\epsilon$. That is to say, \Cref{theorem:regret_and_unfairness} holds with probability at least $\Pr\geq1-\epsilon$.
\end{proof}
At the end of this subsection, we prove \Cref{lemma:small_relaxation_gain} as we promised above.
\begin{proof}[Proof of \Cref{lemma:small_relaxation_gain}]
	Denote any policy $\pi\in\Pi$ as $\pi=(\pi^1, \pi^2)$. For the simplicity of notation, we denote the following functions:
	\begin{enumerate}[label=(\alph*)]
		\item Define $R_1(\pi^1) = \vv^{\top}F_1\pi^1$;
		\item Define $R_2(\pi^2) = \vv^{\top}F_2\pi^2$;
		\item Define $S_1(\pi^1) = \frac{\vv^{\top}F_1\pi^1}{\ind^{\top}F_1\pi^1}$;
		\item Define $S_2(\pi^2) - \frac{\vv^{\top}F_2\pi^2}{\ind^{\top}F_2\pi^2}$.
	\end{enumerate}
	For $\pi_{\delta, *}$ defined in \Cref{equ:relaxed_pi_delta_star}, denote $V_s:=S_1(\pi^1_{\delta, *})$ and $z=S_2(\pi^2_{\delta, *})-V_s$. Therefore, we know that $V_s\in[v_1, 1]$ (recalling that $v_1>0$) and $z\in[-\delta, \delta]$. According to the optimality of $\pi_{\delta, *}$, we have:
	\begin{equation}
	\label{equ:optimality_delta}
	\begin{aligned}
	\pi_{\delta, *}&=\argmax_{\pi\in\Pi, V_s\in[v_1, 1], z\in[-\delta, \delta]} qR_1(\pi^1) +(1-q)R_2(\pi^2)\\
	s.t.&\quad S_1(\pi^1) = V_s\\
	&\quad S_2(\pi^2) = V_s + z
	\end{aligned}
	\end{equation}
	. Consider the constraint $S_2(\pi^2)- V_s \in [-\delta, \delta]$, we can derive the following relaxation:
	\begin{equation}
	\label{equ:delta_relaxation_fraction}
	\begin{aligned}
	&S_2(\pi^2)- V_s \in [-\delta, \delta]\\
	\Leftrightarrow\qquad-\delta&\leq\frac{\vv^{\top}F_2\pi^2}{\ind^{\top}F_2\pi^2}-V_s\leq\delta\\
	\Rightarrow\qquad-\delta(\ind^{\top}F_2\pi^2)&\leq\vv^{\top}F_2\pi^2-V_s\cdot\ind^{\top}F_2\pi^2\leq\delta(\ind^{\top}F_2\pi^2)\\
	\Rightarrow\qquad-\delta&\leq\vv^{\top}F_2\pi^2-V_s\cdot\ind^{\top}F_2\pi^2\leq\delta.\\
	\end{aligned}
	\end{equation}
	This is because $\ind^{\top}F_2\pi^2\in[F_{\min}, 1]\subset(0,1]$. Therefore, we may define $\theta_{\delta}=(\theta_{\delta}^1, \theta_{\delta}^2)\in\Pi$ such that
	\begin{equation}
	\label{equ:relaxation_step_1}
	\begin{aligned}
	\theta_{\delta}:=\argmax_{\theta\in\Pi, r, w\in[v_1\cdot F_{\min}, 1]}& q R_1(\theta^1)+(1-q) R_2(\theta^2)\\
	s.t.\qquad \vv^{\top}F_1\theta^1 =& w\\
	\qquad \cdot\ind^{\top}F_1\theta^1 = &\frac w {V_s}\\
	\qquad \vv^{\top}F_2\theta^2 =& r\\
	-\delta\leq v_1 \cdot\ind^{\top}F_2\theta^2-\frac{r\cdot v_1}{V_s}\leq&\delta,
	\end{aligned}
	\end{equation}
	for any $\theta\geq0$. Here we make use of the fact that $V_s\in[v_1,1]$. Notice that $[v_1\cdot F_{\min}, 1]$ contains all possible $r$'s due to the fact that $F_e(i)>F_{\min}$ and $v_i\geq v_1$ for any $i\in[d]$ and $e\in\{1,2\}$, then we have $R_2(\theta_{\delta}^2)\geq R_2(\pi_{\delta, *}^2)$ as a relaxation of conditions, which means that $R(\theta_{\delta})\geq R(\pi_{\delta, *})$. Consider another policy $\pi_{start}$:
	\begin{equation}
	\label{equ:pi_start}
	\begin{aligned}
	\pi_{start}:=\argmax_{\pi\in\Pi} qR_1(\pi^1) &+(1-q)R_2(\pi^2)\\
	s.t.\qquad S_1(\pi^1)  =& V_s\\
	\qquad S_2(\pi^2) = & V_s.
	\end{aligned}
	\end{equation}
	Therefore, we know that when $\delta=0$, we have $\theta_0=\pi_{start}$ exactly. Also, since $\pi_*$ can also be defined as follows:
	\begin{equation}
	\label{equ:pi_star_another_definition}
	\begin{aligned}
	\pi_* =\argmax_{\pi\in\Pi, v_s\in[v_1, 1]} qR_1(\pi^1) +& (1-q) R_2(\pi^2)\\
	s.t.\qquad S_1(\pi^1) = & v_s\\
	\qquad S_2(\pi^2) = & v_s.
	\end{aligned}
	\end{equation}
	According to the optimality of $\pi_*$ over all $v_s\in[v_1, 1]$ while $\pi_{start}$ is restricted on a specific $V_s$, we have $R(\pi_*)\geq R(\pi_{start})=R(\theta_0)$. Recall that we also have $R(\theta_{\delta})\geq R(\pi_{\delta, *})$. Therefore, as long as we show that there exists a constant $L$ such that $R(\theta_{\delta})-R(\theta_0)\leq\frac L 2\cdot\delta$, then it is sufficient to show that $R(\pi_{\delta, *})-R(\pi_*)\leq\frac L 2\cdot\delta$.
	
	Denote
	\begin{equation}
	\label{equ:def_tilde_theta_delta_equation}
	\tilde{\theta}_{\delta}=[(\theta_{\delta}^1)^{\top}, w, \frac{w\cdot V_1}{V_s}, (\theta_{\delta}^2)^{\top}, r, \frac{r\cdot v_1}{V_s}]^{\top}\in\R^{2d+4}.
	\end{equation}
	Of course $\|\tilde{\theta}_{\delta}\|_1\leq 1+w+\frac{w}{V_s}+1+r+\frac{r}{V_s}\leq 4+2\frac2{v_1}$. Denote the domain of $\tilde{\theta}{\delta}$ as $\cD(\delta)$. Therefore, we know that for any $\theta\in\cD(\delta)$, we have
	\begin{equation}
	\begin{aligned}
	\theta &\succeq 0\\
	[\ind_d^{\top}, 0, 0,\ldots, 0]\theta &= 1\\
	[0,\ldots, 0,0,0,\ind_d^{\top}, 0,0]\theta &=1\\
	[0,\ldots, 0, 1, 0, \ldots, 0]\theta &\leq 1 \text{ (for } 1 \text{ in the }(d+1)^{th}\text{ place)}\\
	[0,\ldots, 0, 1, 0, \ldots, 0]\theta &\leq  1 \text{ (for } 1 \text{ in the }(d+2)^{th}\text{ place)}\\
	[0,\ldots, 0, 1, 0]\theta &\leq 1\\
	[0,\ldots, 0, 0, 1]\theta &\leq  1\\
	\end{aligned}
	\label{equ:tilde_theta_range}
	\end{equation}
	Denote $\tilde{\cD}(\delta)$ as the space of all $\theta$ satisfying \Cref{equ:tilde_theta_range}, and we know that $\tilde{\cD}(\delta)\supseteq\cD(\delta)$ and $\tilde{\cD}(\delta)$ is a bounded, close and convex set with only linear boundaries.  
	Also, denote the following fixed parameters:
	\begin{equation}
	\label{equ:def_general_parameter}
	\begin{aligned}
	\mathbf{a}&:=[q\cdot(\vv^{\top}F_1), 0, 0, (1-q)\cdot(\vv^{\top}F_2), 0, 0]\in\R^{2d+4}\\
	\mathbf{b}_1&:=[\vv^{\top}F_1, -1, 0, 0, \ldots, 0]\in\R^{2d+4}\\
	\mathbf{b}_2&:=[v_1\ind^{\top}F_1, 0, -1, 0,0,\ldots, 0]\in\R^{2d+4}\\
	\mathbf{g}&:=[0,\ldots,0,0,0,\vv^{\top}F_2, -1, 0]\in\R^{2d+4}\\
	\mathbf{d}&:=[0,\ldots,0,0,0,v_1\cdot\ind^{\top}F_2, 0, -1]\in\R^{2d+4}.\\
	\end{aligned}
	\end{equation}
	Again, these parameters are all constants under the same problem setting. Given these parameters, for the definition of $\theta_{\delta}$ in \Cref{equ:relaxation_step_1}, we may transform that definition into the following one equivalently:
	\begin{equation}
	\label{equ:def_theta_delta_second_definition}
	\begin{aligned}
	\tilde{\theta}_{\delta}:=\argmax_{\theta\in\tilde{\cD}(\delta)}\ & \mathbf{a}^{\top}\theta\\
	s.t.\quad \mathbf{b}_1^{\top}\theta =& 0\\
	\mathbf{b}_2^{\top}\theta =& 0\\
	\mathbf{g}^{\top}\theta = & 0\\
	\mathbf{d}^{\top}\theta \in& [-\delta, \delta].
	\end{aligned}
	\end{equation}
	Since $\tilde{\cD}(\delta)\supseteq\cD(\delta)$, we know that $\mathbf{a}^{\top}\tilde{\theta}_{\delta}\geq R(\theta_{\delta})$. Denote
	\begin{equation*}
	\tilde{\cD}_{abg}(\delta):=\{\theta|\theta\in\tilde{\cD}(\delta), \mathbf{b}_1^{\top}\theta=0, \mathbf{b}_2^{\top}\theta=0,\mathbf{g}^{\top}\theta=0\},
	\end{equation*}
	and we know that $\tilde{\cD}_{abg}(\delta)$ is also a bounded close and convex set with only linear boundaries. Therefore, \Cref{equ:def_theta_delta_second_definition} is equivalent to the following definition:
	\begin{equation}
	\label{equ:def_theta_delta_new_definition}
	\begin{aligned}
	\tilde{\theta}_{\delta}:=\argmax_{\theta\in\tilde{\cD}_{abg}(\delta)}\ & \mathbf{a}^{\top}\theta\\
	s.t.\qquad\mathbf{d}^{\top}\theta\in[-\delta, \delta].
	\end{aligned}
	\end{equation}
	Now we present the following lemma.
	\begin{lemma}[Bounded Shifting]
		Given any  space $Q\subset\R^{n}$ that is bounded, close and convex with only linear boundaries, consider the following subset $Q_0:=\{x\in Q, d^{\top}x=0\}\neq\emptyset$. Then there exists a constant $C_L$ such that for any $z\in\R$, $Q_{z}:=\{x\in Q, d^{\top}x=z\}$ and any $\theta_z\in Q_{z}$, there always exists a $\theta_0\in Q_0$ such that $\|\theta_z-\theta_0\|_2\leq C_L\cdot|z|$.
		\label{lemma:bounded_shifting}
	\end{lemma}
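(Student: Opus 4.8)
The plan is to exploit that $Q$ is a polytope: bounded, closed, convex with only affine (linear) boundaries, hence by Minkowski's theorem it has a finite vertex set and every point of it is a convex combination of vertices. The degenerate cases are immediate: if $z=0$ take $\theta_0=\theta_z$, and if $Q_z=\emptyset$ there is nothing to prove; since the cases $z>0$ and $z<0$ are mirror images of each other, I would only treat $z>0$.

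First I would pass to the sub-polytope $Q^+:=Q\cap\{x:d^\top x\ge 0\}$, which is again bounded, closed, convex with only linear boundaries, so it has a finite vertex set $\{w_1,\dots,w_m\}$ with $d^\top w_j\ge 0$ for every $j$. Note $Q^+\cap\{x:d^\top x=0\}=Q_0$, so if $w_j$ is a vertex of $Q^+$ with $d^\top w_j=0$ then $w_j\in Q_0$, and hence any convex combination of such vertices lies in $Q_0$ (by convexity of $Q_0$). Set $\delta_0:=\min\{d^\top w_j:d^\top w_j>0\}$; if there is no vertex with $d^\top w_j>0$ then $d^\top x\equiv 0$ on $Q^+$ and $Q_z=\emptyset$ for $z>0$, so the claim is vacuous, and otherwise $\delta_0>0$ since it is a minimum of finitely many positive numbers.

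Now, given $\theta_z\in Q_z$ I would write $\theta_z=\sum_j\lambda_j w_j$ with $\lambda_j\ge 0$, $\sum_j\lambda_j=1$, and let $\Lambda^+:=\sum_{j:d^\top w_j>0}\lambda_j$. Then $z=d^\top\theta_z=\sum_j\lambda_j\,d^\top w_j\ge\delta_0\Lambda^+$, so $\Lambda^+\le z/\delta_0$. If $\Lambda^0:=1-\Lambda^+>0$, define $\theta_0:=\tfrac{1}{\Lambda^0}\sum_{j:d^\top w_j=0}\lambda_j w_j\in Q_0$; then $\theta_z-\theta_0=\sum_{j:d^\top w_j>0}\lambda_j w_j-\tfrac{\Lambda^+}{\Lambda^0}\sum_{j:d^\top w_j=0}\lambda_j w_j$, and the triangle inequality gives $\|\theta_z-\theta_0\|_2\le\Lambda^+\max_j\|w_j\|_2+\tfrac{\Lambda^+}{\Lambda^0}\cdot\Lambda^0\max_j\|w_j\|_2\le 2\Lambda^+\max_j\|w_j\|_2\le C_L\,z$ with $C_L:=2\max_j\|w_j\|_2/\delta_0$. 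In the remaining case $\Lambda^0=0$ we have $z\ge\delta_0$, so for any fixed $\theta_0\in Q_0$ (which is nonempty) $\|\theta_z-\theta_0\|_2\le\mathrm{diam}(Q)\le(\mathrm{diam}(Q)/\delta_0)\,z$; enlarging $C_L$ so that it also exceeds $\mathrm{diam}(Q)/\delta_0$ handles both cases, and the resulting constant depends only on $Q$ and $d$, uniformly in $z$ and $\theta_z$, as required.

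A more conceptual alternative is to write $Q=\{x:Ax\le b\}$ and invoke Hoffman's error bound for the consistent linear system $\{Ax\le b,\ d^\top x=0\}$: any point feasible for $Ax\le b$ satisfies $\mathrm{dist}(\theta_z,Q_0)\le H\,|d^\top\theta_z|=H|z|$ with $H$ the Hoffman constant of the defining matrix; but I would present the elementary vertex argument above to keep the paper self-contained. The only mildly delicate points are passing from $Q$ to $Q^+$ (so that all vertex values $d^\top w_j$ are non-negative, which is exactly what makes the strictly positive lower bound $\delta_0$ usable) and the bookkeeping in the boundary case $\Lambda^0=0$; I do not expect either to be a genuine obstacle.
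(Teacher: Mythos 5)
Your proof is correct, and while it opens the same way as the paper's (restrict to $Q^+=Q\cap\{x:d^\top x\ge 0\}$, use its finite vertex set, and express $\theta_z$ as a convex combination of vertices), the quantitative core is genuinely different. The paper splits the vertices into $V_0$ (those with $d^\top v=0$) and $V_1$ (those with $d^\top v>0$), defines for each $x\in Q_0$ the minimal angle $\beta_x$ between the hyperplane $\{d^\top x=0\}$ and the segments from $x$ to the vertices in $V_1$, argues by continuity and compactness that $\beta_x\ge\beta_{\min}>0$, and then builds cones apexed at points of $Q_0$ so that $\theta_z$ lies in the cone apexed at $\theta_0=\sum_i a_i t_i$, giving $\|\theta_z-\theta_0\|_2\le |z|/(\|d\|_2\sin\beta_{\min})$. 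You instead extract the constant directly from the vertices: $\delta_0=\min\{d^\top w_j: d^\top w_j>0\}>0$ forces the total weight $\Lambda^+$ on ``positive'' vertices to be at most $z/\delta_0$, you take $\theta_0$ to be the renormalized zero-part of the convex combination (which lies in $Q_0$ by convexity), and a triangle-inequality computation yields $C_L=2\max_j\|w_j\|_2/\delta_0$, with the boundary case $\Lambda^0=0$ absorbed via $z\ge\delta_0$ and $\mathrm{diam}(Q)$. Your route is more elementary and self-contained: it avoids the paper's unproven auxiliary claims (continuity and strict positivity of $\beta_x$, and the identity $Cone_{\sum_i a_i t_i}=\{\sum_i a_i f_i: f_i\in Cone_{t_i}\}$), and the pointer to Hoffman's error bound correctly identifies the general principle at work. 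What the paper's angle-based argument buys is a constant expressed geometrically through $\beta_{\min}$ and $\|d\|_2$ rather than through vertex norms, but for the purposes of \Cref{lemma:small_relaxation_gain} either constant suffices, since only existence of a finite $C_L$ depending on $Q$ and $d$ is used downstream.
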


	\begin{proof}[Proof of \Cref{lemma:bounded_shifting}]
		Without loss of generality, we assume that $z>0$. Denote $Q^+=Q\cap\{x:d^\top x\geq 0\}$. Because $Q$ is bounded, close and convex with only linear boundaries, the number of vertex of $Q^+$ must be finite. The vertex set of $Q^+$ can be decomposed as $V=V_0+V_1$, where $V_0$ denotes the vertex such that $d^\top x=0$ while $V_1$ denotes the vertex such that $d^\top x>0$. In addition, $Q_0=Q^+\cap \{x:d^\top x=0\}$ is the cross section while we define $B=\{x:d^\top x=0\}$.
		
		For each point $x\in Q_0$, we define $\beta_x$ to be $\min\{\text{The intersection angle between $B$ and}\,\,\overrightarrow{xv},\,\,v\in V_1\}$. Due to the fact that $\beta_x$ is continuous upon $x$, $\beta_x>0$ and the domain $Q_0$ is bounded and close, there exists a $\beta_{\min}>0$ such that $\beta_x\geq \beta_{\min}$, $\forall x\in Q_0$. Then we construct a corresponding cone $Cone_x$ for each $x\in Q_0$ such that $Cone_x=\{v:d^\top v\geq0,\text{and the intersection angle between $B$ and}\,\,\overrightarrow{xv}\geq \beta_{\min}\}$.
		
		Since $Q^+$ is bounded, close and convex with only linear boundaries, for any point $\theta_z\in Q_z$, there exits $v_1,v_2,\cdots,v_k$ and $a_1,a_2,\cdots,a_k$ such that $v_i\in V, a_i\geq 0,\forall i\in[k]$ and $\sum_{i=1}^{k} a_i=1$ and it holds that $\theta_z=\sum_{i=1}^k a_i v_i$. Then according to our construction of the cones, for each selected vertex $v_i$, there exists a cone $Cone_{t_i}$ such that $t_i\in Q_0$ and $v_i\in Cone_{t_i}$. We claim that $Cone_{\sum_{i=1}^{k} a_i t_i}=\{\sum_{i=1}^{k}a_i f_i:f_i\in Cone_{t_i}\}$. Therefore, it holds that $\theta_z=\sum_{i=1}^{k} a_i v_i\in Cone_{\sum_{i=1}^{k}a_it_i}$. Consider this $\theta_0=\sum_{i=1}^{k}a_it_i$, because $Q_0$ is convex, we have $\theta_0\in Q_0$. In addition,
		$\|\theta_0-\theta_z\|_2\leq\frac{|z|}{\|d\|_2\cdot\sin(\beta_{\min})}$,  which means by choosing $C_L=\frac{1}{\|d\|_2\cdot\sin(\beta_{\min})}$, the proof is complete.
	\end{proof}

	Denote $z_{\delta}:=\tilde{\theta}_{\delta}$ and we know that $|z_{\delta}|\leq\delta$. In order to apply \Cref{lemma:bounded_shifting}, we have to ensure that $\tilde{\cD}_{abg}(0)\neq\emptyset$. In fact, notice that $\tilde{\theta}_0\in\tilde{\cD}_{abg}\cap\{\mathbf{d}^{\top}=0\}$. With \Cref{lemma:bounded_shifting}, there exists a $\hat{\theta}_0\in\tilde{\cD}_{abg}(0)$ such that $\|\tilde{\theta}_{\delta}-\hat{\theta}_0\|_2\leq\frac{L}2|z|\leq\frac{L}2\delta.$ As a result, we have:
	\begin{equation}
	\label{equ:bounded_tilde_with_hat_close}
	\begin{aligned}
	\mathbf{a}^{\top}\tilde{\theta}_{\delta}-\mathbf{a}^{\top}\hat\theta_0&\leq\|a\|_2\cdot\|\tilde{\theta}_{\delta}-\hat{\theta}_0\|_2\\
	&\leq\|a\|_1\cdot C_L\cdot\delta\\
	&\leq(q\cdot\vv^{\top}F_1\ind+(1-q)\cdot\vv^{\top}F_2\ind)\cdot C_L\cdot\delta\\
	&:= C_a\cdot C_L\cdot\delta.
	\end{aligned}
	\end{equation}
	By definition of $\tilde{\theta}_{\delta}$, we know that $\tilde{\theta}_0$ maximizes $\mathbf{a}^{\top}\theta$ in $\tilde{\cD}_{abg}(0)$, which means that $\mathbf{a}^{\top}\tilde{\theta}_0\geq\mathbf{a}^{\top}\hat{\theta}_0$. As a result, we have:
	
	\begin{equation}
	\label{equ:bound_performance_within_delta_gap}
	\begin{aligned}
	R(\theta_{\delta})-R(\theta_0)=&\mathbf{a}^{\top}\tilde{\theta}_{\delta}-\mathbf{a}^{\top}\tilde{\theta}_0\\
	\leq&\mathbf{a}^{\top}\tilde{\theta}_{\delta}-\mathbf{a}^{\top}\hat{\theta}_0\\
	\leq&C_a\cdot C_L\cdot\delta.
	\end{aligned}
	\end{equation}
	Therefore, we have $R(\pi_{\delta, *})-R(\pi_*)\leq R(\theta_{\delta})-R(\pi_{start})=R(\theta_{\delta})-R(\theta_0)\leq C_a\cdot C_L\cdot\delta$. Let $L:= 2\cdot C_a\cdot C_L$ and this holds the lemma.
\end{proof}
\subsection{Proof of \Cref{theorem:regret_lower_bound}}
\label{appendix:proof_regret_lower_bound}
As is stated in \Cref{subsec:regret_lower_bound}, we may reduce this fair pricing problem to an ordinary online pricing problem with no fairness constraints. Therefore, we only need to prove the following theorem.
\begin{theorem}[Regret Lower Bound]
	Consider the online pricing problem with $T$ rounds and $d$ fixed prices in $[0,c]$ for  $3\leq d\leq T^{1/3}$ and some constant $c>0$. Then any algorithm has to suffer at least $\Omega(\sqrt{dT})$ regret.
\end{theorem}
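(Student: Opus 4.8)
The plan is to reduce this finite--price pricing problem to a stochastic multi-armed bandit with $d$ arms and then run the information-theoretic lower bound of \citet{auer2002nonstochastic}: treat each price $v_i$ as an arm whose only feedback is the Bernoulli sample $\ind(v_i\le y)$ of mean $F(i)$, and whose reward is $v_i$ times that indicator. The one feature absent from the vanilla bandit setting is that the ``arm means'' here are the acceptance probabilities $F(1)\ge\cdots\ge F(d)$, which are constrained to be a non-increasing sequence in $[0,1]$; keeping track of this constraint is exactly what will force the hypothesis $d\le T^{1/3}$.

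First I would fix a base instance. Take the $d$ prices equally spaced in $[c/2,c]$ and let the base demand be $F_0(i)=\rho/v_i$ with $\rho=c/4$, so that the revenue $v_iF_0(i)\equiv\rho$ is flat, $F_0$ is strictly decreasing, all its values lie in $[1/4,1/2]$, and all of its consecutive gaps $F_0(i)-F_0(i+1)$ are of order $1/d$. Any non-increasing $[0,1]$-valued sequence is the tail function of some valuation distribution, so $F_0$ and the perturbations below are legitimate demands. For each $j\in[d]$ define instance $\mathcal I_j$ by bumping a single coordinate, $F_j(j)=F_0(j)+\epsilon$ and $F_j(i)=F_0(i)$ for $i\ne j$, with $\epsilon=c_0\sqrt{d/T}$ for a small absolute constant $c_0$. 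Since $d\le T^{1/3}$ we have $\epsilon\le c_0/d$, so for $c_0$ small enough $\epsilon$ stays below the smallest consecutive gap of $F_0$; hence $F_j$ is still non-increasing, and $F_j(j)=F_0(j)+\epsilon<1$. In $\mathcal I_j$ the unique revenue-maximizing price is $v_j$ with revenue $\rho+\epsilon v_j$, and every other price yields exactly $\rho$, so on $\mathcal I_j$ every round in which a price other than $v_j$ is posted incurs per-round regret $\epsilon v_j\ge\epsilon c/2$.

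Next comes the information argument. Posting price $v_i$ reveals one sample of $\mathrm{Ber}(F(i))$, so the null instance $\mathcal I_0$ (demand $F_0$) and $\mathcal I_j$ are distinguished only through posting $v_j$, and because $F_0(j)\in[1/4,1/2]$,
\[
\mathrm{KL}\big(\mathrm{Ber}(F_0(j))\,\|\,\mathrm{Ber}(F_0(j)+\epsilon)\big)\le C\epsilon^2 .
\]
By the divergence-decomposition lemma for adaptive strategies, $\mathrm{KL}(\P_0\|\P_j)\le C\epsilon^2\,\E_0[N_j]$, where $N_j$ is the number of rounds $v_j$ is posted and $\P_0,\P_j$ are the laws of the whole interaction. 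The regret on $\mathcal I_j$ is at least $\tfrac{\epsilon c}{2}\,\E_j[T-N_j]$; combining $\E_j[N_j]\le\E_0[N_j]+T\sqrt{\mathrm{KL}(\P_0\|\P_j)/2}$ (Pinsker), averaging over $j\in[d]$, applying Cauchy--Schwarz via $\sum_j\sqrt{\E_0[N_j]}\le\sqrt{d\sum_j\E_0[N_j]}\le\sqrt{dT}$ together with $\sum_j\E_0[N_j]=T$, the average regret is at least
\[
\frac{\epsilon c}{2}\Big(T-\frac{T}{d}-C'\epsilon T\sqrt{T/d}\Big).
\]
With $\epsilon=c_0\sqrt{d/T}$ the last term equals $C'c_0 T$, so for $d\ge 3$ and $c_0$ small this is $\Omega(\epsilon T)=\Omega(\sqrt{dT})$; since it is an average over $j$, some $\mathcal I_j$ achieves the bound. (The choice $\epsilon=\Theta(\sqrt{d/T})$ is simultaneously the optimizer of this expression and, precisely when $d\le T^{1/3}$, small enough to satisfy the monotonicity requirement from the construction.)

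\textbf{Main obstacle.} The delicate point, trivial in the unconstrained bandit setting, is to arrange that the perturbed demand curves remain \emph{monotone and $[0,1]$-valued} while still (i) keeping a revenue gap of order $\epsilon$ at the optimal price and (ii) keeping the per-pull $\mathrm{KL}$ of order $\epsilon^2$ (i.e.\ the base acceptance probabilities bounded away from $0$ and $1$). Pushing (i)--(ii) to the limit forces $\epsilon\lesssim 1/d$, which is why the clean $\sqrt{dT}$ rate survives only for $d=O(T^{1/3})$; choosing the price grid and the base demand so that \emph{all} consecutive gaps are simultaneously $\Theta(1/d)$ (rather than geometrically shrinking) is the part of the construction that needs care.
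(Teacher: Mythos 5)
Your proposal is correct and takes essentially the same route as the paper's own proof: a family of single-coordinate ``bump'' instances around a flat-revenue base demand, analyzed via the KL chain rule, Pinsker's inequality, and a Cauchy--Schwarz averaging over the $d$ candidate instances, exactly the adaptation of \citet{auer2002nonstochastic} that the paper carries out. The only cosmetic difference is the construction: the paper uses geometrically spaced prices $a_i=a_0(1+\epsilon)^i$ so the bumped acceptance rate coincides exactly with the neighboring one (with $d\leq T^{1/3}$ used to keep prices in a bounded range and probabilities in $[1/12,1/4]$), whereas you use arithmetically spaced prices and invoke $d\leq T^{1/3}$ to keep $\epsilon$ below the $\Theta(1/d)$ demand gaps --- the same underlying tension, handled equivalently.
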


Here we mainly adopt the proof roadmap of \citet{kleinberg2003value}.
\begin{proof}
	We let $c=12$ without losing generality. Let $\epsilon = \sqrt{\frac{d}{T}}, l=1, a_0=4l, a_i = (1+\frac{\epsilon}{l})^{i}\cdot a_0, i=1,2,\ldots, d$, then we have: $4l=a_0<a_1<a_2<\ldots<a_{d-1}<a_{d}<12l$.
	
	Define some distributions on the prices $\{a_i\}_{i=1}^{d}$:
	\begin{itemize}
		\item $\P_0$, with acceptance rates of each price: $P_0=[\frac{l}{a_1}, \frac{l}{a_2}, \ldots, \frac{l}{a_{d-1}}, \frac{l}{a_{d}}]^T$, where $P_0(i)=\Pr[y\geq a_i] = \Pr[y>a_{i-1}]=\frac{l}{a_i}<\frac{1}{4}$.
		\item $\P_j$, with  acceptance rates of each price: $P_j=[\frac{l}{a_1}, \frac{l}{a_2}, \ldots, \frac{l}{a_{j-1}}, \frac{l+\epsilon}{a_j}, \frac{l}{a_{j+1}}, \ldots, \frac{l}{a_{d-1}}, \frac{l}{a_{d}}]^T$, where $P_{j}(i)=\frac{l}{a_{i}}+\frac{\epsilon}{a_i}\cdot\ind(i=j)\leq\frac{1}{4}$.
	\end{itemize}
	In the following part, we propose and prove the following lemma: 
	
	\begin{lemma}
		For any algorithm $S$, $\exists j\in\{1,2,\ldots, d\}$, such that $Reg_{\P_j}(S)=\Omega(\sqrt{Td})$.
	\end{lemma}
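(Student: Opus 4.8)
The plan is a classical information-theoretic ``needle in a haystack'' argument in the spirit of \citet{auer2002nonstochastic} and \citet{kleinberg2003value}. First I would read off the revenue curves. Under $\P_0$ every price $a_i$ has expected revenue $a_i\cdot\frac{l}{a_i}=l$, so all prices are equally good and the best fixed price earns $l$. Under $\P_j$ the price $a_j$ earns $a_j\cdot\frac{l+\epsilon}{a_j}=l+\epsilon$ while every other $a_i$ still earns $l$; hence $a_j$ is the unique optimal fixed price, the optimum is $l+\epsilon$, and a round in which any price other than $a_j$ is proposed costs exactly $\epsilon$ in regret. Writing $N_j$ for the (random) number of rounds in which $S$ proposes $a_j$, this gives
\[
Reg_{\P_j}(S)\;=\;\epsilon\bigl(T-\E_{\P_j}[N_j]\bigr),
\]
so it suffices to exhibit one $j$ with $\E_{\P_j}[N_j]\le T/2$.

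Next I would control $\E_{\P_j}[N_j]$ in terms of $\E_{\P_0}[N_j]$. The only coordinate along which $\P_0$ and $\P_j$ differ is the feedback distribution of the single price $a_j$: proposing $a_j$ yields a $\mathrm{Ber}(l/a_j)$ acceptance under $\P_0$ versus $\mathrm{Ber}((l+\epsilon)/a_j)$ under $\P_j$, whereas every other price yields the same Bernoulli under both measures. By the chain rule for relative entropy applied to the $T$-round transcript (the sequence of proposed prices and observed acceptances), the KL divergence between the transcript laws equals $\E_{\P_0}[N_j]\cdot d_{\mathrm{KL}}\!\bigl(\mathrm{Ber}(l/a_j)\,\|\,\mathrm{Ber}((l+\epsilon)/a_j)\bigr)$. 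Because all prices lie in $[4l,12l]$, both Bernoulli parameters sit in a fixed subinterval of $(0,1)$ bounded away from $0$ and $1$ and differ by $\epsilon/a_j=\Theta(\epsilon)$, so the per-sample divergence is at most $C\epsilon^2$ for an absolute constant $C$, hence the transcript divergence is at most $C\epsilon^2\,\E_{\P_0}[N_j]$. Feeding this into the standard estimate $\E_{\P_j}[Y]-\E_{\P_0}[Y]\le\sqrt{\tfrac12\,D_{\mathrm{KL}}(\P_0\|\P_j)}$ for $Y=N_j/T\in[0,1]$ yields
\[
\E_{\P_j}[N_j]\;\le\;\E_{\P_0}[N_j]+T\sqrt{\tfrac{C}{2}\,\epsilon^2\,\E_{\P_0}[N_j]}.
\]

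Finally I would average over $j\in\{1,\dots,d\}$. Since exactly one price is proposed each round, $\sum_{j=1}^d N_j=T$ deterministically, so $\sum_j\E_{\P_0}[N_j]=T$; combined with Cauchy--Schwarz, $\sum_j\sqrt{\E_{\P_0}[N_j]}\le\sqrt{d\sum_j\E_{\P_0}[N_j]}=\sqrt{dT}$. Averaging the displayed bound gives
\[
\frac1d\sum_{j=1}^d\E_{\P_j}[N_j]\;\le\;\frac{T}{d}+\frac{T\epsilon}{d}\sqrt{\tfrac{C}{2}}\cdot\sqrt{dT}\;=\;\frac{T}{d}+\epsilon T\sqrt{\tfrac{CT}{2d}},
\]
and with $\epsilon=\gamma\sqrt{d/T}$ for a small absolute constant $\gamma$ the right-hand side becomes $T/d+\gamma\sqrt{C/2}\,T$, which is at most $T/2$ once $d\ge3$ and $\gamma$ is small enough (this choice also keeps the $a_i$ inside $[4l,12l]$, since $d\le T^{1/3}$ forces $(1+\epsilon/l)^d\le e^{\gamma}$). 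Therefore some $j^\star$ satisfies $\E_{\P_{j^\star}}[N_{j^\star}]\le T/2$, and for that index $Reg_{\P_{j^\star}}(S)\ge\epsilon T/2=\tfrac{\gamma}{2}\sqrt{dT}=\Omega(\sqrt{dT})$.

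I expect the main obstacle to be the divergence-decomposition step: making precise that only proposals of $a_j$ carry information separating $\P_0$ from $\P_j$, invoking the relative-entropy chain rule correctly when the transcript is generated adaptively (the price at round $t$ depends on past feedback, so the conditioning must be handled carefully), and bounding the per-sample KL uniformly in $j$ using the boundedness of the acceptance probabilities. The averaging and constant bookkeeping at the end are routine, but do require choosing the constant in $\epsilon=\Theta(\sqrt{d/T})$ small enough that the bound closes for every $d\ge3$.
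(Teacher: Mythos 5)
Your proposal is correct and follows essentially the same route as the paper: the same one-spiked-price construction with revenue gap $\epsilon=\Theta(\sqrt{d/T})$, the same chain-rule decomposition of the transcript KL into $\E_{\P_0}[N_j]$ times an $O(\epsilon^2)$ per-sample Bernoulli divergence (the paper guarantees this via $l/a_i\in[1/12,1/4]$), the same Pinsker-type transfer to bound $\E_{\P_j}[N_j]-\E_{\P_0}[N_j]$, and the same averaging over $j$ with Cauchy--Schwarz to find a hard instance. The only differences are cosmetic bookkeeping (you take $\epsilon=\gamma\sqrt{d/T}$ with a small constant to reach the $T/2$ threshold, whereas the paper uses $\epsilon=\sqrt{d/T}$ and a $0.9T$ threshold), so no substantive gap remains.
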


	\begin{proof}
		Suppose $f$ is a function: $\{0,1\}^{T}\rightarrow[0,M]$. Denote $\vr=[\ind_1, \ind_2, \ldots, \ind_T]^{\top}$ as a vector containing the customer's decisions in sequence. Then for any $j=1,2,\ldots, d$ we have:
		\begin{equation}\label{equ:information_chain_rule}
		\begin{aligned}
		&\E_{\P_j}[f(\vr)]-\E_{\P_0}[f(\vr)]\\
		=&\sum_{\vr}f(\vr)\cdot(\P_j[\vr]-\P_0[\vr])\\
		\leq&\sum_{\vr:\P_{j}[\vr]\geq\P_0[\vr]}f(\vr)(\P_{j}[\vr]-\P_{0}[\vr])\\
		\leq&M\cdot\sum_{\vr:\P_{j}[\vr]\geq\P_0[\vr]}f(\vr)(\P_{j}[\vr]-\P_0[\vr])\\
		=&\frac{M}{2}\|\P_{j}-\P_0\|_1.
		\end{aligned}
		\end{equation}
		Here we cite a lemma from \textit{Cover and Thomas, Elements of Information theory}, Lemma 11.6.1.
		\begin{lemma}
			\begin{equation*}
			KL(\P_1||\P_2)\geq\frac{1}{2\ln{2}}\|\P_1-\P_2\|_1^2.
			\end{equation*}
			\label{lemma_11_6_1}
		\end{lemma}
		Since
		
		\begin{equation}\label{equ:kl_decomposition}
		\begin{aligned}
		&KL(\P_0(\vr)||\P_{j}(\vr))\\
		=&\sum_{t=1}^{T}KL(\P_0[r_t|\vr_{t-1}]||\P_j[r_t|\vr_{t-1}])\\
		=&\sum_{t=1}^{t}\P_0(i_t\neq{j})\cdot 0 + \P_0(i_t = j)\cdot KL(\frac{l}{a_j}||\frac{l}{a_j}+\frac{\epsilon}{a_j}).
		\end{aligned}
		\end{equation}
		The first equality comes from the chain rule of decomposing a KL-divergence, and the second equality is because $\ind_t$ satisfies a Bernoulli distribution $B(1, \frac{l}{a_{i_t}}+\frac{\epsilon}{a_{i_t}}\cdot\ind(i_t=j))$. Now we propose another lemma:
		\begin{lemma}
			If $\frac{1}{12}\leq{p}\leq\frac14,\text{ then we have: } KL(p, p+\epsilon)\leq12\epsilon^2$ for sufficiently small $\epsilon$.
			\label{lemma_kl_ber}
		\end{lemma}
		According to this Lemma \ref{lemma_kl_ber}, we have:
		\begin{equation}
		\begin{aligned}
		KL(\P_0(\vr)||\P_{j}(\vr))&\leq\sum_{t=1}^{T}\P_0(i_t=j)\cdot12\epsilon^2\\
		&\leq\sum_{t=1}^T\P_0(i_t=j)\cdot\frac{12}{16l^2}\epsilon^2.
		\end{aligned}
		\end{equation}
		Therefore, we have:
		\begin{equation}
		\begin{aligned}
		&\E_{\P_j}[f(\vr)]-\E_{\P_0}[f(\vr)]\\
		\leq&\frac{M}{2}\frac{2\ln{2}}\cdot\sqrt{KL(\P_0(\vr)||\P_{j}(\vr))}\\
		\leq&\frac{\sqrt{6\ln{2}}M}{4}\cdot(\sqrt{\sum_{t=1}^T\P_0(i_t=j)})\cdot\epsilon.\\
		\end{aligned}
		\end{equation}
		Denote $N_j:=\sum_{t=1}^T\ind(i_t=j)$, and hence:
		\begin{equation}
		\E_{\P_j}[f(\vr)]-\E_{\P_j}[f(\vr)]\leq\frac{\sqrt{6\ln{2}}}{4}M(\sqrt{\E_{\P_0}[N_j]})\cdot\epsilon.
		\end{equation}
		Now let $f(\vr)=N_j$, i.e., let function $f$ simulate the algorithm which make choices of $i_t$'s from historical results of $\{\ind_1, \ind_2, \ldots, \int_{t-1}\}$ (It is straightforward that $\{i_1, i_2, \ldots, i_{t-1}\}$ are also historical results crucial for deciding $i_t$. However, for a deterministic algorithm, it can generate $i_1, i_2, \ldots, i_{t-1}$ directly from $\emptyset, \{\ind_1\}, \{\ind_1, \ind_2\}, \ldots, \{\ind_1, \ind_2, \ldots, \ind_{t-2}\}$.) Now, $0\leq{f}(\vr)\leq{T}\text{, which indicates that } M=T\text{.}$. Then it turns out that
		\begin{equation}
		\begin{aligned}
		\E_{\P_j}[N_j]-\E_{\P_0}[N_j]&\leq\frac{\sqrt{6\ln{2}}}{4}\cdot T \cdot \epsilon \cdot \sqrt{\E_{\P_0}[N_j]}\\
		\Rightarrow\frac{1}{d}\sum_{j=1}^{d}\E_{\P_j}[N_j]&\leq\frac{1}{d}\sum_{j=1}^{d}(\E_{\P_0}[N_j]+\frac{\sqrt{6\ln{2}}}{4}\cdot T\cdot\epsilon\cdot \sqrt{\E_{\P_0}[N_j]})\\
		&=\frac{T}{d}+\frac{\sqrt{6\ln{2}}}{4}\cdot\frac{T}{d}\cdot\epsilon\cdot\sum_{j=1}^d\sqrt{\E_{\P_)}[N_j]}\\
		&\leq\frac{T}{d}+\frac{\sqrt{6\ln{2}}}{4}\cdot\epsilon\cdot\frac{T}{d}\cdot\sqrt{Td}\\
		&=\frac{T}{d} + \frac{\sqrt{6\ln2}}{4}\cdot\sqrt{\frac{d}{T}}\cdot\frac{T}{d}\cdot\sqrt{Td}\\
		&\leq\frac{T}3 + 0.525\cdot T\\
		&\leq0.9T
		\end{aligned}
		\label{equation_average}
		\end{equation}
		Here the second line is an average over all $j=1,2,\ldots, d$ of the first line, the third line uses the fact that $\sum_{j=1}^d\E[N_j] = T$, the fourth line applies a Cauchy-Schwarz Inequality that $Td=(\sum_{j=1}^{d}\E_{\P_0}[N_j])(d\cdot 1)\geq(\sum_{j=1}^{d}\sqrt{\E_{\P_0}[N_j]})^2$, the fifth line plugs in the values that $ \epsilon =\sqrt{\frac{d}{T}}$, the sixth line uses the fact that $\ln{2}<0.7$, and the last line holds for sufficient large $T$.
		
		From Equation \ref{equation_average}, we know that $\exists j\in\{1,2,\ldots,d\}$ such that $\E_{\P_j}[N_j]\leq0.9T$. As a result, we have:
		\begin{equation}
		\begin{aligned}
		Reg_{\P_j}(S)\geq&(1-0.9)T(\frac{l+\epsilon}{a_j}\cdot{a_j}-\frac{l}{a_{i_t}}\cdot a_{i_t})\text{, $\forall i_t\neq{j}$}\\
		=&0.1T(l+\epsilon-l)\\
		=&0.1T\epsilon\\
		=&0.1\sqrt{Td}.
		\end{aligned}
		\end{equation}
		Therefore,  the $\Omega(\sqrt{Td})$ regret bound holds.
	\end{proof}

\end{proof}
	
\subsection{Proof of \Cref{theorem:unfairness_lower_bound}}
\label{appendix:proof_unfairness_lower_bound}
\begin{proof}
	Prior to our technical analysis, we briefly introduce the roadmap of proving the unfairness lower bound.
	\begin{enumerate}[label = (\roman*)]
		\item We construct two different but very similar problem settings: one is exactly \Cref{example:random_policy}, the other is identical to \Cref{example:random_policy} except all probabilities of $0.5$ are now changed into $(0.5-\ep)$, where $\ep = C\cdot T^{-\frac12+\eta}$ for some super small constant $C\geq0$ and some small $\eta\geq0$. In the following, we may call them the ``Problem $0$'' (or $P_0$) and the ``Problem $\ep$'' (or $P_{\ep}$) sequentially.
		\item We derive the close-form solutions to both Problem $0$ and Problem $\ep$, where we also parameterize the reward function with the expected proposed price $V_r$ and the proposed accepted price $V_s$. Of course Problem $\ep$ is more general and we may get the solutions to Problem $0$ by simply let $\ep=0$.
		\item We show that there does not exist any policy $\pi$ that satisfies both of the following conditions simultaneously:
		\begin{itemize}
			\item $\pi$ is within $C_0\cdot T^{-\frac12+\eta}$-suboptimal (w.r.t. regret) and within $C_0\cdot T^{-\frac12+\eta}$-unfair (w.r.t. fairness) in $P0$.
			\item $\pi$ is within $C_0\cdot T^{-\frac12+\eta}$-suboptimal (w.r.t. regret) and within $C_0\cdot T^{-\frac12+\eta}$-unfair (w.r.t. fairness) in $P\ep$.
		\end{itemize}
		\item We show that any algorithm have to distinguish  $P_0$
	\end{enumerate}

	According to the roadmap above, we firstly construct the following example as the problem setting for lower bound proof.
	\begin{example}\label{example:lowerbound_construction}
		Customers form 2 disjoint groups: Group 1 takes 30\% proportion of customers, and Group 2 takes the rest 70\%. In specific,
		\begin{itemize}
			\item In Group 1, $40\%$ customers valuate the item as \$0, $(10\%+\ep)$ valuate customers it as \$0.625, and $(50\%-\ep)$ customers valuate it as \$1.
			\item In Group 2, $20\%$ customers valuate it as \$0, $(30\%+\ep)$ customers valuate it as \$0.7, and $(50\%-\ep)$ customers valuate it as \$1.
		\end{itemize}
		Here $\ep=C\cdot T^{-\frac12+\eta}$ is a small amount, where $0\leq C\leq 10^{-10}$. In other words, we have $\vv^{\top}=[\frac58, \frac7{10}, 1], F_1 =\diag\{0.6, 0.5-\ep, 0.5-\ep\}, F_2 = \diag\{0.8, 0.8, 0.5-\ep\}$ and our policy $\pi = (\pi^1, \pi^2)$ where $\pi^1, \pi^2\in\Delta^3$. Our goal is to approach the following optimal policy
		\begin{equation}
		\label{equ:pi_ep_star}
		\begin{aligned}
		\pi_{\ep,*} =& \argmax_{\pi=(\pi^1, \pi^2)\in\Pi} R(\pi; F_1, F_2)\\
		s.t.&\quad U(\pi) = 0\\
		&\quad S(\pi; F_1, F_2) = 0.\\
		\end{aligned}
		\end{equation}
	\end{example}
	For any policy $\pi$ feasible to the constraints in \Cref{equ:close_form}, denote its expected accepted price as $V_s$ (identical in both groups) and its expected proposed price as $V_r$ (identical in both groups as well). Notice that $V_r\geq V_s\geq\frac58$, we define $\alpha=V_r-V_s$ as their difference, and therefore we know that $\alpha\geq0$. Again, we denote $R(\pi, F_1, F_2)$ as $R(\pi)$ without causing misunderstandings. Here we propose the following lemma regarding \Cref{example:lowerbound_construction}.
	\begin{lemma}[Close-form solution to \Cref{example:lowerbound_construction}]
		For the problem setting defined in \Cref{example:lowerbound_construction}, we have:
		\begin{equation}
		\label{equ:close_form}
		\begin{aligned}
		\pi_{\ep,*}^1 = &[\frac{20-40\ep}{29-10\ep}, 0, \frac{9+30\ep}{29-10\ep}]^{\top},\\
		\pi_{\ep,*}^2 = &[0, \frac{25-50\ep}{29-10\ep}, \frac{4+40\ep}{29-10\ep}]^{\top}, \forall\ep\in[0,10^{-10}].
		\end{aligned}
		\end{equation}
		Besides, for any feasible policy $\pi$ and its corresponding $V_s$ and $\alpha$, we have:
		\begin{equation}
		\label{equ:reward_function_close_form}
		R(\pi) = \frac{71-30\ep}{100}\cdot V_s + \frac{(100-60\ep)-(142-60\ep)V_s}{(8V_s-5)(1-V_s)25}\cdot V_s\cdot \alpha.
		\end{equation}
		\label{lemma:close_form_solution_with_epsilon}
	\end{lemma}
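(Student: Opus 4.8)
The plan is to reduce \eqref{equ:pi_ep_star} to a two‑parameter optimization. First, the substantive‑fairness constraint forces both conditional expectations to a common value $V_s$, i.e. $\vv^{\top}F_e\pi^e=V_s\,(\vind^{\top}F_e\pi^e)$ for $e=1,2$; writing $A_e:=\vind^{\top}F_e\pi^e$ for the acceptance probability in group $e$, the reward collapses to $R(\pi)=V_s\big(qA_1+(1-q)A_2\big)$ with $q=\tfrac3{10}$. The procedural constraint fixes a common expected proposed price $V_r=\vv^{\top}\pi^1=\vv^{\top}\pi^2$; put $\alpha:=V_r-V_s\ge 0$, nonnegative because conditioning on acceptance shifts mass toward cheaper prices. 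It then remains to express $A_1,A_2$ — indeed all six entries of $\pi^1,\pi^2$ — in terms of $(V_s,\alpha)$.

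For each group $e$, the distribution $\pi^e\in\Delta^3$ is pinned down by three affine equations, $\|\pi^e\|_1=1$, $\vv^{\top}\pi^e=V_s+\alpha$, and $(\vv-V_s\vind)^{\top}F_e\pi^e=0$, so I would solve these two $3\times3$ systems with the explicit $\vv=[\tfrac58,\tfrac7{10},1]^{\top}$, $F_1=\diag(\tfrac35,\tfrac12-\ep,\tfrac12-\ep)$, $F_2=\diag(\tfrac45,\tfrac45,\tfrac12-\ep)$ from Example~\ref{example:lowerbound_construction}. The clean outcomes are $A_1=(\tfrac12-\ep)\big(1+\tfrac{8\alpha}{8V_s-5}\big)$ and $A_2=\tfrac45\big(1-\tfrac{\alpha}{1-V_s}\big)$, together with explicit rational expressions for every coordinate of $\pi^1$ (in particular the mass it puts on price $\tfrac7{10}$) and of $\pi^2$ (in particular the mass it puts on price $\tfrac58$). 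Substituting $A_1,A_2$ into $R=V_s(0.3A_1+0.7A_2)$ and collecting the $\alpha$‑term over the common denominator $(8V_s-5)(1-V_s)$ gives exactly \eqref{equ:reward_function_close_form}; this is routine algebra but must be carried out carefully to land on the stated constants. The two degenerate values $V_s=\tfrac58$ and $V_s=1$, where that denominator vanishes, correspond to the trivial constant policies and are handled directly.

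For the closed form \eqref{equ:close_form} I would maximize \eqref{equ:reward_function_close_form} over the feasible region in $(V_s,\alpha)$. For fixed $V_s$, $R$ is affine in $\alpha$, with slope of the same sign as $(100-60\ep)-(142-60\ep)V_s$; hence $R$ decreases in $\alpha$ whenever $V_s>\tfrac{100-60\ep}{142-60\ep}$, and for $\ep\le 10^{-10}$ this threshold is strictly below $\tfrac8{11}$. So near the optimum we want $\alpha$ as small as nonnegativity of $\pi^1,\pi^2$ allows; from the coordinate formulas one checks that for $V_s>\tfrac7{10}$ the binding lower constraint is that $\pi^2$ puts no mass on $\tfrac58$, which pins $\alpha=\alpha_{\min}(V_s)$. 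Substituting collapses $R$ to a one‑variable rational function of $V_s$ that is increasing on the feasible interval $\big[\tfrac7{10},\tfrac8{11}\big]$, whose right endpoint is exactly where $\pi^1$'s mass on $\tfrac7{10}$ also reaches $0$; so the maximizer is the vertex $V_s=\tfrac8{11}$, at which both of those masses vanish. Reading off the six coordinate formulas there yields \eqref{equ:close_form}. Two loose ends remain: the complementary regime $V_s<\tfrac{100-60\ep}{142-60\ep}$ (where instead $\alpha$ is pushed to its maximal feasible value) must be shown to give strictly smaller $R$, and the policy in \eqref{equ:close_form} must be checked by direct substitution to satisfy $U(\pi)=0$, $S(\pi;F_1,F_2)=0$ and to lie in $\Pi$ for every $\ep\in[0,10^{-10}]$.

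The hard part is this last optimization: the feasible region in $(V_s,\alpha)$ is cut out by six rational (not linear) inequalities, and which one is active moves with $V_s$ and, slightly, with $\ep$; showing that ``$\pi^2$ puts no mass on $\tfrac58$'' is the active lower bound and ``$\pi^1$ puts no mass on $\tfrac7{10}$'' is the cap on $V_s$ at the optimum — and that no vertex on the ``$\alpha$‑large'' part of the boundary beats $V_s=\tfrac8{11}$ — is where the genuine work lies. The hypothesis $\ep\le 10^{-10}$ serves only to keep the sign and ordering conditions (threshold below $\tfrac8{11}$, feasibility of the claimed vertex, $\alpha\ge0$) exactly as in the transparent $\ep=0$ case, so the whole argument is a mild perturbation of that case.
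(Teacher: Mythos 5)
Your proposal follows essentially the same route as the paper's proof: parametrize feasible policies by $(V_s,\alpha)$, solve the two $3\times3$ affine systems from $\|\pi^e\|_1=1$, $\vv^{\top}\pi^e=V_s+\alpha$, $(\vv-V_s\vind)^{\top}F_e\pi^e=0$ to obtain the coordinates and the reward formula, then optimize under the six nonnegativity constraints by noting that the $\alpha$-slope is negative for $V_s$ above the threshold $\tfrac{100-60\ep}{142-60\ep}$, that the binding lower bound is $\pi^2(1)\ge 0$ (with cap $V_s\le\tfrac{8+10\ep}{11+10\ep}$ where $\pi^1(2)=0$ as well), and that $R$ restricted to that boundary is increasing in $V_s$ — exactly the paper's case (b), with your acknowledged "loose end" for small $V_s$ corresponding to the paper's case (a) bound of $0.50019$. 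The intermediate quantities you introduce (the acceptance rates $A_1,A_2$) are consistent with the paper's expressions, so this is the same argument in slightly different bookkeeping.
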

	\begin{proof}[Proof of \Cref{lemma:close_form_solution_with_epsilon}]
		For any feasible policy $\pi=(\pi^1, \pi^2)$, it has to satisfy the following equations for $e=1,2$:
		\begin{equation*}
		\left\{
		\begin{aligned}
		&\ind^{\top}\pi^e=1\\
		&\vv^{\top}\pi^e=V_s + \alpha\\
		&\frac{\vv^{\top}F_e\pi^e}{\ind^{\top}F_e\pi^e} = V_s.
		\end{aligned}
		\right.
		\end{equation*}
		This is equivalent to the following linear equations system
		\begin{equation}
		\label{equ:feasibility_constraints}
		\left\{
		\begin{aligned}
		&\ind^{\top}\pi^e=1\\
		&\vv^{\top}\pi^e=V_s + \alpha\\
		&(\vv-V_s\cdot\ind)^{\top}F_e\pi^e = 0.
		\end{aligned}
		\right.
		\end{equation}
		This is further equivalent to $A_1\pi^1 = [1,V_s+\alpha, 0]^{\top}$ and $A_2\pi^2=[1,V_s+\alpha, 0]^{\top}$ where
		\begin{equation}
		\label{equ:A1_matrix}
		A_1(V_s,\ep) = 
		\begin{bmatrix}
		1 & 1 & 1 \\
		\frac58 & \frac7{10} & 1 \\
		(\frac58-V_s)\cdot\frac35 & (\frac7{10}-V_s)\cdot(\frac12-\ep) & (1-V_s)\cdot(\frac12-\ep)
		\end{bmatrix}
		.
		\end{equation}
		and
		\begin{equation}
		\label{equ:A2_matrix}
		A_2(V_s,\ep) = 
		\begin{bmatrix}
		1 & 1 & 1 \\
		\frac58 & \frac7{10} & 1 \\
		(\frac58 - V_s)\cdot\frac45 & (\frac7{10}-V_s)\cdot\frac45 & (1-V_s)\cdot(\frac12-\ep)\\
		\end{bmatrix}.
		\end{equation}
		Here we may omit the parameters $(V_s, \ep)$ without misunderstanding. For $V_s = \frac58$, the only possible policy is to propose the lowest price $\frac58$ for both groups, and the expected reward is $0.3\times\frac58\times\frac35 + 0.7\times\frac58\times\frac45 = 0.4625<0.5-\ep$. Therefore, it is suboptimal as its expected reward is less than that of a deterministic policy keep proposing $1$ as a price (whose reward is $0.5-\ep$). In the following, we only consider the case when $V_s>\frac58$. Solve these linear equation systems and get
		\begin{equation}
		\label{equ:pi_1_close_form}
		\begin{aligned}
		\pi^1 =& A_1^{-1}[1,V_s+\alpha, 0]^{\top}\\
		=&\frac1{3(8V_s-5)(1+10\ep)}\\
		&\begin{bmatrix}
		120\alpha(1-2\ep)\\
		-((1+10\ep)8V_s + 10(1-8\ep))\cdot\alpha - (8(1+10\ep)V_s^2-13(1+10\ep)V_s + (1+10\ep)5)\\
		10(8(1+10\ep)V_s - 2(1+28\ep))\cdot\alpha + (1+10\ep)(8V_s-5)(10V_s-7) 
		\end{bmatrix}
		\end{aligned}
		\end{equation}
		and
		\begin{equation}
		\label{equ:pi_2_close_form}
		\begin{aligned}
		\pi^2 =& A_2^{-1}[1, V_s+\alpha, 0]^{\top}\\
		 =& \frac1{3(1-V_s)(3+10\ep)}\\
		&\begin{bmatrix}
		4(((3+10\ep)10V_s - (6+100\ep))\alpha-(3+10\ep)(10V_s-7)(V_s-1))\\
		(-5)\cdot(((3+10\ep)8V_s-80\ep)\alpha+(3+10\ep)(8V_s-5)(V_s-1))\\
		24\alpha
		\end{bmatrix}.
		\end{aligned}
		\end{equation}
		On the one hand, we can get the explicit form of $R(\pi)$ w.r.t. $V_s$ and $\alpha$:
		\begin{equation}
		\label{equ:r_explicit}
		\begin{aligned}
		R(\pi)=&q\cdot\vv^{\top}F_1\pi^1 + (1-q)\cdot\vv^{\top}F_2\pi^2\\
		=&\frac{71-30\ep}{100}V_s+\frac{(100-60\ep)-(142-60\ep)V_s}{(8V_s-5)(1-V_s)25}\cdot V_s\alpha.
		\end{aligned}
		\end{equation}
		On the other hand, we have a few constraints to be applied. Since $\pi$ is a probabilistic distribution, we have $\pi^e(i)\geq0, e=1,2; i=1,2,3$, which lead to
		\begin{equation}
		\label{equ:constraints_piei_positive}
		\left\{
		\begin{aligned}
		&120\alpha(1-2\ep)\geq 0\\
		&-((1+10\ep)8V_s + 10(1-8\ep))\cdot\alpha - (8(1+10\ep)V_s^2-13(1+10\ep)V_s + (1+10\ep)5)\geq 0\\
		&10(8(1+10\ep)V_s - 2(1+28\ep))\cdot\alpha + (1+10\ep)(8V_s-5)(10V_s-7)\geq 0\\
		&4(((3+10\ep)10V_s - (6+100\ep))\alpha-(3+10\ep)(10V_s-7)(V_s-1))\geq 0\\
		&(-5)\cdot(((3+10\ep)8V_s-80\ep)\alpha+(3+10\ep)(8V_s-5)(V_s-1))\geq 0\\
		&24\alpha\geq 0.
		\end{aligned}
		\right.
		\end{equation}
		From \Cref{equ:constraints_piei_positive}, we may derive the following upper and lower bounds for $\alpha$.
		\begin{enumerate}[label=(\alph*)]
			\item The first line and the last line of \Cref{equ:constraints_piei_positive} is naturally satisfied.
			\item From the second line, we have
			\begin{equation}
			\label{equ:alpha_bound_one}
			\alpha\leq\frac{(1+10\ep)(8V_s-5)(1-V_s)}{(1+10\ep)8V_s+10(1-8\ep)}:=B_1.
			\end{equation}
			\item From the third line, we have
			\begin{equation}
			\label{equ:alpha_bound_two}
			\alpha\geq\frac{(1+10\ep)(8\cdot V_s-5)(7-10V_s)}{(1+10\ep)8\cdot V_s-2(1+28\ep)}\cdot\frac1{10}:=B_2.
			\end{equation}
			\item From the fourth line, we have
			\begin{equation}
			\label{equ:alpha_bound_three}
			\alpha\geq\frac{(3+10\ep)(10V_s-7)(1-V_s)}{(3+10\ep)10V_s-(6+100\ep)}:=B_3.
			\end{equation}
			\item From the fifth line, we have
			\begin{equation}
			\label{equ:alpha_bound_four}
			\alpha\leq\frac{(3+10\ep)(8V_s-5)(1-V_s)}{(3+10\ep)8\cdot V_s-80\ep}:=B_4.
			\end{equation}
		\end{enumerate}
	We get four constraints on $\alpha$ as above, where \Cref{equ:alpha_bound_one} and \Cref{equ:alpha_bound_four} are upper bounds, and \Cref{equ:alpha_bound_two} and \Cref{equ:alpha_bound_three} are lower bounds. Compare $B_1$ with $B_4$, we notice that
	\begin{equation}
	\begin{aligned}
	\frac{B_1}{B_4}=&\frac{\frac{80\ep}{3+10\ep}}{8V_s + 10\cdot\frac{1-8\ep}{1+10\ep}}\leq  1.
	\end{aligned}
	\end{equation}
	Therefore, \Cref{equ:alpha_bound_one} is tighter than \Cref{equ:alpha_bound_four}. For the comparison between $B_2$ and $B_3$, we notice that $B_2<0<B_3$ when $V_s>\frac7{10}$ and $B_2\geq0\geq B_3$ when $V_s\leq\frac7{10}$.
	
	In the following part, we derive the optimal policy by cases.
	\begin{enumerate}[label = (\alph*)]
		\item When $\frac58<V_s\leq\frac{50-30\ep}{71-30\ep}$, we have
		\begin{equation}
		\label{equ:when_v_s_is_rather_smallerthan_5071}
		\begin{aligned}
		R(\pi)=&\frac{71-30\ep}{100}V_s+\frac{(100-60\ep)-(142-60\ep)V_s}{(8V_s-5)(1-V_s)25}\cdot V_s\alpha\\
		\leq&\frac{71-30\ep}{100}V_s+\frac{(100-60\ep)-(142-60\ep)V_s}{(8V_s-5)(1-V_s)25}\cdot V_s\cdot B_1\\
		=&\frac{71-30\ep}{100}V_s+\frac{(100-60\ep)-(142-60\ep)V_s}{(8V_s-5)(1-V_s)25}\cdot V_s\cdot\frac{(1+10\ep)(8V_s-5)(1-V_s)}{(1+10\ep)8V_s+10(1-8\ep)}\\
		=&\frac{71-30\ep}{100}V_S+\frac{100-142V_s-60\ep(1-V_s)}{25(8V_s+10\cdot\frac{1-8\ep}{1+10\ep})}\cdot V_s\\
		=&\frac{71-30\ep}{100}V_S+\frac{100-142V_s-60\ep(1-V_s)}{25(8V_s+10)-\frac{450\ep}{1+10\ep}}\cdot V_s\\
		<&\frac{71-30\ep}{100}V_S+\frac{100-142V_s-60\ep(1-V_s)+\frac{450\ep}{1+10\ep}}{25(8V_s+10)}\cdot V_s\\
		<&\frac{71}{100}V_s + \frac{100.1-142V_s}{25(8V_s+10)}\cdot V_s\\
		=&\frac{71(8\cdot V_s+10)+(100.1-142\cdot V_s)\cdot4}{100(8\cdot V_s + 10)}\cdot V_s\\
		=&\frac{11104\cdot V_s}{1000(8\cdot V_s + 10)}\\
		\leq&\frac{11104}{8000}-\frac{\frac54\times11104}{1000(8\times\frac{50}{71}+10)}\\
		\leq &0.50019
		\end{aligned}
		\end{equation}
		Here the first inequality (line 2) is by $(100-60\ep)-(142-60\ep)V_s\geq0$ as $V_s\leq\frac{50-30\ep}{71-30\ep}$ and by $\alpha\leq B_1$, the second inequality (line 6) is by the fact that $V_s$'s coefficient is within $(0,1)$, the third inequality (line 7) is by $\ep\leq10^{-10}<\frac1{4500}$, the fourth inequality (line 10) is by $V_s\leq\frac{50}{71}$ and the last inequality (line 11) is by numerical computations. We will later show that 0.50019 is not optimal.
		\item When $V_s>\frac{50-30\ep}{71-30\ep}$, we know that $V_s>\frac7{10}$ as $\ep<\frac1{30}$. Therefore, we know $B_2<0<B_3$ and we have
		\begin{equation*}
		\alpha\geq B_3 = \frac{(3+10\ep)(10V_s-7)(1-V_s)}{(3+10\ep)10V_s-(6+100\ep)}.
		\end{equation*}
		As a result, we have
		\begin{equation}
		\label{equ:alpha_equal_b3}
		\begin{aligned}
		R(\pi)&=\frac{71-30\ep}{100}V_s-\frac{(142-60\ep)V_s-(100-60\ep)}{(8V_s-5)(1-V_s)25}\cdot V_s\alpha\\
		&\leq\frac{71-30\ep}{100}V_s-\frac{(142-60\ep)V_s-(100-60\ep)}{(8V_s-5)(1-V_s)25}\cdot V_s\cdot B_3\\
		&\leq\frac{71-30\ep}{100}V_s-\frac{(142-60\ep)V_s-(100-60\ep)}{(8V_s-5)(1-V_s)25}\cdot V_s\cdot\frac{(3+10\ep)(10V_s-7)(1-V_s)}{(3+10\ep)10V_s-(6+100\ep)}\\
		\end{aligned}
		\end{equation}
		Also, we can derive an upper bound for $V_s$ as $B_3\leq\alpha\leq B_1$.
		\begin{equation}
		\label{equ:b3_leq_b1}
		\begin{aligned}
		\frac{(3+10\ep)(10V_s-7)(1-V_s)}{(3+10\ep)10V_s-(6+100)}\leq&\frac{(1+10\ep)(8\cdot V_s-5)(1-V_s)}{(1+10\ep)8\cdot V_s+10(1-8\ep)}\\
		\Leftrightarrow\qquad V_s\leq&\frac{8+10\ep}{11+10\ep}.
		\end{aligned}
		\end{equation}
		Therefore, we may solve the maximal of $R(\pi)$ on $\frac{50-30\ep}{71-30\ep}\leq V_s\leq\frac{8+10\ep}{11+10\ep}$ by combining \Cref{equ:alpha_equal_b3}.
		\begin{equation}
		\label{equ:when_v_s_is_large_enough}
		\begin{aligned}
		\frac{\partial R(\pi)}{\partial V_s}&=\frac{3 (-3135 + 9870 V_s - 7491 V_s^2) + 3 \ep (-46430 + 145660 V_s - 114638 V_s^2)}{20 (-5 + 8 V_s)^2 (-3 - 50 \ep + 15 V_s + 50 \ep V_s)^2}\\
		&+\frac{ + 3 \ep^2 (97900 - 315800 V_s + 251740 V_s^2) + 3 \ep^3 (15000 - 30000 V_s + 15000 V_s^2)}{20 (-5 + 8 V_s)^2 (-3 - 50 \ep + 15 V_s + 50 \ep V_s)^2}\\
		&=\frac{-22473(V_s-\frac{1645-6\sqrt{2685}}{2497})(V_s-\frac{1645+6\sqrt{2685}}{2497})+3(-46430 + 145660 V_s - 114638 V_s^2)\ep + o(\ep^2)}{20 (-5 + 8 V_s)^2 (-3 - 50 \ep + 15 V_s + 50 \ep V_s)^2}\\
		&\gtrsim\frac{-22473(V_s-0.5343)(v_s-0.7833)}{20 (-5 + 8 V_s)^2 (-3 - 50 \ep + 15 V_s + 50 \ep V_s)^2}\\
		&>0.
		\end{aligned}
		\end{equation}
		Here the ``$\gtrsim$'' inequality is because the coefficient of $\ep$ in any monomial above is within $\pm10^6$, which indicates that any monomial containing $\ep$ is within $\pm0.0001$ . The last line is because $\frac{7}{10}\leq\frac{50-30\ep}{71-30\ep}\leq V_s\leq\frac{8+10\ep}{11+10\ep}\leq\frac{3}{4}$ and therefore $(V_s-0.5343)(v_s-0.7833)<0$. As a result, we know that $R(\pi)$ is monotonically increasing as $V_s$ increases within the range above. Therefore, we have:
		\begin{equation}
		\label{equ:optimal_r}
		\begin{aligned}
		R(\pi)\leq R(\pi)|_{\alpha=B_3}\leq R(\pi)|_{\alpha=B_3\text{ and } V_s = \frac{8+10\ep}{11+10\ep}}=\frac{37(1-2\ep)(4+5\ep)}{10(29-10\ep)}
		\end{aligned}
		\end{equation}
		By plugging in $V_s=\frac{8+10\ep}{11+10\ep}$ into $\alpha=B_3$ and the close-form feasible solutions of $\pi^1$ and $\pi^2$ (i.e., \Cref{equ:pi_1_close_form} and \Cref{equ:pi_2_close_form} ), we may get:
		\begin{equation}
		\label{equ:push_back}
		\begin{aligned}
		\alpha=B_3&=\frac{3(1+10\ep)(3+10\ep)}{2(29-10\ep)(11+10\ep)}\\
		\pi^1&=[\frac{20-40\ep}{29-10\ep}, 0, \frac{9+30\ep}{29-10\ep}]^{\top}\\
		\pi^2&=[0,\frac{25-50\ep}{29-10\ep}, \frac{4+40\ep}{29-10\ep}]^{\top}.
		\end{aligned}
		\end{equation}
		
		Pushing back \Cref{equ:push_back} to \Cref{equ:pi_ep_star}, we verify that $R(\pi)_{\max} = \frac{37(1-2\ep)(4+5\ep)}{10(29-10\ep)}$ and therefore all inequalities in \Cref{equ:optimal_r} hold as equalities.
	\end{enumerate}  
	Notice that $\frac{37(1-2\ep)(4+5\ep)}{10(29-10\ep)}>0.50019$, and therefore the optimal policy $\pi_{\ep, *}$ is what we derive in \Cref{equ:push_back}. This holds the lemma.
	\end{proof}
	With \Cref{lemma:close_form_solution_with_epsilon}, we know that $\pi_{\ep,*}^1=[\frac{20-40\ep}{29-10\ep}, 0, \frac{9+30\ep}{29-10\ep}]^{\top} $ and $ \pi_{\ep,*}^2=[0,\frac{25-50\ep}{29-10\ep}, \frac{4+40\ep}{29-10\ep}]^{\top} $. We denote $V_{s,\ep}^*:=\frac{8+10\ep}{3+10\ep}$ and $\alpha_{\ep}^*=\frac{3(1+10\ep)(3+10\ep)}{2(29-10\ep)(11+10\ep)}$ for future use. We also know that the optimal policy for \Cref{example:random_policy} (i.e., $\ep = 0$) is exactly what we proposed, i.e., $\pi_*^1=[\frac{20}{29}, 0, \frac{9}{29}]^{\top}$ and $\pi_*^2=[0, \frac{25}{29},\frac{4}{29}]^{\top}$.
	
	Let us go back to the two problems: $P_0$ defined in \Cref{example:random_policy} and $P_{\ep}$ defined in \Cref{example:lowerbound_construction}, where we consider the following four conditions:
	\begin{itemize}
		\item $\pi$ is within $C_0\cdot T^{-\frac12+\eta}$-suboptimal (w.r.t. regret) in $P_0$ (denoted as Condition A).
		\item $\pi$ is within $C_0\cdot T^{-\frac12+\eta}$-suboptimal (w.r.t. regret) in $P_{\ep}$ (denoted as Condition B).
		\item $\pi$ is within $C_0\cdot T^{-\frac12+\eta}$-unfair (w.r.t. fairness) in $P_0$ (denoted as Condition C).
		\item $\pi$ is within $C_0\cdot T^{-\frac12+\eta}$-unfair (w.r.t. fairness) in $P_{\ep}$ (denoted as Condition D).
	\end{itemize}
	According to our proof roadmap, we then prove the following lemma:
	\begin{lemma}[No policy fitting in $P_0$ and $P_{\ep}$]
	There exist constants $C_0>0$ 
	such that there does not exist any policy $\pi\in\Pi$ that satisfies all of Condition $ABCD$ (denoting $A\wedge B\wedge C\wedge D$) simultaneously.
	\label{lemma:no_perfect_policy}
	\end{lemma}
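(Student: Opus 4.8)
The plan is to derive a contradiction from three facts: the optimal policies of $P_0$ and $P_{\ep}$ are $\Theta(\ep)=\Theta(CT^{-\frac12+\eta})$ apart; being $C_0T^{-\frac12+\eta}$-near-optimal \emph{and} $C_0T^{-\frac12+\eta}$-near-fair in an instance forces a policy into a ball of radius $O(C_0T^{-\frac12+\eta})$ around that instance's optimum; and $C_0$ can be taken small relative to $C$ so the two balls are disjoint. All four conditions are genuinely needed: a policy near $\pi_*$ could have large reward in $P_{\ep}$ precisely by being $\Theta(\ep)$-unfair there (regrets are allowed to be negative), so it is $B$ and $D$ \emph{together}, not $B$ alone, that pin such a policy near $\pi_{\ep,*}$.

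First I would record the separation. By \Cref{lemma:close_form_solution_with_epsilon}, $\pi_*^1(1)=\tfrac{20}{29}$ and $\pi_{\ep,*}^1(1)=\tfrac{20-40\ep}{29-10\ep}$, so $\pi_*^1(1)-\pi_{\ep,*}^1(1)=\tfrac{960\ep}{29(29-10\ep)}\ge c_{\mathrm{sep}}\ep$ for an explicit $c_{\mathrm{sep}}>0$ when $\ep\in[0,10^{-10}]$; hence $\|\pi_*-\pi_{\ep,*}\|_\infty\ge c_{\mathrm{sep}}\,C\,T^{-\frac12+\eta}$.

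Next comes the core estimate, which I would state uniformly in $\ep$: there are constants $\kappa>0$ and $\rho_0>0$ such that every $\pi\in\Pi$ with $S(\pi;F_1^{(\ep)},F_2^{(\ep)})\le s$, $R(\pi;F_1^{(\ep)},F_2^{(\ep)})\ge R(\pi_{\ep,*})-r$ and $s+r\le\rho_0$ satisfies $\|\pi-\pi_{\ep,*}\|_\infty\le\kappa(s+r)$. To prove it I would (i) observe that near-optimality forces the accepted price away from $\tfrac58$ — any feasible policy with $V_s$ near $\tfrac58$ has reward $\approx0.4625$, a constant below $R(\pi_{\ep,*})$ — so the closed-form parametrisation $\pi^e=A_e(V_s,\ep)^{-1}[1,V_s+\alpha,0]^\top$ of \Cref{lemma:close_form_solution_with_epsilon} applies, with $A_e(V_s,\ep)^{-1}$ of bounded norm and bounded $V_s$-derivative; (ii) project an $s$-unfair $\pi$ onto the exactly-fair manifold by replacing $\pi^2$ with $A_2(V_s^1,\ep)^{-1}[1,V_r,0]^\top$, where $V_s^1,V_s^2$ are its two accepted prices and $V_r$ its common proposed price, which moves $\pi$ and its reward by $O(s)$; (iii) on the exactly-fair manifold use that $R=\tfrac{71-30\ep}{100}V_s+g_\ep(V_s)V_s\alpha$ with $g_\ep<0$ on the relevant range is maximised at the vertex $V_s=\tfrac{8+10\ep}{11+10\ep}$, $\alpha=B_3$, where $\pi^1(2)=0$ and $\pi^2(1)=0$ are simultaneously tight, and that $\partial R/\partial\alpha<0$ together with $(\partial R/\partial V_s)|_{\alpha=B_3}\ge c>0$ on that range (both shown inside \Cref{lemma:close_form_solution_with_epsilon}) yields the linear growth $R(\pi_{\ep,*})-R\ge c'[(V_{s,\ep}^*-V_s)+(\alpha-B_3(V_s))]$; combined with the bounded Jacobian of $(V_s,\alpha)\mapsto\pi$ this gives the claim, and since every constant above is continuous in $\ep$ on the compact set $[0,10^{-10}]$ they can be taken uniform.

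Finally I would close the argument: if some $\pi\in\Pi$ satisfied $A\wedge B\wedge C\wedge D$, then applying the core estimate with $\ep=0$ (using $A,C$) and with $\ep=CT^{-\frac12+\eta}$ (using $B,D$), with $s=r=C_0T^{-\frac12+\eta}$, gives $\|\pi-\pi_*\|_\infty\le2\kappa C_0T^{-\frac12+\eta}$ and $\|\pi-\pi_{\ep,*}\|_\infty\le2\kappa C_0T^{-\frac12+\eta}$ once $T$ is large enough that $2C_0T^{-\frac12+\eta}\le\rho_0$; by the triangle inequality and the separation, $c_{\mathrm{sep}}CT^{-\frac12+\eta}\le4\kappa C_0T^{-\frac12+\eta}$, impossible as soon as $C_0<c_{\mathrm{sep}}C/(4\kappa)$, so taking $C_0:=c_{\mathrm{sep}}C/(8\kappa)$ proves the lemma. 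I expect step (iii) to be the main obstacle: converting the sign information on the partials of $R$ into a genuine, $\ep$-uniform linear lower bound on the suboptimality near the vertex, while cleanly controlling the projection in step (ii), is exactly where the non-convexity of the substantive-fairness constraint has to be confronted by hand.
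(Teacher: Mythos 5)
Your plan is sound and, at its core, it is the same argument the paper runs: use the closed-form analysis of \Cref{lemma:close_form_solution_with_epsilon} (reward written as a function of $V_s$ and $\alpha$, the constraints $B_3\le\alpha\le B_1$, positivity of the derivative along the $B_3$ boundary, and a constant reward gap ruling out the low-$V_s$ regime) to show that near-optimality plus near-fairness in an instance pins $(V_s,\alpha)$ within $O(C_0T^{-\frac12+\eta})$ of that instance's vertex, then exploit that the same policy cannot be pinned near both instances' vertices, which are $\Theta(\ep)$ apart, once $C_0$ is small relative to $C$. The only real difference is the vehicle of the final contradiction: the paper stops at the scalar level, observing that the expected \emph{proposed} price $V_r=\vv^{\top}\pi^e$ is determined by the policy alone (hence identical in $P_0$ and $P_{\ep}$) yet would have to be within $o(\ep)$ of both $\tfrac{43}{58}$ and $\tfrac{43+10\ep}{58-20\ep}$; you instead push the $(V_s,\alpha)$-closeness through the bounded inverses $A_e(V_s,\ep)^{-1}$ to get closeness of the full policy vector to $\pi_{0,*}$ and $\pi_{\ep,*}$ and then use their $\ell_\infty$-separation. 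Your route needs the extra bounded-Jacobian step (which the paper only invokes later, in \Cref{lemma:distance_of_disjoint_set}), so it is marginally heavier but buys nothing additional here; the paper's $V_r$ comparison avoids it. Two cautions on your step (ii)--(iii): the coefficient $g_\ep(V_s)$ of $\alpha$ vanishes at $V_s=\tfrac{50-30\ep}{71-30\ep}$, so your blanket linear-growth claim $R(\pi_{\ep,*})-R\ge c'[(V_{s,\ep}^*-V_s)+(\alpha-B_3(V_s))]$ fails near that threshold; you must first use monotonicity along $\alpha=B_3(V_s)$ (and the constant gap to the sub-threshold regime) to confine $V_s$ to a neighborhood of $V_{s,\ep}^*$ where $|g_\ep|$ is bounded below, which your $\rho_0$-smallness condition permits but should be made explicit. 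Likewise the projected vector $A_2(V_s^1,\ep)^{-1}[1,V_r,0]^{\top}$ need not lie in the simplex, so the constraints $B_3\le\alpha\le B_1$ hold for it only up to $O(s)$ slack; this is harmless for the estimate (and mirrors the paper's own informal use of $\check\pi$ and \Cref{lemma:small_relaxation_gain}), but it should be acknowledged rather than treated as exact.
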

	\begin{corollary}\label{corollary:3-parity}
		The space of $\Pi$ can be divided as the following 3 subspaces:
		\begin{enumerate}
			\item Policies satisfying Condition $AC$ (denoted as Space AC).
			\item Policies satisfying Condition $BD$ (denoted as Space BD).
			\item Policies satisfying Condition (denoted as Outer Spaces) $\bar{A}\bar{B}\vee\bar{C}\bar{D}\vee\bar{A}\bar{D}\vee\bar{B}\bar{C}$.
		and these three subspaces are pairwise disjoint.
		\end{enumerate}
	\end{corollary}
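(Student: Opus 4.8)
The plan is to derive Corollary~\ref{corollary:3-parity} as a purely Boolean consequence of Lemma~\ref{lemma:no_perfect_policy}; it carries no analytic content of its own. For a fixed policy $\pi\in\Pi$, abbreviate by $A,B,C,D$ the truth values of the four Conditions $A,B,C,D$ defined above Lemma~\ref{lemma:no_perfect_policy} (``$\pi$ is $C_0\cdot T^{-\frac12+\eta}$-suboptimal in $P_0$'', resp.\ in $P_{\ep}$; ``$\pi$ is $C_0\cdot T^{-\frac12+\eta}$-unfair in $P_0$'', resp.\ in $P_{\ep}$), so that Space AC is the event $A\wedge C$, Space BD is the event $B\wedge D$, and the Outer Spaces form the event
\[
O:=(\bar A\bar B)\vee(\bar C\bar D)\vee(\bar A\bar D)\vee(\bar B\bar C).
\]
The first step is the distributive identity
\[
(\bar A\vee\bar C)\wedge(\bar B\vee\bar D)=(\bar A\bar B)\vee(\bar A\bar D)\vee(\bar B\bar C)\vee(\bar C\bar D)=O,
\]
i.e.\ $O=\overline{A\wedge C}\wedge\overline{B\wedge D}$: the Outer event is exactly the complement of $(\text{Space AC})\cup(\text{Space BD})$.

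Given this identity, covering is immediate: any $\pi$ lying in neither Space AC nor Space BD satisfies $\overline{A\wedge C}$ and $\overline{B\wedge D}$, hence satisfies $O$ and lies in the Outer Spaces, so $\Pi=(\text{Space AC})\cup(\text{Space BD})\cup(\text{Outer Spaces})$. Pairwise disjointness reduces to three checks. Space AC and the Outer Spaces are disjoint because every clause of $O$ forces $\bar A$ or $\bar C$, contradicting $A\wedge C$; symmetrically, since every clause of $O$ forces $\bar B$ or $\bar D$, Space BD and the Outer Spaces are disjoint. The only check that invokes Lemma~\ref{lemma:no_perfect_policy} is that Space AC and Space BD are disjoint: a policy in both would satisfy $A\wedge B\wedge C\wedge D$, i.e.\ all of Conditions $ABCD$, which Lemma~\ref{lemma:no_perfect_policy} forbids for the constant $C_0$ it supplies. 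Combining, the three sets are pairwise disjoint and exhaust $\Pi$, which is the claimed partition.

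There is no genuine obstacle here --- the substance lives entirely in Lemma~\ref{lemma:no_perfect_policy}, and this corollary is merely the bookkeeping that repackages ``no policy is simultaneously near-optimal and near-fair for both $P_0$ and $P_{\ep}$'' into a three-way partition that will be convenient downstream (for bounding how many rounds an algorithm can spend in each piece). The one point that needs a moment's care is verifying that the four-clause disjunction written for the Outer Spaces is precisely $\overline{A\wedge C}\wedge\overline{B\wedge D}$ and not something strictly larger; once that identity is confirmed, the cover and the two ``Outer vs.\ AC/BD'' disjointness statements are automatic, and only the ``AC vs.\ BD'' case consumes the lemma.
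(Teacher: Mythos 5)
Your proposal is correct and matches the paper's treatment: the paper states this corollary without a separate proof, treating it as an immediate Boolean consequence of Lemma~\ref{lemma:no_perfect_policy}, which is exactly the bookkeeping you carry out (the distributive identity showing the Outer event equals $\overline{A\wedge C}\wedge\overline{B\wedge D}$, plus the lemma for the AC--BD disjointness).
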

	\begin{proof}[Proof of \Cref{lemma:no_perfect_policy}]
		Let $C_1=\frac{C}{W}$ and $C_2=\frac{C}{W\cdot L}$ where $L>0$ is a constant from \Cref{lemma:small_relaxation_gain} and $W\geq 10$ to be specified later. Let $C_0 = \min\{C_1, C_2\}$, and we prove the lemma by contradiction. Suppose there exists a policy $\pi$ satisfies the four conditions above, and then we denote the expected accepted prices in $G_1$ and $G_2$ in Problem $\ep$ are $V_{s,\ep}$ and $V_{s, \ep}+\beta_{\ep}$ sequentially, where $\beta_{\ep}\in[0,C_2 T^{-\frac12+\eta}]$. Here we assume $\beta\geq0$ without losing generality as we will not use the specific property of $G_1$ versus $G_2$. Also, we denote $\alpha_{\ep}$ as the difference between the expected proposed price in both groups (denoted as $V_{r, \ep}$) and $V_{s, \ep}$.
		
		Now, consider a corresponding policy:
		\begin{equation}
		\label{equ:check_pi}
		\check{\pi}:=\left\{
		\begin{aligned}
		G_1&:\E[\text{accepted price}]=V_{s,\ep}, \E[\text{proposed price}]=V_{s,\ep}+\alpha_{\ep}\\
		G_2&:\E[\text{accepted price}]=V_{s,\ep}, \E[\text{proposed price}]=V_{s,\ep}+\alpha_{\ep}\\
		\end{aligned}
		\right.
		\end{equation}
		According to \Cref{equ:b3_leq_b1}, we know that $\frac58\leq V_{s,\ep}\leq V_{s,\ep}^*=\frac{8+10\ep}{11+10\ep}$ and $R(\check{\pi})\leq R(\pi_{\ep,*})$. Therefore, we have:
		\begin{equation}
		\label{equ:performance_difference_pi_pispstar}
		\begin{aligned}
		R(\pi)&\leq R(\check{\pi}) + L\cdot\beta_{\ep}\\
		&\leq R(\pi_{\ep, *})-\min_{V_s\in[\frac58, \frac{8+10\ep}{11+10\ep}]}\frac{\partial R(\pi)}{\partial V_s}\cdot (V_{s,\ep}^*-V_{s,\ep}) + L\cdot\beta_{\ep}\\
		&\leq R(\pi_{\ep,*})-\frac14\cdot(V_{s,\ep}^*-V_{s,\ep}) + L\cdot\beta_{\ep}.
		\end{aligned}
		\end{equation}
		Here the first line comes from \Cref{lemma:small_relaxation_gain}, the second line comes from the fact that $f(x_1)-f(x_2)\geq\min_x(f'(x))(x_1-x_2)$ for $x_1\geq x_2$ and $f'(x)>0$, and the third line comes from the fact that $\frac{\partial R(\pi)}{\partial V_s}\geq\frac14$ for $V_s\in[0.625, 0.728]$ as $0.728>\frac{8+10\ep}{11+10\ep}$ for $\ep\leq10^{-10}$. Also, since $\pi$ satisfies a low-regret condition, we have
		\begin{equation*}
		R(\pi_{\ep, *})-R(\pi)\leq C_1\cdot T^{-\frac12+\eta}.
		\end{equation*}
		Combining with \Cref{equ:performance_difference_pi_pispstar}, we have
		\begin{equation}
		\label{equ:vs_difference_bound}
		\begin{aligned}
		\frac14(V_{s,\ep}^*-V_{s,\ep})-L\cdot\beta_{\ep}&\leq{C_1} T^{-\frac12+\eta}\\
		\Rightarrow\qquad(V_{s,\ep}^*-V_{s,\ep})&\leq{C_1} T^{-\frac12+\eta}+L\cdot\beta_{\ep}\\
		&\leq(C_1+LC_2) T^{-\frac12+\eta}.
		\end{aligned}
		\end{equation}
		Notice that this is suitable for any $\ep\in[0,10^-10]$, we may have the same result for both $\ep = {C} T^{-\frac12+\eta}$ and for $\ep = 0$. We denote $\ep_0 = 0$ and $\ep_1 = {C} T^{-\frac12+\eta}$ where $C=10^{-10}$. Therefore, we have:
		\begin{equation}
		\label{equ:vs_difference_bound_in_two_problem}
		\begin{aligned}
		(V_{s,\ep_0}^*-V_{s,\ep_0})\leq&(C_1+LC_2) T^{-\frac12+\eta},\\
		(V_{s,\ep_1}^*-V_{s,\ep_1})\leq&(C_1+LC_2) T^{-\frac12+\eta}.
		\end{aligned}
		\end{equation}
		Now let us bound $(\alpha^*_{\ep}-\alpha_{\ep})$ for both $\ep_0$ and $\ep_1$. From \Cref{equ:alpha_bound_one} and \Cref{equ:alpha_bound_three}, we have
		\begin{equation}
		\begin{aligned}
		B_3|_{V_s=V_{s,\ep}}&\leq\alpha_{\ep}\leq B_1|_{V_s=V_{s,\ep}}\\
		B_3|_{V_s=V_{s,\ep}^*}&=\alpha_{\ep}^*= B_1|_{V_s=V_{s,\ep}^*}\\
		\Rightarrow\qquad\min_{V_s\in[0.7, 0.75]}\{\frac{\partial B_3}{\partial V_s}, \frac{\partial B_1}{\partial V_s}\}\cdot(V_{s,\ep}^*-V_{s,\ep})&\leq(\alpha_{\ep}^*-\alpha_{\ep})\leq\max_{V_S\in[0.7, 0.75]}\{\frac{\partial B_3}{\partial V_s}, \frac{\partial B_1}{\partial V_s}\}(V_{s,\ep}^*-V_{s,\ep})\\
		\Rightarrow\qquad0\leq0.05(V_{s,\ep}^*-V_{s,\ep})&\leq(\alpha_{\ep}^*-\alpha_{\ep})\leq0.6(V_{s,\ep}^*-V_{s,\ep}).
		\end{aligned}
		\end{equation}
		Therefore, we have:
		\begin{equation*}
		0\leq V_{r,\ep}^*-V_{r,\ep}=V_{s, \ep}^* + \alpha_{\ep}^*-(V_{s,\ep}+\alpha_{\ep})\leq(1+0.6)(V_{s,\ep}^*-V_{s,\ep})\leq \frac85\cdot(C_1+LC_2) T^{-\frac12+\eta}.
		\end{equation*}
		Therefore, we know that
		\begin{equation}
		\label{equ:ve}
		\begin{aligned}
		V_{r,\ep_0}^*\geq& V_{r,\ep_0}\geq V_{r,\ep_0}^*-\frac85\cdot(C_1+LC_2) T^{-\frac12+\eta},\\
		V_{r,\ep_1}^*\geq& V_{r, \ep_1}\geq V_{r, \ep_1}^*-\frac85\cdot(C_1+LC_2) T^{-\frac12+\eta}.\\
		\Rightarrow\qquad & |V_{r, \ep_1} - V_{r, \ep_0}|\geq |V_{r,\ep_0}^*- V_{r, \ep_1}^*|-(C_1+LC_2) T^{-\frac12+\eta}.
		\end{aligned}
		\end{equation}
		HOWEVER, we have $V_{r, \ep_1} = V_{r, \ep_0}$ since they are the expected proposed price of the same pricing policy $\pi$ in $P_0$ and $P_{\ep}$ where the prices sets are all the same! Therefore, we have $|V_{r,\ep_0}^*- V_{r, \ep_1}^*|-(C_1+LC_2) T^{-\frac12+\eta}\leq0$. Since $V_{r,\ep_0}^*=\frac{43}{58}$ and $V_{r, \ep_1} = \frac{43+10\ep_1}{58-20\ep_1}$, we have $|V_{r, \ep_0}^*-V_{r, \ep_1}^*|=\frac{360\ep}{29(29-10\ep)}\geq\frac{C}3\cdot T^{-\frac12+\eta}$. Since $C_1 =\frac{C}{W}\leq 10^{-11}$ and $C_2=\frac{C}{W\cdot L}\leq\frac1L\times 10^-11$, we know that $|V_{r, \ep_0}^*-V_{r, \ep_1}^*|>(C_1+LC_2) T^{-\frac12+\eta}$, which contradicts to the inequality we derived. Therefore, the lemma is proved by contradiction.
	\end{proof}
	In the following, we set $\ep=\ep_1= {C} T^{-\frac12+\eta}$ where $C=10^-10$ as is defined in the proof of \Cref{lemma:no_perfect_policy}. Now let us go back to the main stream of proving \Cref{theorem:unfairness_lower_bound}. We also make it by contradiction. For any given $C_x$, without loss of generality, we may assume that $C_u\leq C_x$ to be specified later. Define $x=\frac{C_x}{\log T}$, and therefore $C_xT^{\frac12} = T^{\frac12-x}$. We will make use of \Cref{example:random_policy} and \Cref{example:lowerbound_construction}, and let $\eta>x$ to be specified later. Therefore, we have $C_u\cdot T^{\frac12}\leq C_x\cdot T^{\frac12}=T^{\frac12-x}$, which means that the contradiction is a \textbf{sufficient condition} to the following result: Suppose there exists an $x>0$ and an algorithm such that it can always achieve $O(T^{\frac12+x})$ regret with zero procedural unfairness and $O(T^{\frac12-x})$ substantive unfairness. According to \Cref{corollary:3-parity}, we know that any policy $\pi\in\Pi$ are in exact one of those three spaces. In our problem setting, denote the policy we take at time $t=1,2,\ldots, T$ as $\pi_t$.  Now we show that: among all policies $\{\pi_t\}_{t=1}^T$ we have taken, there are at most $O(T^{1-\eta+x})$ policies in all $T$ policies having been played belonging to the Outer Space defined in \Cref{corollary:3-parity}. In fact, for any policy $\pi$ in the Outer Space, we have:
	\begin{enumerate}[label=(\roman*)]
		\item When $\pi\in\bar{A}\bar{B}$, the policy $\pi$ will definitely suffer a regret $C_0\cdot T^{-\frac12+\eta}$, no matter which the problem setting is (i.e., $P_0$ or $P_{\ep}$). In order to guarantee $O(T^{\frac12+x})$ regret, there are at most $N_1=O(T^{1-\eta+x})=o(T)$ rounds to play a policy in $\bar{A}\bar{B}$.
		\item When $\pi\in\bar{C}\bar{D}$, the policy $\pi$ will definitely suffer a substantive unfairness $C_0\cdot T^{-\frac12+\eta}$, no matter which the problem setting is (i.e., $P_0$ or $P_{\ep}$). In order to guarantee $O(T^{\frac12-x})$ regret, there are at most $N_2 = O(T^{1-\eta-x})=o(T)$ rounds to play a policy in $\bar{C}\bar{D}$.
		\item When $\pi\in\bar{A}\bar{D}\vee\bar{B}\bar{C}$, in either $P_0$ or $P_{\ep}$ it suffers something (that could be either $C_0\cdot T^{-\frac12+\eta}$ regret or $C_0\cdot T^{-\frac12+\eta}$ unfairness). As we have to guarantee $O(T^{\frac12+x})$ regret and $O(T^{\frac12-x})$ substantive unfairness, there are still at most $N_3 = O(\max\{T^{1-\eta+x}, T^{1-\eta-x}\})=O(T^{1-\eta+x})=o(T)$.
	\end{enumerate}
	Therefore, the number of rounds when we select and play a policy from Space AC or Space BD is at least $T-o(T)\geq\frac T2$. Notice that if a policy in $AC$, then it performs well in $P_0$ but not necessarily in $P_{\ep}$. Similarly, if a policy in $BD$, then it performs well in $P_{\ep}$ but not necessarily in $P_0$. Therefore, two questions emerges:
	
	\begin{itemize}
		\item How do policies in $AC$ perform in $P_{\ep}$? and How do policies in $BD$ perform in $P_0$? Specifically, we only care about the substantive fairness.
		\item How can we distinguish between $P_{\ep}$ and $P_0$?
	\end{itemize}
	For distinguishablity, denote $F_1(\ep) = \diag\{0.6,0.5-\ep, 0.5-\ep\}$ and $F_2(\ep) = \diag\{0.8,0.8,0.5-\ep\}$. Denote $S_0(\pi):=S(\pi, F_1(0), F_2(0))|_{\ep=0}$ and $S_{\ep}(\pi):=S(\pi, F_1(\ep), F_2(\ep))$. In the following, we propose two lemmas that help us prove. The first lemma, \Cref{lemma:distance_of_disjoint_set}, shows that failing to distinguish would lead to large substantive unfairness, which answers the first question above.
	\begin{lemma}
		\label{lemma:distance_of_disjoint_set}
		There exists a constant $C_{ac}$ such that: for any policy $\pi\in AC$, we have $S_{\ep}(\pi)>C_{ac}\cdot T^{-\frac12+\eta}$. There also exists a constant $C_{bd}$ such that: for any policy $\pi\in BD$, we have $S_0(\pi)>C_{bd}\cdot T^{-\frac12+\eta}$.
	\end{lemma}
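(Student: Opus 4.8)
The plan is to exploit the rigidity already established inside the proof of \Cref{lemma:no_perfect_policy}: a policy that is near-optimal \emph{and} near-fair in one of the two problems is pinned, up to $O(C_0 T^{-\frac12+\eta})$ error in $\ell_2$, to that problem's optimal policy. Since the reference optimal policies $\pi_*$ (for $P_0$) and $\pi_{\ep,*}$ (for $P_\ep$) already carry a substantive unfairness of \emph{exact} order $\ep=\Theta(T^{-\frac12+\eta})$ when evaluated under the \emph{other} problem, and since $S(\cdot;F_1,F_2)$ is Lipschitz in $\pi$ (as noted after \Cref{assumption:f_min}), a triangle inequality gives the claim — provided the pinning error, whose constant is proportional to $C_0=\min\{C/W,C/(WL)\}$, is made smaller than the $\Theta(\ep)$ unfairness of the reference policy by choosing the free parameter $W$ large.

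In detail, for $\pi\in AC$ I would first re-run, with $\ep'=0$, the $\check\pi$-comparison of \Cref{lemma:no_perfect_policy} (using $\partial R/\partial V_s\geq\tfrac14$ on the relevant range, \Cref{lemma:small_relaxation_gain}, and the ordering $B_3\leq\alpha\leq B_1$ together with \Cref{equ:b3_leq_b1}); this uses only Conditions $A$ and $C$ and yields that both group-wise expected accepted prices under $P_0$ lie within $O(C_0 T^{-\frac12+\eta})$ of $V_{s,0}^{*}=\tfrac8{11}$ and that the common expected proposed price $V_r(\pi)$ lies within $O(C_0 T^{-\frac12+\eta})$ of $V_{r,0}^{*}=\tfrac{43}{58}$. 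Substituting $(V_s,\alpha)=(V_{s,e,0}(\pi),\,V_r(\pi)-V_{s,e,0}(\pi))$ and $\ep=0$ into the closed forms \Cref{equ:pi_1_close_form} and \Cref{equ:pi_2_close_form} — which are smooth in $(V_s,\alpha)$ near $V_s=\tfrac8{11}$ since their denominators $3(8V_s-5)$ and $3(1-V_s)$ do not vanish there — gives $\|\pi-\pi_*\|_2=O(C_0 T^{-\frac12+\eta})$. A direct evaluation then shows $S_\ep(\pi_*)=\bigl|\tfrac{12-9\ep}{16.5-9\ep}-\tfrac{16-4\ep}{22-4\ep}\bigr|$, which equals $0$ at $\ep=0$ and whose $\ep$-derivative there is $-\tfrac{40.5}{272.25}+\tfrac{24}{484}\neq0$, so $S_\ep(\pi_*)\geq c_0\ep=c_0 C T^{-\frac12+\eta}$ for small $\ep$ and a fixed $c_0>0$; combined with Lipschitzness, $S_\ep(\pi)\geq c_0 C\,T^{-\frac12+\eta}-L_S\cdot O(C_0)\,T^{-\frac12+\eta}$, and taking $W$ large makes the coefficient a positive constant $C_{ac}$.

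The case $\pi\in BD$ is the mirror image: near-optimality and near-fairness in $P_\ep$ force $\|\pi-\pi_{\ep,*}\|_2=O(C_0 T^{-\frac12+\eta})$ by the same closed-form substitution (now with the $\ep$-dependent denominators, still nonvanishing near $V_s=V_{s,\ep}^{*}$), and a direct computation from \Cref{equ:close_form} gives $S_0(\pi_{\ep,*})=\bigl|\tfrac{12}{16.5-9\ep}-\tfrac{16-8\ep}{22-20\ep}\bigr|$, again $0$ at $\ep=0$ with nonzero $\ep$-derivative, hence $\geq c_0'\ep$; the identical Lipschitz-plus-large-$W$ step yields $S_0(\pi)\geq C_{bd}\,T^{-\frac12+\eta}$. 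I expect the main obstacle to be the step that upgrades closeness of the scalar summaries $(V_s,V_r)$ to closeness of the policies themselves in $\ell_2$: this needs the closed-form inversions to be well-conditioned at the respective optima and careful bookkeeping that every error term is a fixed multiple of $C_0$, so that it is genuinely dominated by the $\Theta(\ep)$ unfairness of the reference policy; a secondary point is confirming, via the derivative computations above, that $S_\ep(\pi_*)$ and $S_0(\pi_{\ep,*})$ really are $\Theta(\ep)$ and not $o(\ep)$.
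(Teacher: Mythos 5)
Your proposal is correct and follows essentially the same route as the paper's proof: use the bounds extracted from the proof of \Cref{lemma:no_perfect_policy} (Conditions $A$ and $C$ only) to pin $(V_s,\alpha)$ and hence, via the closed-form inversions \Cref{equ:pi_1_close_form}--\Cref{equ:pi_2_close_form}, the policy itself to within $O(C_0 T^{-\frac12+\eta})$ of $\pi_{0,*}$, observe that the cross-problem unfairness of the reference policy is $\Theta(\ep)$ (your derivative computation agrees with the paper's exact value $S_\ep(\pi_{0,*})=\frac{12\ep(1-2\ep)}{(11-6\ep)(11-10\ep)}$), and finish by Lipschitzness of $S$, the triangle inequality, and choosing $W$ large so the pinning error is dominated; the mirrored argument handles $BD$. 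The only cosmetic difference is that the paper routes the comparison through the intermediate policy $\check\pi$ with explicit $\ell_1$ constants, whereas you absorb that step into the well-conditioning of the closed-form maps, which is the same idea.
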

	\begin{proof}[Proof of \Cref{lemma:distance_of_disjoint_set} ]
		We firstly prove the first half of this lemma, and then demonstrate the second half (which can be proved in exact the same way.)
		
		First of all, we have the close-form solution to both $P_0$ and $P_{\ep}$ in \Cref{equ:close_form}. Therefore, we have
		\begin{equation}
		\label{equ:plug_pi_0_in_s_e}
		S_{\ep}(\pi_{0, *})=\frac{12\ep(1-2\ep)}{(11-6\ep)(11-10\ep)}.
		\end{equation}
		Now, consider any policy $\pi\in AC$. Similar to the Proof of \Cref{lemma:no_perfect_policy}, we define its accepted prices in $G_1$ and $G_2$ are $V_{s,0}$ and $V_{s,0}+\beta$ where $\beta\in[0,C_2T^{-frac12+\eta}]$. We also denote the expected proposed price in both group as $V_{r,0}=V_{s,0} + \alpha_0$. Also, define a corresponding policy $\check{\pi}$:
		\begin{equation}
		\label{equ:check_pi_0}
		\check{\pi}:=\left\{
		\begin{aligned}
		G_1&:\E[\text{accepted price}]=V_{s,0}, \E[\text{proposed price}]=V_{s,0}+\alpha_{0}\\
		G_2&:\E[\text{accepted price}]=V_{s,0}, \E[\text{proposed price}]=V_{s,0}+\alpha_{0}.
		\end{aligned}
		\right.
		\end{equation}
		Notice that $\pi^1 = (A_1(V_{s,0}, 0))^{-1}[1, V_{r,0}, 0]^{\top}$ and $\pi^2 = (A_2(V_{s,0}, 0))^{-1}[1, V_{r,0}, \beta\cdot\ind^{\top}F_2\pi^2]^{\top}$. In comparison, we have $\check{\pi}^1 = (A_1(V_{s,0}, 0))^{-1}[1, V_{r,0}, 0]^{\top}$ and $\check{\pi}^2 = (A_2(V_{s,0}, 0))^{-1}[1, V_{r,0}, 0]^{\top}$. Therefore, we have:
		\begin{equation}
		\label{equ:policy_lipschitz}
		\begin{aligned}
		\pi^1 & = \check{\pi}^1\\
		\|\pi^2 - \check{\pi}^2\|_1 & = \|(A_2(V_{s,0}, 0))^{-1}([1, V_{r,0}, \beta\cdot\ind^{\top}F_2\pi^2]^{\top}-[1,V_{r,0},0]^{\top})\|_1\\
		&\leq\|(A_2(V_{s,0},0))^{-1}[0,0,\beta]\|_1\\
		&=\|(A_2(V_{s,0},0))^{-1}_{[:,3]}\|_1\cdot\beta\\
		&\leq\frac{100\beta}{9(7V_s-5)}.
		\end{aligned}
		\end{equation}
		Also, since $F_{\min}\leq\ind^{\top}F_2\pi^2\leq1$ and $\|\vv^{\top}F_2\|_1\leq d$ always hold, we know that $\|\frac{\partial S_{\ep}(\pi)}{\partial \pi^2}\|\leq\frac d{F_{\min}}\cdot \|\pi^2\|_1=\frac d{F_{\min}}$. Since $V_{s,0}^*\approx\frac8{11}$ and all $V_{s,0}$ we consider are around it (According to \Cref{equ:vs_difference_bound_in_two_problem}), we may assume that $(7V_s-5)>\frac12\cdot(\frac8{11}-\frac57)>\frac1{200}$ Therefore, we have:
		\begin{equation}
		\label{equ:bound_s_2}
		\begin{aligned}
		|S_{\ep}(\check{\pi})-S_{\ep}(\pi)|&\leq\frac d{F_{\min}}\|\check{\pi}^2-\pi^2\|_2\\
		&\leq \frac {100d \beta}{9(7\cdot V_s - 5)F_{\min}}\\
		&\leq \frac {100d C_2}{9(7\cdot V_s - 5)F_{\min}}\cdot T^{-\frac12+\eta}\\
		&\leq \frac {3000d C_2}{F_{\min}}\cdot T^{-\frac12+\eta}.
		\end{aligned}
		\end{equation}
		Also, according to the proof of \Cref{lemma:no_perfect_policy}, we know that $|V_{s,0}-V_{s,0}^*|\leq(C_1+LC_2)T^{-\frac12+\eta}$ and $|\alpha_0^*-\alpha_0|\leq0.6(C_1+LC_2)T^{-\frac12+\eta}$ (as $\ep = 0$). Plugging in \Cref{equ:pi_1_close_form} and \Cref{equ:pi_2_close_form}, we have:
		\begin{equation}
		\label{equ:lip_check_to_star}
		\begin{aligned}
		\|\pi^1_{0,*}-\check{\pi}^1\|_1\leq & 50\cdot((120+8+10+10\times(8+2))|\alpha_0^*-\alpha_0|+(13+106)|V_{s,0}-V_{s,0}^*|)\\
		\leq & 1309(C_1+LC_2)T^{-\frac12+\eta}\\
		\|\pi^2_{0,*}-\check{\pi}^2\|_1\leq & \frac1{3\times0.2\times3}((4\times36+120+24)|\alpha_0^*-\alpha_0|+(120 + 51+120+39 )|V_{s,0}-V_{s,0}^*|)\\
		\leq & 350(C_1+LC_2)T^{-\frac12+\eta}\\
		\end{aligned}
		\end{equation}
		Therefore, we have:
		\begin{equation}
		\begin{aligned}
		|S_{\ep}(\pi_{0,*})-S_{\ep}(\check{\pi})\leq & \frac d{F_{\min}}\|\pi_{0,*}-\check{\pi}\|_2\\
		=&\frac d{F_{\min}}(\|\pi_{0,*}^1-\check{\pi}^1\|_2+\|\pi_{0,*}^2-\check{\pi}^2\|_2)\\
		\leq&\frac d{F_{\min}}(\|\pi_{0,*}^1-\check{\pi}^1\|_1+\|\pi_{0,*}^2-\check{\pi}^2\|_1)\\
		\leq&\frac d{F_{\min}}\cdot( 1309(C_1+LC_2)T^{-\frac12+\eta} +  350(C_1+LC_2)T^{-\frac12+\eta})\\
		\leq&\frac d{F_{\min}}2000(C_1+LC_2)T^{-\frac12 + \eta}.
		\end{aligned}
		\end{equation}
		Recall that $C_1 = \frac{C}{W}$ and $C_2 = \frac{C}{W\ldots L}$. Now, we let $W= 10^6\frac{d}{F_{\min}}$. Therefore, we have:
		\begin{equation}
		\label{equ:lipschitz_derive_fairness_bound}
		\begin{aligned}
		S_{\ep}(\pi)=&S_{\ep}(\pi)-S_{\ep}(\check{\pi})+S_{\ep}(\check{\pi})-S_{\ep}(\pi_{0,*})+S_{\ep}(\pi_{0,*})\\
		\geq&S_{\ep}(\pi_{0,*})-|S_{\ep}(\pi)-S_{\ep}(\check{\pi})|-|S_{\ep}(\check{\pi})-S_{\ep}(\pi_{0,*})|\\
		\geq&\frac{12\ep(1-2\ep)}{(11-2\ep)(11-6\ep)}- \frac {3000d C_2}{F_{\min}}\cdot T^{-\frac12+\eta}- \frac d{F_{\min}}2000(C_1+LC_2)T^{-\frac12 + \eta}\\
		\geq&\frac1{20}\ep- \frac {5000d (C_1+LC_2)}{F_{\min}}\cdot T^{-\frac12+\eta}\\
		=&\frac1{20}C\cdot T^{-\frac12+\eta} - \frac {5000d C}{F_{\min}\cdot W}\cdot T^{-\frac12+\eta}\\
		\geq&\frac1{20}C\cdot T^{-\frac12+\eta} -\frac1{200}C\cdot T^{-\frac12+\eta}\\
		\geq&\frac1{30}C\cdot T^{-\frac12+\eta}.
		\end{aligned}
		\end{equation}
		Let $C_{ac} = \frac1{30}\cdot C$ and this lemma holds.
	\end{proof}
	Define $\P_{P_0}$ and $\P_{P_{\ep}}$ as the probabilistic distribution of customer's feedback at each round. In order to increase the information for distinguishing between two problem settings, we assume that a customer would always tell us whether or not she accept the price $\$1$, at each time $t=1,2,\ldots, T$. Therefore, both $\P_{P_0}$ and $\P_{P_{\ep}}$ are binomial distributions $B(T, 0.5)$ and $B(T, 0.5-\ep)$. Here we present another lemma, the \Cref{lemma:indistinguishability}, that indicates the hardness of distinguishing the two settings.
	\begin{lemma}
		Consider the $N\geq\frac T 2$ rounds when we play a policy in $AC\vee BD$. For any algorithm $\phi$, denote $\phi_t=1$ if $\pi_t\in AC$ and $\phi_t=0$ if $\pi_t\in BD$. Then we have:
		\begin{equation}
		\label{equ:lemma_i_t_lower_bound}
		\max\{\E_{P_0}[\sum_{t=1}^N\phi_t], \E_{P_{\ep}}[\sum_{t=1}^N(1-\phi_t)]\}\geq\frac1{8} T\cdot \exp(-T^{2\eta}).
		\end{equation}
		\label{lemma:indistinguishability}
	\end{lemma}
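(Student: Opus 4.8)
The plan is to prove \Cref{lemma:indistinguishability} by a change-of-measure argument, using the Bretagnolle--Huber inequality in place of Pinsker's inequality, since the relevant KL divergence is bounded but generally not small.

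First I would fix notation: set $Y:=\sum_{t=1}^N\phi_t$ with $\phi_t=\ind(\pi_t\in AC)$. Because $\phi_t$ depends only on the feedback observed before round $t$, $Y$ is an $[0,N]$-valued function of the feedback sequence; under $P_0$ this sequence is i.i.d.\ $\mathrm{Ber}(1/2)$ and under $P_{\ep}$ it is i.i.d.\ $\mathrm{Ber}(1/2-\ep)$ — exactly the claim $\P_{P_0}=B(T,\tfrac12)$, $\P_{P_{\ep}}=B(T,\tfrac12-\ep)$ stated just before the lemma, which rests on the fact that in \Cref{example:lowerbound_construction} every price other than $\$1$ has acceptance rate independent of $\ep$, together with the extra reported bit for price $\$1$. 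Using $\max\{a,b\}\ge\tfrac12(a+b)$ and $\sum_{t=1}^N(1-\phi_t)=N-Y$,
\[
\max\!\big\{\E_{P_0}[Y],\,\E_{P_{\ep}}[N-Y]\big\}\ \ge\ \tfrac12\big(N+\E_{P_0}[Y]-\E_{P_{\ep}}[Y]\big)\ \ge\ \tfrac{N}{2}\big(1-\TV(\P_{P_0},\P_{P_{\ep}})\big)\ \ge\ \tfrac{T}{4}\big(1-\TV(\P_{P_0},\P_{P_{\ep}})\big),
\]
where the middle step is $|\E_{P_0}[Y]-\E_{P_{\ep}}[Y]|\le N\,\TV(\P_{P_0},\P_{P_{\ep}})$ (valid since $Y\in[0,N]$) and the last uses $N\ge T/2$.

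It then remains to lower bound $1-\TV$. Here Pinsker is useless (it gives $1-\TV\ge 1-\sqrt{\mathrm{KL}/2}$, which can be negative), so I would invoke Bretagnolle--Huber, $1-\TV(\P,\Q)\ge\tfrac12\exp(-\mathrm{KL}(\P\,\|\,\Q))$. By tensorization, $\mathrm{KL}(\P_{P_0}\,\|\,\P_{P_{\ep}})=T\cdot\mathrm{KL}(\mathrm{Ber}(\tfrac12)\,\|\,\mathrm{Ber}(\tfrac12-\ep))=-\tfrac{T}{2}\log(1-4\ep^2)\le 4T\ep^2$ for $\ep$ small; plugging in $\ep=C\,T^{-1/2+\eta}$ with $C=10^{-10}$ gives $4C^2T^{2\eta}\le T^{2\eta}$ since $4C^2\le 1$. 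Combining,
\[
\max\!\Big\{\E_{P_0}\big[\textstyle\sum_{t=1}^N\phi_t\big],\,\E_{P_{\ep}}\big[\textstyle\sum_{t=1}^N(1-\phi_t)\big]\Big\}\ \ge\ \tfrac{T}{4}\cdot\tfrac12\exp(-T^{2\eta})\ =\ \tfrac{T}{8}\exp(-T^{2\eta}),
\]
which is the assertion.

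The main obstacle is precisely this information-theoretic step: because $\mathrm{KL}=\Theta(T^{2\eta})$ can fail to be small, one must resist Pinsker and use the exponential form of the inequality, and the clean constants (the $\tfrac18$ in front and exactly $T^{2\eta}$ in the exponent) come out only because $C$ is taken so small that $4C^2\le1$. A secondary technical point is that the index set of the ``$AC\vee BD$ rounds'' is itself feedback-dependent; I would handle this by reindexing those rounds as $1,\dots,N$ and regarding the algorithm as emitting a length-$N$ causal binary string against the i.i.d.\ feedback stream, so that $Y$ and $N-Y$ are honest functions of a prefix of that stream and the change of measure above applies verbatim.
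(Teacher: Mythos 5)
Your proof is correct and takes essentially the same route as the paper's: bound the max by the average, apply the exponential (Bretagnolle--Huber-type) change-of-measure inequality to the distinguishing probability, and bound the KL divergence of the $T$ i.i.d.\ Bernoulli feedback bits by $O(C^2T^{2\eta})\leq T^{2\eta}$ using the smallness of $C$, yielding the $\frac{T}{8}\exp(-T^{2\eta})$ bound. The only cosmetic differences are that you pass through total variation with $|\E_{P_0}[Y]-\E_{P_{\ep}}[Y]|\leq N\cdot\TV$ and compute $\mathrm{KL}(\mathrm{Ber}(\tfrac12)\,\|\,\mathrm{Ber}(\tfrac12-\ep))\leq 4\ep^2$ directly, whereas the paper uses its test-form inequality (labeled as Fano) together with its $12\ep^2$ KL lemma; the constants close in the same way.
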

	\begin{proof}[Proof of \Cref{lemma:indistinguishability} ]
		In fact, we have:
		\begin{equation}
		\label{equ:lecam_application}
		\begin{aligned}
		\max\{\E_{P_0}[\sum_{t=1}^N\phi_t], \E_{P_{\ep}}[\sum_{t=1}^N(1-\phi_t)]\}&\geq\frac{\E_{P_0}[\sum_{t=1}^N\phi_t] + \E_{P_{\ep}}[\sum_{t=1}^N(1-\phi_t)]}2\\
		&=N\cdot\frac{\P_{P_0}[\phi_t == 1] + \P_{P_{\ep}}[\phi_t == 0]}2\\
		&\geq\frac T 4 \cdot(\P_{P_0}[\phi_t == 1] + \P_{P_{\ep}}[\phi_t == 0])\\
		&\geq\frac T 8 \cdot\exp(-N\cdot KL(\P_{P_0}||\P_{P_{\ep}}))\\
		&\geq\frac T 8 \cdot\exp(-N\cdot KL(Ber(0.5)||Ber(0.5-\ep)))\\
		&\geq\frac T 8 \cdot\exp(-N\cdot 12\ep^2)\\
		&=\frac T 8 \cdot\exp(-N\cdot 12(C\cdot T^{-\frac12 + \eta})^2)\\
		&\geq\frac T 8 \cdot\exp(-12C^2T^{2\eta})\\
		&\geq\frac T 8 \cdot\exp(-T^{2\eta}).
		\end{aligned}
		\end{equation}
		Here the first line is for $\max\geq\text{average}$, the second is by definition of $\phi_t$, the third line is for $N\geq\frac T 2$, the fourth line is from Fano's Inequality that $\P_0[\phi==1] + \P_1[\phi==0]\geq\frac12\cdot\exp\{-N\cdot KL(\P_0||\P_1)\}$ for any distributions $\P_0$ and $\P_1$, the fifth line is by definition of $P_0$ and $P_{\ep}$ that they are only different in the customers' feedback satisfying $Ber(0.5)$ and $\Pr=0.5-\ep$ for some actions, respectively, the sixth line is from \Cref{lemma_kl_ber}, the seventh line is for $\ep=C\cdot T^{-\frac12+\eta}$, the eighth line is for $N\leq T$, and the last line is for $12C^2\leq 1$.
	\end{proof}
	With the two lemma above, we know that
	\begin{itemize}
		\item For any algorithm $\phi$, we either run at least $\frac T 8\cdot\exp(-T^{2\eta})$ rounds with some $\pi_t\in AC$ when the problem setting is $P_{\ep}$, or run at least $\frac T 8 \cdot\exp(-T^{2\eta})$ rounds with some $\pi_t\in BD$ when the problem setting is $P_0$, according to \Cref{lemma:indistinguishability}.
		\item For each round we mismatching the problem setting, we will suffer a $\min\{C_{ac}, C_{bd}\}\cdot T^{-\frac12+\eta}$ unfairness, according to \Cref{lemma:distance_of_disjoint_set}.
	\end{itemize}
	Given these two facts, denote $C_{\min}:=\frac18\min\{C_{ac}, C_{bd}\}$ and we at least have $C_{\min}T\cdot\exp(-T^{2\eta})\cdot T^{-\frac12+\eta}$ unfairness. For $x=\frac{C_x}{\log T}$ with any constant $C_x$, we let $\eta = \frac{3x}{2}=\frac{3C_x}{2\log T}$ and therefore $\eta>x$. As a result, we have
	\begin{equation}
	\label{equ:derive_c_u}
	\begin{aligned}
	C_{\min}T\cdot\exp(-T^{2\eta})\cdot T^{-\frac12+\eta}&=C_{\min}\exp(-T^{2\eta}+\eta\log T)T^{\frac12}\\
	&=C_{\min}\exp(-T^{2\cdot\frac{C_x}{\log T}} + \frac{3C_x}{2\log T}\cdot\log T)T^{\frac12}\\
	&=C_{\min}\exp(-\exp(2\cdot\frac{C_x}{\log T}\cdot\log T) + \frac {3C_x}{2})T^{\frac12}\\
	&=C_{\min}\exp(-\exp(2C_x)+\frac{3C_x}2)T^{\frac12}
	\end{aligned}
	\end{equation}
	Let $C_u = \frac{C_{\min}\exp(-\exp(2C_x)+\frac{3C_x}2)}2$, and then the result of the equation above contradicts with the suppose that the unfairness does not exceed $C_u\cdot T^{\frac12}$. Therefore, we have proved the theorem.

\end{proof}
\section{More Discussion}
\label{appendix:more_discussion}
Here we discuss more on the problem settings we assumed, the techniques we used and the social impacts our algorithm might have, as a complement to \Cref{sec:discussion}.
\subsection{Potential Generalizations of Current Problem Setting.} Currently we make a few technical assumptions that qualify the applications of our algorithm. In fact, these assumptions are mild and can be released by some tricks: On the one hand, we can always meet the requirement of \Cref{assumption:f_min} by reducing $v_d$. By running a binary-search algorithm for the highest acceptable price (with constant acceptance probability), we can find the feasible $v_d$ within $O(\log T\log T)$ rounds (where $\log T$ for binary search and another $\log T$ for the concentration of a constant-expectation random variable, as we did in estimating $F_{\min}$). Since $O(\log T\log T)$ is much smaller than $O(\sqrt{T})$ as the optimal regret and unfairness, this would not harm the regret and unfairness substantially. On the other hand, we assume the prices to be chosen from a fixed and finite price set $\mathbf{V}$, which not only restricted our action but might lead to suboptimality from the perspective of a larger scope. In fact, if we allow the prices to be selected in the whole $[0,1]$ range, a pricing policy can be a tuple of two continuous distributions over $[0,1]$. To solve this problem, we may parametrize the distribution and learn the best parameters. We may also discretize the price space into small grids, i.e. prices are $\mathbf{V}=\{\gamma, 2\gamma, \ldots, (d-1)\gamma, d\gamma=1\}$, where $\gamma=T^{-\alpha}$ with some constant $\alpha$ and d= $T^{\alpha}$ as a consequence. It is intrinsically a specific way of parametrization. According to the ``half-Lipschitz'' nature of pricing problem as well as our \Cref{lemma:small_relaxation_gain} along with the Lipschiness of $S(\pi; F_1, F_2)$, we know that the per-round discretization error would be upper bounded by $O(T^{-\alpha})$. Let the cumulative discretization error $O(T^{\frac1-\alpha})$ balances the cumulative regret (or substantive unfairness), i.e., $O(d^{\frac32}\sqrt{T})=0(T^{\frac12+\frac{3\alpha}2})$, we can achieve an upper bound on both the regret  at $O(T^{\frac45})$ by letting $\alpha=T^{\frac15}$. However, this is not optimal as we only match the upper and lower bounds w.r.t. $T$ but not to $d$. Therefore, it would also be an interesting problem to see the minimax regret/unfairness dependent w.r.t. $d$.

Besides of the assumptions we have made, there are other notions regarding our problem setup that can be generalized. Firstly, we may generalize our problem setting from two groups to multiple $G\geq3$ groups. Again, the feasible set is not empty as we can always propose the same fixed price to all groups. However, there is not a directly generalization of the fairness definition, which we defined as the difference of the expectation of certain amount between two groups. We might defined it as ``pairwise unfairness'' by comparing the same difference among each pair of groups and adding them up, but this is not rational: Consider the case when the expected proposed/accepted prices in $(G-1)$ groups are very high and the last one is very low, and compare this case with another case when the expected proposed/accepted prices in $1$ groups are very high and the other $(G-1)$ groups has a very low expected prices. The unfairness in these two cases should be definitely different, as the first seems more acceptable (i.e., being kind to only the minority versus being kind to only the majority). However, their ``pairwise unfairness'' are exactly the same in these two cases. Therefore, a better notion of procedural/substantive unfairness should be established for multiple $G\geq3$ groups.

Secondly, we may generalize the modeling on customers from i.i.d. to strategic. For example, what if a customer tries multiple times until getting the lowest price of the distribution for this group. This is an adaptive adversary and therefore very hard to deal with even in the simplest decision-making process such as bandits. 

Thirdly, we may also include more fairness concern. Currently we are considering the two types of fairness, but we define the cumulative fairness based as the summation of expected per-round unfairness. This definition does not take into consideration the changing of policies. For example, if we propose a fair policy at each round, but the policies over time changes drastically, then it is hard for the customers to feel or experience such a fairness. In our algorithm design of FPA, we always play the same policy for at least $\frac{\tau_k}{2d}=\Omega(\sqrt{T})$ rounds as a batch until the policy changes. This is a long enough time period for customers to experience fairness since at least a $Omega(\sqrt{T})$ number of customers from both groups would come and buy items under the same policy according to the Law of Large Numbers. However, this would still cause a feeling of unfairness for the two customers who are arriving almost simultaneously but the policy is just changed after the first customer buy or decide not to buy. Therefore, there exists necessity for us to consider the time/individual fairness under this online pricing problem scheme.

\subsection{Potential Generalization of Techniques}
Here we discuss a little bit more on the probable extension of the techniques we developed in our algorithm design and analysis.

{\paragraph{From Two Groups to Multi Groups}
	Our problem setting assumes that there are two groups of customers in total. We choose to study a two-group setting to simplify the presentation. In practice, however, it is very common that customers are coming from a number of groups with different valuations even on the same product. In fact, we believe it straightforward to extend our methodologies and results to multi-group settings, as long as we determine a metric of multi-group unfairness. For instance, if we choose to define the multi-group unfairness as the summation of pairwise unfairness of all pairs of groups, we may adjust our algorithm by lengthening each epoch by $G/2$ times and keeping everything the same as in this paper. In this way, the upper regret bound would be $\tilde{O}(G^2\sqrt{T}d^{2/3})$, which is $O(G^2)$ times as we have shown in this paper. Therefore, it is still optimal w.r.t. $T$ up to iterative-log factors.
}

\paragraph{A Good-and-exploratory Policy Set}
Our algorithm FPA maintains and updates a ``good-and-exploratory'' policy $A_k$ in each epoch. Each policy in this set performs close to the optimal policy in both regret and unfairness reductions. A similar idea in reinforcement learning related research exists in \citet{qiao2022sample} where they select policies that visit each (horizon, state, action) tuple most sufficiently while ensuring that the policy is low in regret. In fact, if we imagine an ``exploratory'' policy as the one that would elevate the most ``information'' (i.e., that would reduce the most uncertainty), then the ``good-and-exploratory'' policy-selection process is equivalent to an ``Upper Confidence Bound'' method \citet{lai1985asymptotically} where we always pull the arm with the highest upper confidence bound in a multi-armed bandit. The only difference is that: for traditional exploration-and-exploitation balancing algorithm, we only need to improve our estimation on the parameters of these optimal or near-optimal policies. However, in our problem setting, we have to guarantee a uniform error bound, i.e., we have to improve our estimation on all parameters instead of only those optimal-related ones. This is because that we have to improve the estimation on constraints as well as on the revenue function. In our algorithm design, we handle this problem by keeping eliminating a feasible policy set, which in turn releases the algorithm from estimating those unnecessary parameters. In a nutshell, our methods can be applied broadly in online-decision-making problems.
\paragraph{Unfairness Lower Bound Proof on Optimal Algorithms}
The main idea of our proof of the $\Omega(\sqrt{T})$ unfairness lower bound on any algorithm with $O(\sqrt{T})$ optimal regret is to construct a trade-off on unfairness and regret between two adjacent problem settings. We first bound the ``bad policies'' away from each problem setting, to avoid those policies that are super fair in both setting but performs poor in both setting as well. Then we show that policies with small-enough regret and unfairness on one setting should suffer a large regret on the other. Finally we end the proof by showing that we will definitely make $\tilde{\Theta}(T)$ times of mistakes in expectation, according to information theory. We believe that this scheme can be used in proving a variety of trading-off lower bounds.
\subsection{Social Impacts}
In this work, we develop methods to prompt the procedural and substantive fairness of customers from all groups. We believe that our techniques and results would enhance the unity of people with different gender, race, age, cultural backgrounds, and so on. However, it is definitely correct that we have to \emph{treat differently} to different group of people. In order to ensure the fairness from customers' perspective, the seller is required to behave unfairly. Of course we could partly get rid of this issue by leaving the generating process of a random price to the nature, i.e., we let each customer draw a coupon from a box randomly. However, this only means that the seller's pricing process is fair but not leads to a fair result, as customers' coupon varies a lot from person to person. This turn out to be the exact issue named as ``pricing and price fairness'' proposed in \citet{chapuis2012price} regarding the fairness of a seller's behavior. Maybe in the future we could develop an algorithm that is not only profitable but also ensures the fairness from both the seller and the customers' perspective, which could be a truly ``doubly fair'' dynamic pricing. 

\end{appendices}
\bibliographystyle{abbrvnat}  

\bibliography{ref_log}

\end{document}